\definecolor{light-gray}{gray}{0.85}
\newcommand{\defeq}{\mathrel{\mathop:}=}
\newcommand{\argmax}{\mathop{\rm argmax}}
\newcommand{\st}{\mathrm{s.t.~}}
\newcommand{\E}{\mathbb{E}}
\newcommand{\D}{\mathbb{D}}
\renewcommand{\P}{\mathbb{P}}
\newcommand{\la}{\langle}
\newcommand{\ra}{\rangle}
\newcommand{\cO}{\mathcal{O}}
\newcommand{\tlO}{\mathcal{\tilde{O}}}
\newcommand{\N}{\mathbb{N}}
\newcommand{\cF}{\mathcal{F}}
\newcommand{\cS}{\mathcal{S}}
\newcommand{\cA}{\mathcal{A}}
\newcommand{\cB}{\mathcal{B}}
\newenvironment{proof-sketch}{\noindent{\bf Proof Sketch}
  \hspace*{1em}}{\qed\bigskip\\}
\newenvironment{proof-idea}{\noindent{\bf Proof Idea}
  \hspace*{1em}}{\qed\bigskip\\}
\newenvironment{proof-of-lemma}[1][{}]{\noindent{\bf Proof of Lemma {#1}}
  \hspace*{1em}}{\qed\bigskip\\}
\newenvironment{proof-of-proposition}[1][{}]{\noindent{\bf
    Proof of Proposition {#1}}
  \hspace*{1em}}{\qed\bigskip\\}
\newenvironment{proof-of-theorem}[1][{}]{\noindent{\bf Proof of Theorem {#1}}
  \hspace*{1em}}{\qed\bigskip\\}
\newenvironment{inner-proof}{\noindent{\bf Proof}\hspace{1em}}{
  $\bigtriangledown$\medskip\\}
\newenvironment{proof-attempt}{\noindent{\bf Proof Attempt}
  \hspace*{1em}}{\qed\bigskip\\}
\newtheorem{assumption}{Assumption}
\newtheorem{theorem}{Theorem}
\newtheorem{lemma}[theorem]{Lemma}
\newtheorem{corollary}[theorem]{Corollary}
\newtheorem{proposition}[theorem]{Proposition}
\theoremstyle{definition}
\newtheorem{definition}[theorem]{Definition}
\renewcommand{\st}{^{\text{st}}}
\renewcommand{\th}{^{\text{th}}}
\newcommand{\setto}{\leftarrow}
\newcommand{\low}[1]{\underline{#1}}
\newcommand{\MG}{{\rm MG}}
\newcommand{\AVGReg}{\xi}
\newcommand{\CUMReg}{\Xi}
\newcommand{\AVGSwapReg}{\xi_{\text{sw}}}
\newcommand{\CUMSwapReg}{\Xi_{\text{sw}}}
\newcommand{\AVGRegret}{\mathcal{R}}
\newcommand{\AVGSwapRegret}{\mathcal{R}_{\text{swap}}}
\newcommand{\wAVGRegret}{\tilde{\mathcal{R}}}
\newcommand{\wAVGSwapRegret}{\tilde{\mathcal{R}}_{\text{swap}}}
\title{\bf V-Learning---A Simple, Efficient, Decentralized Algorithm \\
for Multiagent RL}
\author{%
   Chi Jin \\
   Princeton University \\
   \texttt{chij@princeton.edu} \\
   \and
   Qinghua Liu \\
   Princeton University \\
   \texttt{qinghual@princeton.edu} \\
   \and
   Yuanhao Wang \\
   Princeton University \\
   \texttt{yuanhao@princeton.edu} \\
   \and
   Tiancheng Yu \\
   MIT \\
   \texttt{yutc@mit.edu} \\
}
\begin{document}
\maketitle


\begin{abstract}

A major challenge of multiagent reinforcement learning (MARL) is \emph{the curse of multiagents}, where the size  of the joint action space scales exponentially with the number of agents. 
This remains to be a bottleneck for designing efficient MARL algorithms even in a basic scenario with finitely many states and actions. This paper resolves this challenge for the model of episodic Markov games. We design a new class of fully decentralized algorithms---V-learning, which provably learns Nash equilibria (in the two-player zero-sum setting), correlated equilibria and coarse correlated equilibria (in the multiplayer general-sum setting) in a number of samples that only scales with $\max_{i\in[m]} A_i$, where $A_i$ is the number of actions for the $i\th$ player. This is in sharp contrast to the size of the joint action space which is $\prod_{i=1}^m A_i$.

V-learning (in its basic form) is a new class of single-agent RL algorithms that convert any adversarial bandit algorithm with suitable regret guarantees into a RL algorithm. Similar to the classical Q-learning algorithm, it performs incremental updates to the value functions. Different from Q-learning, it only maintains the estimates of V-values instead of Q-values. This key difference allows V-learning to achieve the claimed guarantees in the MARL setting by simply letting all agents run V-learning independently.

\end{abstract}

\section{Introduction}
\label{sec:introduction}

A wide range of modern artificial intelligence challenges can be cast as multi-agent reinforcement learning (MARL) problems, in which agents learn to make a sequence of decisions 
in the presence of other agents whose decisions will influence the outcome and can adapt to the strategies of the agents. 
Modern MARL systems have achieved significant success recently on a rich set of traditionally challenging tasks, including the game of GO \citep{silver2016mastering,silver2017mastering}, Poker \citep{brown2019superhuman}, real-time strategy games \citep{vinyals2019grandmaster, openaidota}, decentralized controls or multiagent robotics systems \citep{brambilla2013swarm}, autonomous driving \citep{shalev2016safe}, as well as complex social scenarios such as hide-and-seek~\citep{baker2020emergent}. While  single-agent RL has been the focus of recent intense theoretical study, MARL has been comparatively underexplored, which leaves several fundamental questions open even in the basic model of \emph{Markov games} \cite{shapley1953stochastic} with finitely many states and actions.

One such unique challenge of  MARL is \emph{the curse of multiagents}---let $A_i$ be the number of actions for the $i\th$ player, then the number of possible joint actions (as well as the number of parameters to specify a Markov game) scales with $\prod_{i=1}^m A_i$, which grows exponentially with the number of agents $m$.
This remains to be a bottleneck even for the best existing algorithms for learning Markov games. In fact, a majority of these algorithms adapt the classical single-agent algorithms, such as value iteration or Q-learning, into the multiagent setting \cite{NEURIPS2020_172ef5a9,liu2021sharp}, whose sample complexity scales at least linearly with respect to $\prod_{i=1}^m A_i$. This is prohibitively large in practice even for fairly small multiagent applications, say only ten agents are involved with ten actions available for each agent.

Another challenge of the MARL is to design \emph{decentralized} algorithms. While a centralized algorithm requires the existence of a centralized controller which gathers all information and jointly optimizes the policies of all agents, a decentralized algorithm allows each agent to only observe her own actions and rewards while optimizing her own policy independently. Decentralized algorithms are often preferred over centralized algorithms in practice since (1) decentralized algorithms are typically cleaner, easier to implement; (2) decentralized algorithms are more versatile as the individual learners are indifferent to the interaction and the number of other agents; and (3) they are also faster due to less communication required. 
While several provable decentralized MARL algorithms have been developed \citep[see, e.g.,][]{zhang2018fully,sayin2021decentralized,daskalakis2021independent}, they either have only asymptotic guarantees or work only under certain reachability assumptions (see Section \ref{sec:related}). The existing provably \emph{efficient} algorithms for general Markov games (without further assumptions) are exclusively centralized algorithms \citep{bai2020provable,xie2020learning,liu2021sharp}.

This motivates us to ask the following open question:
\begin{center}
\textbf{Can we design \emph{decentralized} MARL algorithms that \emph{break the curse of multiagents?}}
\end{center}

\begin{table*}[!t]
    \renewcommand{\arraystretch}{1.6} 
    \centering
    \caption{\label{table:summary_results} A summary of sample complexities for V-learning under different settings. Here $H$ is the length of each episode, $S$ is the number of states, $A = \max_{i\in[m]} A_i$ is the largest number of actions for each agent, $\epsilon$ is the error tolerance. An information theoretical lower bound for all objectives is $\Omega(H^3 S A /\epsilon^2)$ \cite{jin2018q,bai2020provable}. }
    \begin{tabular}{|c|c|c|}
      \hline
      \multirow{2}{*}{\textbf{Objective}} & \multicolumn{2}{c|}{\textbf{Multiplayer General-sum}}  \\ \cline{2-3}
      & \textbf{Two-player Zero-sum} & - \\ \hline\hline
      Nash Equilibria & \cellcolor{light-gray} $\tlO(H^5 S A /\epsilon^2)$  & PPAD-complete\\\hline
      Coarse Correlated Equilibria  &  \multicolumn{2}{c|}{ \cellcolor{light-gray} $\tlO(H^5 S A /\epsilon^2)$}  \\ \hline
      Correlated Equilibria &  \multicolumn{2}{c|}{ \cellcolor{light-gray}$\tlO(H^5 S A^2 /\epsilon^2)$} \\\hline
    \end{tabular}
\end{table*}

This paper addresses both challenges mentioned above, and provide the first positive answer to this question in the basic setting of tabular episodic Markov games. We propose a new class of single-agent RL algorithms---V-learning, which converts any adversarial bandit algorithm with suitable regret guarantees into a RL algorithm. Similar to the classical Q-learning algorithm, V-learning also performs incremental updates to the values. Different from Q-learning, V-learning only maintains the V-value functions instead of the Q-value functions. We remark that the number of parameters of Q-value functions in MARL is $\cO(S\prod_{i=1}^m A_i)$, where $S$ is the number of states, while the number of parameters of V-value functions is only $\cO(S)$. This key difference allows V-learning to be readily extended to the MARL setting by simply letting all agents run V-learning independently, which gives a fully \emph{decentralized} algorithm.

We consider the standard learning objectives in the game theory---Nash equilibrium (NE), correlated equilibrium (CE) and coarse correlated equilibrium (CCE). Except for the task of finding a NE in multiplayer general-sum games which is PPAD-complete even for matrix games, we prove that V-learning finds a NE for two-player zero-sum Markov games within $\tlO(H^5 S A /\epsilon^2)$ episodes, where $H$ is the length of each episode, $A = \max_{i\in[m]} A_i$ is the maximum number of actions of all agents, and $\epsilon$ is the error tolerance for the objective. We further prove that for multiplayer general-sum Markov games, V-learning coupled with suitable adversarial bandit algorithms is capable of finding a CCE within $\tlO(H^5 S A /\epsilon^2)$ episodes, and a CE within $\tlO(H^5 S A^2 /\epsilon^2)$ episodes. All sample complexities mentioned above do not grow with the number of the agents, which thus \emph{break the curse of multiagents} (See Table \ref{table:summary_results} for a summary).

\subsection{Related work}
\label{sec:related}

In this section, we focus our attention on theoretical results for the tabular setting, where the numbers of states and actions are finite. We acknowledge that there has been much recent work in RL for continuous state spaces \cite[see, e.g.,][]{jiang2017contextual,jin2020provably,zanette2020learning,jin2021bellman,xie2020learning,jin2021power}, but this setting is beyond our scope. 

\paragraph{Markov games.} Markov Game (MG), also known as stochastic game \citep{shapley1953stochastic}, is a popular model in multi-agent RL \citep{littman1994markov}. Early works have mainly focused on finding Nash equilibria of MGs under strong assumptions, such as known transition and reward \citep{littman2001friend,hu2003nash,hansen2013strategy,wei2020linear},
or certain reachability conditions \citep{wei2017online, wei2021last} (e.g., having access to simulators \citep{jia2019feature,sidford2020solving,zhang2020model}) that alleviate the challenge in exploration.

A recent line of works provide non-asymptotic guarantees for learning two-player zero-sum tabular MGs without
further structural assumptions.
\citet{bai2020provable} and \citet{xie2020learning} develop the first provably-efficient learning algorithms in MGs based on optimistic value iteration. \citet{liu2021sharp} improves upon these works and achieve best-known sample complexity for finding an $\epsilon$-Nash equilibrium---$\cO(H^3 SA_1A_2/\epsilon^2)$ episodes.

For multiplayer general-sum tabular MGs, \citet{liu2021sharp} is the only existing work that provides non-asymptotic guarantees in the exploration setting. It proposes centralized model-based algorithms based on value-iteration, and shows that Nash equilibria (although computationally inefficient), CCE and CE can be all learned within $\cO(H^4 S^2 \prod_{j=1}^m A_j/\epsilon^2)$ episodes. Note this result suffers from the curse of multiagents.

V-learning---initially coupled with the FTRL algorithm as adversarial bandit subroutine---is firstly proposed in the conference version of this paper \cite{NEURIPS2020_172ef5a9}, for finding Nash equilibria in the two-player zero-sum setting.
During the preparation of this draft, we note two very recent independent works \cite{song2021can,mao2021provably}, whose results partially overlap with the results of this paper in the multiplayer general-sum setting. In particular, \citet{mao2021provably} use V-learning with stablized online mirror descent as adversarial bandit subroutine, and learn $\epsilon$-CCE in $\cO(H^6 SA/\epsilon^2)$ episodes, where $A = \max_{j\in[m]}A_j$. This is one $H$ factor larger than what is required in Theorem \ref{thm:CCE_main_result} of this paper. \citet{song2021can} considers similar V-learning style algorithms for learning both $\epsilon$-CCE and $\epsilon$-CE. For the latter objective, they require $\cO(H^6 SA^2/\epsilon^2)$ episodes which is again one $H$ factor larger than what is required in Theorem \ref{thm:CE_main_result} of this paper. \citet{song2021can} also considers Markov potential games, which is beyond the scope of this paper. We remark that both parallel works have not presented V-learning as a generic class of algorithms which can be coupled with any adversarial bandit algorithms with suitable regret guarantees in a black-box fashion.

\paragraph{Strategic games.} Strategic game is one of the most basic game forms studied in the game theory literature \cite{osborne1994course}. It can be viewed as Markov games \emph{without} state and transition. The fully decentralized algorithm that breaks the curse of multiagents is known in the setting of strategic games. By independently running no-regret (or no-swap-regret) algorithm for all agents, one can find Nash Equilibria (in the two-player zero-sum setting), correlated equilibria and coarse correlated equilibria (in the multiplayer general-sum setting) in a number of samples that only scales with $\max_{i\in[m]} A_i$ \cite{cesa2006prediction,hart2000simple,blum2007external}. However, such successes do not directly extend to the Markov games due to the additional temporal structures involving both states and transition. In particular, there is no computationally efficient no-regret algorithm for Markov games \cite{radanovic2019learning, NEURIPS2020_172ef5a9}.

\paragraph{Extensive-form games.}
There is another long line of research on MARL based on the model of extensive-form games (EFG) \cite[see, e.g.,][]{koller1992complexity,gilpin2006finding,zinkevich2007regret,brown2018superhuman,brown2019superhuman,celli2020no}. EFGs can be viewed as special cases of Markov games where any state at the $h\th$ step can be  reached from only one state at the $(h-1)\th$ step (due to tree structure of the game). Therefore, results on learning EFGs do not directly imply results for learning MGs.

\paragraph{Decentralized MARL} There is a long line of \emph{empirical} works on decentralized MARL \citep[see, e.g.,][]{lowe2017multi,iqbal2019actor,sunehag2017value,rashid2018qmix,son2019qtran}. A majority of these works focus on the cooperative setting. They additionally attack the challenge where each agent can only observe a part of the underlying state, which is beyond the scope of this paper. For theoretical results, \citet{zhang2018fully} consider the cooperative setting while \citet{sayin2021decentralized} study the two-player zero-sum Markov games. Both develop decentralized MARL algorithms but provide only asymptotic guarantees. \citet{daskalakis2021independent} analyze the convergence rate of independent policy gradient method in episodic two-player zero-sum MGs. Their result requires the additional reachability assumptions (concentrability) which alleviates the difficulty of exploration.

\paragraph{Single-agent RL}
There is a rich literature on reinforcement learning in MDPs \citep[see e.g.][]{jaksch2010near, osband2014generalization, azar2017minimax, dann2017unifying, strehl2006pac, jin2018q,jin2021bellman}. MDPs are special cases of Markov games, where 
only a single agent interacts with a stochastic environment. For the tabular episodic setting with nonstationary dynamics and no simulators, the best sample complexity achieved by existing model-based and model-free algorithms are $\tilde{\mathcal{O}}(H^3SA/\epsilon^2)$ (achieved by value iteration \cite{azar2017minimax}) and $\tilde{\mathcal{O}}(H^4SA/\epsilon^2)$ (achieved by Q-learning \cite{jin2018q}), respectively, where $S$ is the number of states, $A$ is the number of actions, $H$ is the length of each episode. Both of them (nearly) match the lower bound $\Omega(H^3SA/\epsilon^2)$~\cite{jaksch2010near, osband2016lower, jin2018q}.

\section{Preliminaries} \label{sec:prelim}

We consider the model of Markov Games (MG) \cite{shapley1953stochastic} (also known as stochastic games in the literature) in its most generic---multiplayer general-sum form. Formally, we denote an tabular episodic MG with $m$ players by a tuple $\MG(H, \cS, \{\cA_i\}_{i=1}^m, \P, \{r_i\}_{i=1}^m)$, where $H$, $\cS$ denote the length of each episode and the state space with $|\cS| = S$.
 $\cA_i$ denotes the action space for the $i\th$ player and $|\cA_i| = A_i$. We let $\bm{a}:=(a_{1},\cdots,a_{m})$ denote the (tuple of) joint actions by all $m$ players, and $\cA = \cA_1 \times \ldots \times \cA_m$. $\P = \{\P_h\}_{h\in[H]}$ is a collection of transition matrices, so that $\P_h ( \cdot | s, \bm{a}) $ gives the distribution of the next state if actions $\bm{a}$ are taken at state $s$ at step $h$, and $r_i = \{r_{i,h}\}_{h\in[H]}$ is a collection of reward functions for the $i^{\text{th}}$ player, so that $r_{i,h}(s, \bm{a}) \in[0, 1]$ gives the deterministic reward received by the $i^{\text{th}}$ player if actions $\bm{a}$ are taken at state $s$ at step $h$. 
 \footnote{Our results directly generalize to random reward functions, since learning transitions is more difficult than learning rewards.} We remark that since the relation among the rewards of different agents can be arbitrary, this model of MGs incorporates both cooperation and competition.

In each episode, we start with a \emph{fixed initial state} $s_1$.
\footnote{While we assume a fixed initial state for notational simplicity, our results readily extend to the setting where the initial state is sampled from a fixed initial distribution.} 
At each step $h \in [H]$, each player $i$ observes state $s_h \in \cS$, picks action $a_{i,h}\in\cA_i$ simultaneously, observes the actions played by other players,\footnote{We assume the knowledge of other players' actions for the convenience of later definitions. We remark that the V-learning algorithm introduced in this paper does \emph{not} require knowing the actions of other players.} and receives her own reward $r_{i,h}(s_h, \bm{a}_h)$. Then the environment transitions to the next state
$s_{h+1}\sim\P_h(\cdot | s_h, \bm{a}_h)$. The episode ends when $s_{H+1}$ is reached.


\paragraph{Policy, value function}
A (\emph{random}) \emph{policy} $\pi_i$ of the $i^{\rm th}$ player is a set of $H$ maps $\pi_i \defeq \big\{ \pi_{i, h}: \Omega \times (\cS \times \cA)^{h-1}\times \cS \rightarrow \cA_i \big\}_{h\in [H]}$, where $\pi_{i, h}$ maps a random sample $\omega$ from a probability space $\Omega$ and a history of length $h$---say $\tau_h := (s_1, \bm{a}_1, \cdots, s_h)$, to an action in $\cA_i$. To execute policy $\pi_i$, we first draw a random sample $\omega$ at the beginning of the episode. Then, at each step $h$, the $i\th$ player simply takes action $\pi_{i,h}(\omega, \tau_h)$. We note here $\omega$ is shared among all steps $h \in[H]$. $\omega$ encodes both the correlation among steps and the individual randomness of each step. We further say a policy $\pi_i$ is \emph{deterministic} if $\pi_{i,h}(\omega, \tau_h) = \pi_{i,h}(\tau_h)$ which is independent of the choice of $\omega$. 

An important subclass of policy is \emph{Markov policy}, which can be defined as $\pi_i \defeq \big\{ \pi_{i, h}: \Omega \times \cS \rightarrow \cA_i \big\}_{h\in [H]}$. Instead of depending on the entire history, a Markov policy takes actions only based on the current state. Furthermore, the randomness in  each step of Markov policy is independent. Therefore, when it is clear from the context, we write Markov policy as $\pi_i \defeq \big\{ \pi_{i, h}: \cS \rightarrow \Delta_{\cA_i} \big\}_{h\in [H]}$, where $ \Delta_{\cA_i}$ denotes the simplex over $\cA_i$. We also use notation $\pi_{i,h}(a|s)$ to denote the probability of the $i\th$ agent taking action $a$  at state $s$ at step $h$.


A joint (potentially correlated) policy is a set of policies $\{\pi_i\}_{i=1}^m$, where the same random sample $\omega$ is shared among all agents, which we denote as $\pi = \pi_1 \odot \pi_2 \odot \ldots \odot \pi_m$. We also denote $\pi_{-i} = \pi_1 \odot \ldots \pi_{i-1} \odot \pi_{i+1} \odot \ldots \odot \pi_m$ to be the joint policy excluding the $i\th$ player. A special case of joint policy is the \emph{product policy} where the random sample has special form $\omega = (\omega_1, \ldots, \omega_m)$, and for any $i\in [m]$, $\pi_i$ only uses the randomness in $\omega_i$, which is independent of remaining $\{\omega_j\}_{j\neq i}$, which we denote as $\pi = \pi_1 \times \pi_2 \times \ldots \times \pi_m$.


We define the value function $V^\pi_{i, 1}(s_1)$ as the expected cumulative reward that the $i\th$ player will receive if the game starts at initial state $s_1$ at the $1\st$ step and all players follow joint policy $\pi$:
\begin{equation} \label{eq:value_general_policy}
\textstyle V^{\pi}_{i, 1}(s_1) \defeq \E_{\pi}\left[\left.\sum_{h =
        1}^H r_{i,h}(s_{h}, \bm{a}_{h}) \right| s_1 \right].
\end{equation}
where the expectation is taken over the randomness in transition and the random sample $\omega$ in policy $\pi$.


\paragraph{Best response and strategy modification}
For any strategy $\pi_{-i}$, the \emph{best response} of the $i\th$ player is defined as a policy of the $i\th$ player which is independent of the randomness in $\pi_{-i}$, and achieves the highest value for herself conditioned on all other players deploying $\pi_{-i}$. In symbol, the best response is the maximizer of $\max_{\pi'_i} V_{i, 1}^{\pi'_i \times \pi_{-i}}(s_1)$ whose value we also denote as $V_{i, 1}^{\dagger, \pi_{-i}}(s_1)$ for simplicity. By its definition, we know the best response can always be achieved at \emph{deterministic} policies.

A \emph{strategy modification} $\phi_i$ for the $i\th$ player is a set of maps $\phi_i:=\{\phi_{i,h}: (\cS \times \cA)^{h-1}\times \cS \times \cA_i \rightarrow \cA_i\}$,\footnote{Here, we only introduce the deterministic strategy modification for simplicity of notation, which is sufficient for discussion in the context of this paper. The random strategy modification can also be defined by introducing randomness in $\phi_i$ which is independent of randomness in $\pi_i$ and $\pi_{-i}$. It can be shown that the best strategy modification can always be deterministic.} where $\phi_{i, h}$ can depend on the history $\tau_h$ and maps actions in $\cA_i$ to different actions in $\cA_i$. For any policy of the $i\th$ player $\pi_i$, the modified policy (denoted as $\phi_i \diamond \pi_i$) changes the action $\pi_{i,h}(\omega, \tau_h)$ under random sample $\omega$ and history $\tau_h$ to $\phi_i(\tau_h, \pi_{i,h}(\omega, \tau_h))$. For any joint policy $\pi$, we define the best strategy modification of the $i\th$ player as the maximizer of $\max_{\phi_i} V_{i, 1}^{(\phi_i \diamond \pi_i) \odot \pi_{-i}}(s_1)$. 

Different from the best response, which is completely independent of the randomness in $\pi_{-i}$, the best strategy modification changes the policy of the $i\th$ player while still utilizing the shared randomness among $\pi_i$ and $\pi_{-i}$. Therefore, the best strategy modification is more powerful than the best response: formally one can show that 
$\max_{\phi_i} V_{i, 1}^{(\phi_i \diamond \pi_i) \odot \pi_{-i}}(s_1) \ge \max_{\pi'_i} V_{i, 1}^{\pi'_i \times \pi_{-i}}(s_1)$ for any policy $\pi$.

\subsection{Learning objectives}

A special case of Markov game is Markov Decision Process (MDP). One can show there always exists an optimal policy $\pi^\star = \argmax_{\pi} V^\pi_1(s_1)$. Denote the value of the optimal policy as $V^\star$. The objective of learning MDPs is to find an $\epsilon$-optimal policy $\pi$, which satisfies $V^\star_1(s_1) - V^\pi_1(s_1) \le \epsilon$.


For Markov games, there are three common learning objectives in the game theory literature---Nash Equilibrium, Correlated Equilibrium (CE) and Coarse Correlated Equilibrium (CCE).

First, a Nash equilibrium is defined as a product policy where no player can increase her value by changing only her own policy. Formally,
\begin{definition}[Nash Equilibrium] \label{def:NE}
A \emph{product} policy $\pi$ is a \textbf{ Nash equilibrium} if $\max_{i\in[m]}{( V_{i,1}^{\dag, \pi_{-i}}-V_{i,1}^{\pi } )}( s_{1} ) = 0$. A \emph{product} policy $\pi$ is an $\epsilon$-approximate Nash equilibrium if $\max_{i\in[m]}{( V_{i,1}^{\dag, \pi_{-i}}-V_{i,1}^{\pi } )}( s_{1} ) \le \epsilon$.
\end{definition}

We remark that, except for the special case of two-player zero-sum Markov games where reward $r_{2, h} = -r_{1,h}$\footnote{Technically, to ensure $r_{2,h} \in [0, 1]$, we choose $r_{2, h} = 1-r_{1,h}$. We note that adding a constant to the reward function has no effect on the equilibria, which is our learning objective.}
for any $h \in [H]$, the Nash equilibrium in general has been proved PPAD-hard to compute \cite{daskalakis2013complexity}. Therefore, we only present results for finding Nash equilibria in  two-player zero-sum MGs in this paper.

Second, a coarse correlated equilibrium is defined as a joint (potentially correlated) policy where no player can increase her value by playing a different independent strategy. In symbol,
\begin{definition}[Coarse Correlated Equilibrium] \label{def:CCE}
A joint policy $\pi$ is a \textbf{CCE} if $\max_{i\in[m]}{( V_{i,1}^{\dag, \pi_{-i}}-V_{i,1}^{\pi } )}( s_{1} ) = 0$. A joint policy $\pi$ is a $\epsilon$-approximate CCE if $\max_{i\in[m]}{( V_{i,1}^{\dag, \pi_{-i}}-V_{i,1}^{\pi } )}( s_{1} ) \le \epsilon$.
\end{definition}
The only difference between Definition \ref{def:NE} and Definition \ref{def:CCE} is that Nash equilibrium requires the policy $\pi$ to be a product policy while CCE does not. Thus, it is clear that CCE is a relaxed notion of Nash equilibrium, and a Nash equilibrium is always a CCE.

Finally, a correlated equilibrium is defined as a joint (potentially correlated) policy where no player can increase her value by using a strategy modification. In symbol,
\begin{definition}[Correlated Equilibrium] \label{def:CE}
A joint policy $\pi$ is a \textbf{CE} if $\max_{i\in[m]} \max_{\phi_i} {( V_{i, 1}^{(\phi_i \diamond \pi_i) \odot \pi_{-i}}-V_{i,1}^{\pi } )}( s_{1} ) = 0$. A joint policy $\pi$ is a $\epsilon$-approximate CE if $\max_{i\in[m]} \max_{\phi_i} {( V_{i, 1}^{(\phi_i \diamond \pi_i) \odot \pi_{-i}}-V_{i,1}^{\pi } )}( s_{1} ) \le \epsilon$.
\end{definition}
In Markov games, we also have that a Nash equilibrium is a CE, and a CE is a CCE (see Proposition \ref{prop:nash-ce-cce} in Appendix \ref{app:notation} for more details).

\section{V-Learning Algorithm}
\label{sec:single-agent}

In this section, we introduce V-learning algorithm as a new class of single-agent RL algorithms, which converts any adversarial
bandit algorithm with suitable regret guarantees into a RL algorithm. We also present its theoretical guarantees for finding a nearly optimal policy in the single-agent setting.

\subsection{Training algorithm}

\begin{algorithm}[t]
   \caption{\textsc{V-learning}}
   \label{algorithm:V}
   \begin{algorithmic}[1]
      \STATE {\bfseries Initialize:} for any $(s, a, h)$,
      $V_{h}(s)\setto H+1-h$, $N_{h}(s)\setto 0$, $\pi_h(a|s) \setto 1/A$.
      \FOR{episode $k=1,\dots,K$}
      \STATE receive $s_1$.
      \FOR{step $h=1,\dots, H$}
      \STATE take action $a_h \sim \pi_h(\cdot |s_h)$, observe reward $r_h$ and next state $s_{h+1}$.
      \label{line:action}
      \STATE $t=N_{h}(s_h)\setto N_{h}(s_h) + 1$.
   \label{line:update_counter}
      \STATE $\tilde{V}_h(s_h) \setto (1-\alpha_t)\tilde{V}_h(s_h)+ \alpha_t(r_h+V_{h+1}(s_{h+1})+\beta_t)$.
      \label{line:update_V}
            \STATE $V_h(s_h) \setto \min\{H+1-h, \tilde{V}_h(s_h)\}$.
      \label{line:monotone}
      \STATE \label{line:bandits} $\pi_h(\cdot | s_h) \leftarrow \textsc{Adv\_Bandit\_Update}(a_h, \frac{H -r_h-V_{h+1}(s_{h+1})}{H})$ on $(s_h, h)\th$ adversarial bandit.
      \ENDFOR
      \ENDFOR
   \end{algorithmic}
\end{algorithm}

To begin with, we describe the V-learning algorithm (Algorithm \ref{algorithm:V}). It maintains a value $V_h(s)$, a counter $N_h(s)$, and a policy $\pi_h(\cdot|s)$ for each state $s$ and step $h$, and initializes them to be the max value, $0$, and uniform distribution respectively. V-learning also instantiates $S\times H$ different adversarial bandit algorithms---one for each $(s, h)$ pair. At each step $h$ in each episode $k$, the algorithm performs three major steps:
\begin{itemize}
	\item Policy execution (Line \ref{line:action}-\ref{line:update_counter}):  the algorithm takes action $a_h$ according to the maintained $\pi_h$, then observes the reward $r_h$ and the next state $s_{h+1}$, and increases the counter $N_h(s_h)$ by $1$.
	\item $V$-value update (Line \ref{line:update_V}-\ref{line:monotone}): the algorithm performs incremental update to the value function:
   \begin{equation} \label{eq:incremental_V}
   \tilde{V}_h(s_h) \setto (1-\alpha_t)\tilde{V}_h(s_h)+ \alpha_t(r_h+V_{h+1}(s_{h+1})+\beta_t)
   \end{equation}
   here $\alpha_t$ is the learning rate, and $\beta_t$ is the bonus to promote optimism (and exploration). The choices of both quantities will be specified later. Next, we simply update $V_h$ as a truncated version of $\tilde{V}_h$.
	\item Policy update (Line \ref{line:bandits}): the algorithm feeds the action $a_h$ and its ``loss'' $\frac{H-r_h+V_{h+1}(s_{h+1})}{H}$ to the $(s_h, h)\th$ adversarial bandit algorithm, and receives the updated policy $\pi_h(\cdot | s_h)$.
\end{itemize}

Throughout this paper, we will always use the following learning rate $\alpha_t$. We also define an auxiliary sequence $\{\alpha_t^i\}_{i=1}^t$ based on the learning rate, which will be frequently used across the paper.
\begin{equation} \label{eq:hyper_V}
  \alpha_t=\frac{H+1}{H+t}, \quad \alpha_t^0 = \prod_{j=1}^{t}(1-\alpha_j), \quad \alpha_t^i = \alpha_i \prod_{j=i+1}^t (1-\alpha_j).
\end{equation}
We remark that our incremental update \eqref{eq:incremental_V} bears significant similarity to Q-learning, and our choice of learning rate is precisely the same as the choice in Q-learning \citep{jin2018q}. However, a key difference is that the V-learning algorithm maintains V-value functions instead of Q-value functions. This is crucial when extending V-learning to the multiplayer setting where the number of parameters of Q-value functions becomes $\cO(HS \prod_{i=1}^m A_i)$ while the number of parameters of V-value functions is only $\cO(HS)$. Since V-learning does not use action-value functions, it resorts to adversarial bandit algorithms to update its policy.

\makeatletter
\renewcommand{\ALG@name}{Protocol}
\makeatother

\begin{algorithm}[t]

\caption{\textsc{Adversarial Bandit Algorithm}}
   \label{pro:adversarial_bandit}
   \begin{algorithmic}[1]
      \STATE {\bfseries Initialize:} for any $b$, $\theta_1(b) \setto 1/B.$
      \FOR{step $t=1,\dots,T$}
      \STATE adversary chooses loss $\ell_t$.
      \STATE take action $b_t \sim \theta_t$, observe noisy bandit-feedback $\tilde{\ell}_t(b_t)$.
      \STATE $\theta_{t+1} \leftarrow \textsc{Adv\_Bandit\_Update}(b_t, \tilde{\ell}_t(b_t))$. 
      \ENDFOR
   \end{algorithmic}
\end{algorithm}

\paragraph{\textsc{Adv\_Bandit\_Update} subroutine:} Consider a multi-arm bandit problem with adversarial loss, where we denote the action set by $\cB$ with $|\cB| = B$. At round $t$, the learner picks a strategy (distribution over actions) $\theta_t \in \Delta_\cB$, and the adversary chooses a loss vector $\ell_t \in [0, 1]^B$. Then the learner takes an action $b_t$ that is sampled from distribution $\theta_t$, and receives a noisy bandit-feedback $\tilde{\ell}_t(b_t) \in [0, 1]$ where $\E[\tilde{\ell}_t(b_t)|\ell_t, b_t] = \ell_t(b_t)$. Then, the adversarial bandit algorithm performs updates based on $b_t$ and $\tilde{\ell}_t(b_t)$, and outputs the strategy for next round $\theta_{t+1}$, which we abstract as $\theta_{t+1} \leftarrow \textsc{Adv\_Bandit\_Update}(b_t, \tilde{\ell}_t(b_t))$ (see Protocol \ref{pro:adversarial_bandit}).

\subsection{Output policy}



\makeatletter
\renewcommand{\ALG@name}{Algorithm}
\makeatother

\begin{algorithm}[t]
   \caption{\textsc{Executing Output Policy $\hat{\pi}$ of V-learning}}
   \label{algorithm:V-policy}
   \begin{algorithmic}[1]
   \STATE sample $k \leftarrow \text{Uniform}([K])$.
      \FOR{step $h=1, \dots,H$}
      \STATE observe $s_{h}$, and set $t \setto N^{k}_h(s_{h})$.
      \STATE set $k \setto k^i_{h}(s_{h})$, where $i\in [t]$ is sampled with probability $\alpha^i_t$. \label{line:sample_output}
      \STATE take action $a_{h} \sim \pi^{k}_h(\cdot|s_h)$.
      \label{line:take_action}
      \ENDFOR
   \end{algorithmic}
\end{algorithm}


We define the final output policy $\hat{\pi}$ of V-learning by how to execute this policy (see Algorithm \ref{algorithm:V-policy}). Let $V^k, N^k, \pi^k$ be the value, counter and policy maintained by V-learning algorithm at \emph{the beginning} of  episode $k$. The output policy maintains a scalar $k$, which is initially uniformly sampled from $[K]$. At each step $h$, after observing $s_h$, $\hat{\pi}$ plays a mixture of policy $\{\pi_h^{k^i}(\cdot|s_h)\}_{i=1}^t$ with corresponding probability $\{\alpha_t^i\}_{i=1}^t$ defined in \eqref{eq:hyper_V}. Here $t=N_h^{k}(s_h)$ is the number of times $s_h$ is visited at step $h$ at the beginning of episode $k$, and $k^i$ is short for $k^i_h(s_h)$ which is the index of the episode when $s_h$ is visited at step $h$ for the $i\th$ time. After that, $\hat{\pi}$ sets $k$ to be the index $k^i_h(s_h)$ whose policy is just played within the mixture, and continue the same process for the next step. This mixture form of output policy $\hat{\pi}$ is mainly due to the incremental updates of V-learning. One can show that, if omitting the optimistic bonus, $V^K_1(s_1)$ computed in the V-learning algorithm is a stochastic estimate of the value of policy $\hat{\pi}$.

We remark that $\hat{\pi}$ is not a Markov policy, but a general random policy (see Definition in Section \ref{sec:prelim}), which can be written as a set of maps $\{\pi_h: \Omega \times \cS^h \rightarrow \cA_i\}$. The choice of action at each step $h$ depends on a joint randomness $\omega \in \Omega$ which is shared among all steps, and the history of past states $(s_1, \ldots, s_h)$. In Section \ref{sec:monotone}, we will further introduce a simple monotone technique that allows V-learning to output a Markov policy in both the single-agent and the two-player zero-sum setting.




\subsection{Single-agent guarantees}

We first state our requirement for the adversarial bandit algorithm used in V-learning, which is to have a high probability \emph{weighted} external regret guarantee as follows. The weights $\{\alpha_t^i\}_{i=1}^t$ are defined in \eqref{eq:hyper_V}.

\begin{assumption} \label{ass:external_regret}
For any $t \in \N$ and any $\delta \in (0, 1)$, with probability at least $1-\delta$, we have
\begin{equation}\label{eq:external_reg}
\max_{\theta \in \Delta_\cB}\sum_{i=1}^t \alpha_t^{i} [\la \theta_i, \ell_i \ra - \la \theta, \ell_i\ra ] \le \AVGReg(B, t, \log(1/\delta)).
\end{equation}
We further assume the existence of an upper bound $\CUMReg(B, t, \log(1/\delta)) \ge \sum_{t'=1}^t \AVGReg(B, t', \log(1/\delta))$where (i) $\AVGReg(B, t, \log(1/\delta))$ is non-decreasing in $B$ for any $t, \delta$; (ii) $\CUMReg(B, t, \log(1/\delta))$ is concave in $t$ for any $B$, $\delta$.
\end{assumption}



Assumption \ref{ass:external_regret} can be satisfied by modifying many existing algorithms with unweighted external regret to the weighted setting. In particular, we prove that the Follow-the-Regularized-Leader (FTRL) algorithm (Algorithm \ref{algorithm:FTRL}) satisfies the Assumption \ref{ass:external_regret} with bounds $\AVGReg(B, t,\log(1/\delta))\le \cO(\sqrt{HB\log(B/\delta) /t})$ and $\CUMReg(B, t,\log(1/\delta))\le \cO(\sqrt{HBt\log(B/\delta)})$. The $H$ factor comes into the bounds because our choice of weights $\{\alpha^i_t\}$ in \eqref{eq:hyper_V} involves $H$. We refer readers to Appendix \ref{sec:bandit} for more details.


We are now ready to introduce the theoretical guarantees of V-learning for finding near-optimal policies in the single-agent setting.


\begin{theorem}
    \label{thm:single_main_result} Suppose subroutine \textsc{Adv\_Bandit\_Update} satisfies Assumption \ref{ass:external_regret}. For any $\delta \in (0, 1)$ and $K \in \N$, let $\iota = \log(HSAK/\delta)$. Choose learning rate $\alpha_t$ according to \eqref{eq:hyper_V} and bonus $\{\beta_t\}_{t=1}^K$ so that $\sum_{i=1}^t\alpha_t^i\beta_{i}=\Theta(H \AVGReg(A, t, \iota)+\sqrt{H^3\iota/t})$ for any $t \in [K]$. Then, with probability at least $1-\delta$, after running Algorithm \ref{algorithm:V} for $K$ episodes,  we have the output policy $\hat{\pi}$ by Algorithm \ref{algorithm:V-policy} satisfies
    \begin{equation*}
    V_1^\star(s_1) -  V_1^{\hat{\pi}}(s_1) \le \cO( (H^2S/K) \cdot \CUMReg(A, K/S, \iota)+\sqrt{H^5S\iota/K}).
    \end{equation*}
  In particular, when instantiating subroutine \textsc{Adv\_Bandit\_Update} by FTRL (Algorithm \ref{algorithm:FTRL}), we can choose
  $\beta_t = c \cdot \sqrt{H^3 A \iota/t}$ for some absolute constant $c$, where
  $V_1^\star(s_1) -     V_1^{\hat{\pi}}(s_1) \le \cO(\sqrt{H^5SA\iota/K} )$.
\end{theorem}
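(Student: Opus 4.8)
The plan is to combine an \emph{optimism} argument---each computed value $V_1^k(s_1)$ upper-bounds $V_1^\star(s_1)$---with a \emph{regret decomposition} that controls the average gap between the computed values and the true value of the certified output policy. First I would show $V_1^\star(s_1)\le V_1^k(s_1)$ for all $k$ with high probability, so that $V_1^\star(s_1)-V_1^{\hat\pi}(s_1)\le \tfrac1K\sum_{k=1}^K\big(V_1^k(s_1)-V_1^{\hat\pi_k}(s_1)\big)$, where $\hat\pi_k$ is the certified policy obtained by initializing Algorithm~\ref{algorithm:V-policy} at index $k$ (so $\hat\pi=\frac1K\sum_k\hat\pi_k$ in value). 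The whole proof then reduces to bounding $\sum_k\big(V_1^k(s_1)-V_1^{\hat\pi_k}(s_1)\big)$. Before either half I would record the elementary properties of the weights in \eqref{eq:hyper_V}: $\sum_{i=1}^t\alpha_t^i=1$ and $\alpha_t^0=0$ for $t\ge1$, $\max_i\alpha_t^i\le 2H/t$, $\sum_i(\alpha_t^i)^2\le 2H/t$, and crucially $\sum_{t\ge i}\alpha_t^i=1+1/H$. The key algebraic object is the unrolled update: writing $k^1<\dots<k^t$ for the episodes visiting $(s,h)$ before episode $k$ with $t=N_h^k(s)$, Line~\ref{line:update_V} gives $\tilde V_h^k(s)=\alpha_t^0(H+1-h)+\sum_{i=1}^t\alpha_t^i\big(r_h^{k^i}+V_{h+1}^{k^i}(s_{h+1}^{k^i})+\beta_i\big)$.

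For optimism I would induct backward on $h$. Identifying the loss fed to the $(s,h)$-bandit as $\ell^{k^i}(a)=1-\tfrac1H\big(r_h(s,a)+(\P_hV_{h+1}^{k^i})(s,a)\big)$, whose bandit feedback is the realized (transition-unbiased) $\tfrac1H(H-r_h-V_{h+1}(s_{h+1}))$, the unrolled update reads $\tilde V_h^k(s)=\alpha_t^0(H+1-h)+H\sum_i\alpha_t^i\big(1-\tilde\ell^{k^i}(a^{k^i})\big)+\sum_i\alpha_t^i\beta_i$. An Azuma--Hoeffding bound on the sampling noise $\sum_i\alpha_t^i\big(\tilde\ell^{k^i}-\langle\pi^{k^i},\ell^{k^i}\rangle\big)$ costs $\cO(\sqrt{H\iota/t})$ (using $\max_i\alpha_t^i\le 2H/t$ and $\sum_i(\alpha_t^i)^2\le 2H/t$), while Assumption~\ref{ass:external_regret} gives $\sum_i\alpha_t^i\langle\pi^{k^i},\ell^{k^i}\rangle\le\min_a\sum_i\alpha_t^i\ell^{k^i}(a)+\AVGReg(A,t,\iota)$. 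Since the prescribed bonus satisfies $\sum_i\alpha_t^i\beta_i=\Theta\big(H\AVGReg(A,t,\iota)+\sqrt{H^3\iota/t}\big)$, both error terms are exactly absorbed, leaving $\tilde V_h^k(s)\ge\alpha_t^0(H+1-h)+\max_a\sum_i\alpha_t^i\big(r_h(s,a)+(\P_hV_{h+1}^{k^i})(s,a)\big)$. Plugging the inductive hypothesis $V_{h+1}^{k^i}\ge V_{h+1}^\star$ and the Bellman equation, then using $\alpha_t^0+\sum_i\alpha_t^i=1$ together with $V_h^\star\le H+1-h$, yields $V_h^k(s)\ge V_h^\star(s)$ even after the truncation in Line~\ref{line:monotone}.

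For the gap I would set $\delta_h^k:=V_h^k(s_h^k)-V_h^{\hat\pi_k}(s_h^k)$ and expand both terms along the episode-$k$ trajectory: the computed side via the unrolled update, and the comparator via the mixture recursion $V_h^{\hat\pi_k}(s)=\sum_i\alpha_t^i\,\E_{a\sim\pi_h^{k^i}(\cdot|s)}\big[r_h(s,a)+(\P_hV_{h+1}^{\hat\pi_{k^i}})(s,a)\big]$, which mirrors exactly how Algorithm~\ref{algorithm:V-policy} samples $i\sim\alpha_t^i$ and recurses on index $k^i$. Subtracting, the reward/transition mismatch forms a martingale difference (controlled by Azuma), the $\beta_i$ terms collect, and the remainder is a weighted sum of $\delta_{h+1}$-type terms. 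Summing over $k$ and reindexing, the identity $\sum_{t\ge i}\alpha_t^i=1+1/H$ converts this into the benign recursion $\sum_k\delta_h^k\le(1+1/H)\sum_k\delta_{h+1}^k+\sum_k\sum_i\alpha_{n_h^k}^i\beta_i+(\text{martingale})$ with $n_h^k=N_h^k(s_h^k)$; unrolling over $h$ inflates constants only by $(1+1/H)^H\le e$. I would then bound the accumulated bonus $\sum_k\sum_i\alpha_{n_h^k}^i\beta_i$ by grouping visits per state: the $H\AVGReg$ part sums (via $\CUMReg\ge\sum_{t'}\AVGReg$ and the concavity of $\CUMReg$ in $t$, through Jensen with $\sum_sN_h^K(s)=K$) to $\cO\big(HS\,\CUMReg(A,K/S,\iota)\big)$ per step, and the $\sqrt{H^3\iota/t}$ part sums (via Cauchy--Schwarz, $\sum_s\sqrt{N_s}\le\sqrt{SK}$) to $\cO(\sqrt{H^3SK\iota})$ per step; multiplying by $H$ steps and dividing by $K$ reproduces $\cO\big((H^2S/K)\CUMReg(A,K/S,\iota)+\sqrt{H^5S\iota/K}\big)$. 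The FTRL corollary follows by substituting $\AVGReg=\cO(\sqrt{HA\iota/t})$ and $\CUMReg=\cO(\sqrt{HAt\iota})$ (Algorithm~\ref{algorithm:FTRL}) and checking that $\beta_t=c\sqrt{H^3A\iota/t}$ meets the weighted-sum requirement.

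The main obstacle I anticipate is handling the \emph{non-Markov, mixture} comparator: unlike the standard Q-learning regret analysis, $V_h^{\hat\pi_k}$ is defined only implicitly through the recursive sampling of Algorithm~\ref{algorithm:V-policy}, and one must verify that the unrolled value identity and the sampling recursion align term-by-term so that the $(1+1/H)$ reindexing goes through cleanly. A second delicate point is the dual role of the bonus---simultaneously dominating the bandit's weighted regret $H\AVGReg(A,t,\iota)$ and the Azuma sampling error $\sqrt{H^3\iota/t}$---which is what forces the precise $\Theta(\cdot)$ calibration of $\beta_t$ in the hypothesis.
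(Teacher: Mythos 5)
Your proposal is correct, and the core machinery coincides with the paper's: the unrolled update of Lemma \ref{lem:2p0s-mono-V-relation}, optimism via the weighted bandit regret plus Azuma absorbed into the calibrated bonus, the $(1+1/H)$ reindexing from Lemma \ref{lem:step_size}, and the pigeonhole-plus-concavity bound on the accumulated bonuses. The route differs in two organizational respects. First, the paper never proves the single-agent theorem directly: it deduces Theorem \ref{thm:single_main_result} from Theorem \ref{thm:2p0s_main_result} (dummy second player), which in turn is deduced from the CCE result, Theorem \ref{thm:CCE_main_result}; your direct single-agent argument is more self-contained but would have to be redone for the game-theoretic settings. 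Second, and more substantively, the paper does not compare $V_h^k$ to $V_h^{\hat\pi_h^k}$ directly. It introduces an auxiliary pessimistic sequence $\low{V}_h^k$ built from the \emph{same realized samples} $s_{h+1}^{k^i}$ as $\tilde V_h^k$ but with the bonus subtracted, proves $\low{V}_h^k \le V_h^{\hat\pi_h^k}$ once by backward induction (Lemma \ref{lem:pess_CCE}), and then runs the final recursion on $\delta_h^k = V_h^k(s_h^k)-\low{V}_h^k(s_h^k)$, which telescopes \emph{deterministically} with no residual noise term. Your direct comparison instead leaves a martingale difference $\sum_i\alpha_{n_h^k}^i\,\xi^{k^i}$ inside the recursion at every $(h,k)$; after summing over $k$ these terms are reweighted by the data-dependent future visit counts, so controlling them requires the more delicate argument familiar from the Q-learning analysis of \citet{jin2018q} rather than a single application of Azuma--Hoeffding. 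This is doable and does not change the final rate (the extra noise is $\cO(\sqrt{H^4\iota/K})$, dominated by $\sqrt{H^5S\iota/K}$), but it is precisely the bookkeeping the paper's $\low{V}$ construction is designed to avoid; if you carry out your plan you should spell out that step rather than fold it into ``(martingale)''.
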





Theorem \ref{thm:single_main_result} characterizes   how fast the suboptimality of $\hat{\pi}$ decreases with respect to the total number of episode $K$. In particular, to obtain an $\epsilon$-optimal output policy $\hat{\pi}$, we only need to use a number of episodes $K=\tlO(H^5SA/\epsilon^2)$. This is $H^2$ factor larger than the information-theoretic lower bound $\Omega(H^3SA/\epsilon^2)$ in this setting \citep{jin2018q}. We remark that one extra $H$ factor is due to the incremental update and the use of learning rate in \eqref{eq:hyper_V} which is exactly the same for Q-learning algorithm \citep{jin2018q}. The other $H$ factor can be potentially improved by using refined first-order regret bound (counterparts of Bernstein concentration) in V-learning.

While V-learning seems to be no better than classical value iteration or Q-learning in the single-agent setting, its true power starts to show up in the multiagent setting: Value iteration and Q-learning require highly nontrivial efforts to adapt them to the multiagent setting, and by design they suffer from the curse of multiagents \citep{NEURIPS2020_172ef5a9,liu2021sharp}.
In the following sections, we will show that V-learning can be directly extended to the multiagent setting by simply letting all agents run V-learning independently. Furthermore, V-learning breaks the curse of multiagents.

\section{Two-player Zero-sum Markov Games}
\label{sec:2p0s}

In this section, we provide the sample efficiency guarantee for V-learning to find Nash equilibria in two-player zero-sum Markov games.  

\subsection{Finding Nash equilibria}

In the two-player zero-sum setting, we have two agents whose rewards satisfy $r_{1,h}= -r_{2, h}$ for any $h \in [H]$. Our algorithm is simply that both agents run V-learning (Algorithm \ref{algorithm:V}) independently with learning rate $\alpha_t$ as specified in \eqref{eq:hyper_V}. Each player $j$ will uses her own set of bonus $\{\beta_{j, t}\}$ that depends on the number of her actions and will be specified later. To execute the output policy, both agents simply execute Algorithm \ref{algorithm:V-policy} independently using their own intermediate policies computed by V-learning.

We have the following theorem for V-learning. For clean presentation, we denote $A=\max_{j\in[2]} A_j$.

\begin{theorem}
    \label{thm:2p0s_main_result} Suppose subroutine \textsc{Adv\_Bandit\_Update}  satisfies Assumption \ref{ass:external_regret}. For any $\delta \in (0, 1)$ and $K \in \N$, let $\iota = \log(HSAK/\delta)$. Choose learning rate $\alpha_t$ according to \eqref{eq:hyper_V} and bonus $\{\beta_{j, t}\}_{t=1}^K$ of the $j\th$ player so that $\sum_{i=1}^t\alpha_t^i\beta_{j,i}=\Theta(H\AVGReg(A_j, t, \iota)+\sqrt{H^3\iota/t})$ for any $t \in [K]$. Then, with probability at least $1-\delta$, after running Algorithm \ref{algorithm:V} for $K$ episodes, let $\hat{\pi}_1, \hat{\pi}_2$ be the output policies by Algorithm \ref{algorithm:V-policy} for each player, then we have the product policy $\hat{\pi} = \hat{\pi}_1 \times \hat{\pi}_2$ satisfies
    \begin{equation*}
    \max_{j\in[2]}[V_{j,1}^{\dag ,\hat{\pi}_{-j}}(s_1)-V_{j,1}^{\hat{\pi}}(s_1)] \le \cO( (H^2S/K) \cdot \CUMReg(A, K/S, \iota)+\sqrt{H^5S\iota/K}).
    \end{equation*}
  When instantiating \textsc{Adv\_Bandit\_Update} by FTRL (Algorithm \ref{algorithm:FTRL}), we can choose  $\beta_{j,t} = c \cdot \sqrt{H^3 A_j \iota/t}$ for some absolute constant $c$, which leads to
  $\max_{j\in[2]}[V_{j,1}^{\dag ,\hat{\pi}_{-j}}(s_1)-V_{j,1}^{\hat{\pi}}(s_1)] \le \cO(\sqrt{H^5SA\iota/K} )$.
\end{theorem}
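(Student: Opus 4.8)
The plan is to bound the Nash gap of the product policy $\hat{\pi}=\hat{\pi}_1\times\hat{\pi}_2$ by reducing it to two \emph{per-player best-response gaps}, and then to control each per-player gap by essentially re-running the single-agent analysis behind Theorem~\ref{thm:single_main_result} while treating the opponent as an adversary. Let $\hat{\pi}^{\mathrm{corr}}=\hat{\pi}_1\odot\hat{\pi}_2$ denote the \emph{correlated} execution in which both players share the same uniformly sampled episode index $k$ and, at every step, the same resampled index $i$ in Algorithm~\ref{algorithm:V-policy}; this is well-defined because both players traverse the same trajectory in every episode and therefore maintain identical counters $N_h^k(\cdot)$ and identical index maps $k_h^i(\cdot)$. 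Writing $\epsilon_j:=V_{j,1}^{\dag,\hat{\pi}_{-j}}(s_1)-V_{j,1}^{\hat{\pi}^{\mathrm{corr}}}(s_1)$, the zero-sum structure $r_{2,h}=1-r_{1,h}$ gives $V_{2,1}^{\dag,\hat{\pi}_1}=H-\min_{\pi_2'}V_{1,1}^{\hat{\pi}_1\times\pi_2'}$ and $V_{1,1}^{\psi}+V_{2,1}^{\psi}=H$ for any joint $\psi$; a short cancellation then shows that both the product-policy gap sum $\sum_{j}[V_{j,1}^{\dag,\hat{\pi}_{-j}}(s_1)-V_{j,1}^{\hat{\pi}}(s_1)]$ and $\epsilon_1+\epsilon_2$ equal the same duality gap $V_{1,1}^{\dag,\hat{\pi}_2}(s_1)-\min_{\pi_2'}V_{1,1}^{\hat{\pi}_1\times\pi_2'}(s_1)$ (the correlated resp.\ product values cancel in the sum, and $V_{j,1}^{\dag,\hat{\pi}_{-j}}$ is identical in both since a best response is independent of the opponent's randomness). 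Since both product-policy gaps are nonnegative, $\max_{j\in[2]}[V_{j,1}^{\dag,\hat{\pi}_{-j}}(s_1)-V_{j,1}^{\hat{\pi}}(s_1)]\le\epsilon_1+\epsilon_2$, so it suffices to bound each $\epsilon_j$.

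To bound $\epsilon_j$, fix player $j$ and treat player $-j$'s certified policy as a fixed, arbitrary, history-dependent adversary; the argument parallels the proof of Theorem~\ref{thm:single_main_result} with the optimal value $V^\star$ replaced by the best-response value $V_{j,h}^{\dag,\hat{\pi}_{-j}}$. The two ingredients are (i) \emph{best-response optimism}, a downward induction on $h$ showing that with high probability the maintained estimate $V_{j,h}^k(s)$ dominates $V_{j,h}^{\dag,\hat{\pi}_{-j}}(s)$ for all $(k,h,s)$, and (ii) \emph{certified-policy tracking}, showing that this same estimate exceeds the value $V_{j,h}^{\hat{\pi}^{\mathrm{corr}}}(s)$ of the correlated output policy by at most the accumulated bonus plus a martingale term. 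Chaining $V_{j,1}^{\dag,\hat{\pi}_{-j}}(s_1)\le V_{j,1}^{K}(s_1)$ with $V_{j,1}^{K}(s_1)-V_{j,1}^{\hat{\pi}^{\mathrm{corr}}}(s_1)\le(\text{bonus}+\text{martingale})$ yields $\epsilon_j\le(\text{bonus}+\text{martingale})$. Assumption~\ref{ass:external_regret} enters precisely in (i): the weighted external regret guarantee lets player $j$'s realized policies compete, at each $(s,h)$, against the best \emph{fixed} action --- which for a best-responding player is deterministic --- up to $\AVGReg(A_j,N_h^k(s),\iota)$, and the bonus schedule is chosen large enough to absorb this regret together with the Azuma concentration slack, which is exactly why $\sum_{i=1}^t\alpha_t^i\beta_{j,i}=\Theta(H\AVGReg(A_j,t,\iota)+\sqrt{H^3\iota/t})$.

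It remains to aggregate these errors into the stated rate. Unrolling the recursion \eqref{eq:hyper_V} at each $(s,h)$ and summing over the $K$ episodes and $H$ steps, the transition-noise terms form martingale differences that Azuma--Hoeffding bounds by $\cO(\sqrt{H^5S\iota/K})$, with the $\sqrt{H^3\iota/t}$ portion of the bonus and the $\alpha_t$ weights supplying the $H$ factors. For the bonus/regret terms, the per-state accumulation obeys $\sum_{n=1}^{N_h^K(s)}\sum_{i=1}^{n}\alpha_n^i\beta_{j,i}\lesssim H\sum_{n=1}^{N_h^K(s)}\AVGReg(A_j,n,\iota)\le H\,\CUMReg(A_j,N_h^K(s),\iota)$, where the last step uses the bound $\CUMReg\ge\sum\AVGReg$ from Assumption~\ref{ass:external_regret}. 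Summing over $s$ via the concavity of $\CUMReg(A_j,\cdot,\iota)$ and Jensen's inequality with $\sum_s N_h^K(s)=K$ gives $\sum_s\CUMReg(A_j,N_h^K(s),\iota)\le S\,\CUMReg(A_j,K/S,\iota)$; summing over the $H$ steps and normalizing the cumulative error by $K$ produces the $(H^2S/K)\CUMReg(A_j,K/S,\iota)$ term. Taking $\max_j$ and using $A=\max_jA_j$ with the monotonicity of $\AVGReg,\CUMReg$ in $B$ gives the general bound, and the FTRL corollary follows by plugging in $\CUMReg(A,t,\iota)\le\cO(\sqrt{HAt\iota})$ and $\beta_{j,t}=c\sqrt{H^3A_j\iota/t}$.

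I expect the main obstacle to be step (i), establishing best-response optimism against a \emph{nonstationary} opponent. Unlike the single-agent setting, the induction target $V_{j,h}^{\dag,\hat{\pi}_{-j}}$ is defined relative to the opponent's certified output policy, so the downward induction must simultaneously couple the $\{\alpha_t^i\}$-weighted certified construction of $\hat{\pi}$ with the value backup, replace the regret comparator by the deterministic best response, and thread the truncation $\min\{H+1-h,\cdot\}$ and the Azuma slack through every level of the recursion. The delicate quantitative point is to choose $\{\beta_{j,t}\}$ large enough to dominate both the weighted bandit regret and the concentration error at \emph{every} count $t$ simultaneously, yet small enough that its accumulation still yields the claimed rate; the rest reduces to bookkeeping of $H$ and $S$ factors and a union bound over $(s,h)$ (absorbed into $\iota$).
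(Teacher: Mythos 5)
Your proposal is correct and follows essentially the same route as the paper: reduce the Nash gap of the product policy to the duality gap, observe via $r_{1,h}+r_{2,h}=1$ that this equals the sum of the two best-response gaps against the correlated output policy $\hat{\pi}_1\odot\hat{\pi}_2$, and bound each of those by the optimism/pessimism-plus-recursion argument. The only difference is presentational: the paper invokes Theorem~\ref{thm:CCE_main_result} as a black box at that point, whereas you unfold its proof (Lemmas~\ref{lem:opt_CCE} and~\ref{lem:pess_CCE} and the episode-averaged gap recursion) inline; note only that the chaining step should use the average $\frac{1}{K}\sum_k V_{j,1}^k(s_1)$ matching the uniform sampling of $k$ in Algorithm~\ref{algorithm:V-policy}, rather than $V_{j,1}^K(s_1)$ alone, which your aggregation paragraph already does implicitly.
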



Theorem \ref{thm:2p0s_main_result} claims that, to find an $\epsilon-$approximate Nash equilibrium, we only need to use a number of episodes $K=\tlO(H^5SA/\epsilon^2)$, where $A = \max_{j\in[2]} A_j$. In contrast, value iteration or Q-learning based algorithms require at least $\Omega(H^3 S A_1A_2/\epsilon^2)$ episodes to find Nash equilibria \cite{NEURIPS2020_172ef5a9, liu2021sharp}. Furthermore, V-learning is a fully decentralized algorithm. To our best knowledge, V-learning is the only algorithm up to today that achieves sample complexity linear in $A$ for finding Nash equilibrium in two-player zero-sum Markov games.

We remark that V-learning only performs $\cO(1)$ operations and calls subroutine $\textsc{Adv\_Bandit\_Update}$ once every time a new sample is observed. As long as the adversarial bandit algorithm used in V-learning is computationally efficient (which is the case for FTRL), V-learning itself is also computationally efficient.

\section{Multiplayer General-sum Markov Games}
\label{sec:multiplayer}

In multiplayer general-sum games, finding Nash equilibria is computationally hard in general (which is technically PPAD-complete \cite{daskalakis2013complexity}). In this section, we focus on finding two commonly-used alternative notions of equilibria in the game theory---coarse correlated equilibria (CCE), and correlated equilibria (CE). Both are relaxed notions of Nash equilibira.

\subsection{Finding coarse correlated equilibria}

The algorithm for finding CCE is again running V-learning (Algorithm \ref{algorithm:V}) independently for each agent $j$ with learning rate $\alpha_t$ (as specified in \eqref{eq:hyper_V}) and bonus $\{\beta_{j, t}\}$ (to be specified later). The major difference from the case of finding Nash equilibria is that CCE and CE require the output policy to be joint correlated policy.
We achieve this correlation by feeding the same random seed to all agents at the very beginning when they execute the output policy according to Algorithm \ref{algorithm:V-policy}. That is, while training can be done in the fully decentralized fashion, we require one round of communication at the beginning of the execution to broadcast the shared random seed. After that, each agent can simply execute her own output policy independently. During the execution, since the states visited are shared among all agents, shared random seed allows the same index $i$ to be sampled across all agents in the Step \ref{line:sample_output} of Algorithm \ref{algorithm:V-policy} at every step. We denote this correlated joint output policy as $\hat{\pi} = \hat{\pi}_1 \odot \ldots \odot \hat{\pi}_m$.

We remark that to specify a correlated policy in general, we need to specify the probability for taking all action combinations $(a_1, \ldots, a_m)$ for each $(s, h)$. This requires at least $\Omega(HS\prod_{j=1}^m A_j)$ space, which grows exponentially with the number of agents $m$. The way V-learning specifies the joint policy only requires agents to store their own intermediate counters and policies computed during training. This only takes a total of $\cO(HSK(\sum_{j=1}^m A_j))$ space, which scales only linearly  with the number of agents. Our approach dramatically improve over the former approach in space complexity when the number of agents is large.

We now present the guarantees for V-learning to learn a CCE as follows. Let $A = \max_{j\in[m]} A_j$.




\begin{theorem}
    \label{thm:CCE_main_result} Suppose subroutine \textsc{Adv\_Bandit\_Update}  satisfies Assumption \ref{ass:external_regret}. For any $\delta \in (0, 1)$ and $K \in \N$, let $\iota = \log(mHSAK/\delta)$. Choose learning rate $\alpha_t$ according to \eqref{eq:hyper_V} and bonus $\{\beta_{j, t}\}_{t=1}^K$ of the $j\th$ player so that $\sum_{i=1}^t\alpha_t^i\beta_{j,i}=\Theta(H\AVGReg(A_j, t, \iota)+\sqrt{H^3\iota/t})$ for any $t \in [K]$. Then, with probability at least $1-\delta$, after all the players running Algorithm \ref{algorithm:V} for $K$ episodes, let $\hat{\pi}_j$ be the output policy by Algorithm \ref{algorithm:V-policy} for the $j\th$ player, then we have the joint policy $\hat{\pi} = \hat{\pi}_1 \odot \ldots \odot \hat{\pi}_m$ satisfies
    \begin{equation*}
    \max_{j\in[m]}[V_{j,1}^{\dag ,\hat{\pi}_{-j}}(s_1)-V_{j,1}^{\hat{\pi}}(s_1)] \le \cO( (H^2S/K) \cdot \CUMReg(A, K/S, \iota)+\sqrt{H^5S\iota/K}).
    \end{equation*}
    When instantiating \textsc{Adv\_Bandit\_Update} by FTRL (Algorithm \ref{algorithm:FTRL}), we can choose  $\beta_{j,t} = c \cdot \sqrt{H^3 A_j \iota/t}$ for some absolute constant $c$, which leads to $\max_{j\in[m]}[V_{j,1}^{\dag ,\hat{\pi}_{-j}}(s_1)-V_{j,1}^{\hat{\pi}}(s_1)] \le \cO(\sqrt{H^5SA\iota/K} )$.
\end{theorem}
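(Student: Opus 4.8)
The plan is to reduce the multiplayer CCE guarantee to a per-player analysis that closely parallels the single-agent proof of Theorem \ref{thm:single_main_result}. The guiding observation is that, from the viewpoint of player $j$, the remaining players $-j$ constitute an adversarially-changing environment: in episode $k$ each of them plays a fixed Markov policy $\pi_{l,h}^k(\cdot\mid s)$, so player $j$ is effectively running single-agent V-learning against a sequence of induced MDPs whose transitions and rewards drift across episodes. This non-stationarity is exactly what the adversarial bandit subroutine, through the weighted-regret bound of Assumption \ref{ass:external_regret}, is designed to absorb; note that for CCE (Definition \ref{def:CCE}) the deviation is to an independent strategy, so external regret alone suffices. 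For each $j$ I would sandwich the averaged algorithmic value $\frac1K\sum_k V_{j,1}^k(s_1)$ between the best-response value and the output-policy value, and the CCE gap for player $j$ will emerge as the sum of the two error terms.

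First I would establish the \emph{optimism} direction, $\frac1K\sum_k V_{j,1}^k(s_1)\ge V_{j,1}^{\dag,\hat\pi_{-j}}(s_1)$, by downward induction on $h$ from $H$ to $1$. Unrolling the incremental update in Line \ref{line:update_V} of Algorithm \ref{algorithm:V} writes $V_{j,h}^k(s)$ as the initialization term plus the $\alpha_t^i$-weighted sum of $r_{j,h}^{k^i}+V_{j,h+1}^{k^i}(s')+\beta_{j,i}$ over the visits $k^1,\dots,k^t$ to $(s,h)$. Applying Assumption \ref{ass:external_regret} to the loss vectors $\ell^{k^i}(a)=\bigl(H-r_{j,h}(s,a,\bm{a}_{-j})-V_{j,h+1}^{k^i}(s')\bigr)/H$ converts the played-policy average into a per-step maximum over player $j$'s own actions, at the cost of one $\AVGReg(A_j,t,\iota)$ term; the bonus $\beta_{j,t}$ is chosen precisely so that its weighted average dominates this regret plus the martingale deviation of the noisy bandit feedback. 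Feeding the inductive hypothesis at step $h{+}1$ into the optimistic next-state value $V_{j,h+1}^{k^i}$ then propagates the per-step max into the sequential best response.

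Next I would prove the \emph{value-estimate} direction, $\frac1K\sum_k V_{j,1}^k(s_1)\le V_{j,1}^{\hat\pi}(s_1)+\text{error}$, by matching the same unrolled update against the recursive definition of the output policy in Algorithm \ref{algorithm:V-policy}. The resampling weights used there to replay historical policies are exactly the $\alpha_t^i$, so with the bonuses stripped off $V_{j,1}^k$ is a faithful estimate of the value of $\hat\pi$ for player $j$, the only residual being the accumulated bonus. The \emph{shared random seed} is essential here: because all players resample the same historical index at each visited state, the joint action distribution generated by $\hat\pi$ at step $h$ is the mixture over historical episodes of the product policies $\prod_l\pi_{l,h}^{k^i}$—precisely the joint distributions encountered during training—so player $j$'s reward $r_{j,h}$, which depends on all players' actions, is reproduced correctly in expectation.

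Combining the two bounds gives $V_{j,1}^{\dag,\hat\pi_{-j}}(s_1)-V_{j,1}^{\hat\pi}(s_1)\le\text{error}$ for each $j$, and taking the maximum over $j\in[m]$ yields the stated CCE guarantee. For the error accounting I would sum the weighted bonus over the $H$ steps and over states, using monotonicity of $\AVGReg$ in its first argument to replace each $A_j$ by $A$: the $H\AVGReg$ part accumulates to $(H^2S/K)\CUMReg(A,K/S,\iota)$ via $\sum_{t'}\AVGReg\le\CUMReg$ together with the \emph{concavity} of $\CUMReg$ in $t$ (a Jensen step redistributing the $K$ episodes over the $\le S$ states, each visited $\approx K/S$ times), while the $\sqrt{H^3\iota/t}$ part accumulates to $\sqrt{H^5S\iota/K}$ after a Cauchy–Schwarz bound $\sum_s\sqrt{N_s}\le\sqrt{SK}$ and the horizon factors. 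The main obstacle I anticipate is the optimism step: unlike the single-agent case the target $V_{j,1}^{\dag,\hat\pi_{-j}}$ is the best response to a correlated policy defined only after the full run, so the induction must be phrased so that the per-step regret certifies the best response against exactly the historical policies that $\hat\pi_{-j}$ later replays—equivalently, the best-response Bellman operator against the resampling policy must be shown to average the per-episode $Q$-functions with the very $\alpha_t^i$ weights appearing in the V-value update—and the coupling between the optimistic next-state values and the adaptively chosen bandit losses must be controlled by a martingale argument.
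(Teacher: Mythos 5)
Your proposal is correct and follows essentially the same route as the paper's proof: optimism of $V_{j,h}^k$ over the best response to the replayed non-Markov policy via backward induction combined with the weighted-regret bound and the bonus (including the key step of decomposing the best-response Bellman operator with the $\alpha_t^i$ weights, which you rightly flag as the main obstacle), a matching lower bound certifying that the algorithmic values track $V^{\hat\pi}$, and error accumulation via pigeonhole plus concavity of $\CUMReg$. The only (cosmetic) difference is that the paper lower-bounds $V_{j,1}^{\hat\pi}$ by introducing auxiliary pessimistic estimates $\low{V}$ and bounding $V^k-\low{V}^k$ along realized trajectories, whereas you compare $V^k$ to $V^{\hat\pi}$ directly, which works but requires one extra martingale layer of the same order.
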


Theorem \ref{thm:CCE_main_result} claims that, to find an $\epsilon-$approximate CCE, V-learning only needs to use a number of episodes $K=\tlO(H^5SA/\epsilon^2)$, where $A = \max_{j\in[m]} A_j$. This is in sharp contrast to the prior results for multiplayer general-sum Markov games, which use value iteration based algorithms, and require at least $\Omega(H^4 S^2 (\prod_{i=1}^m A_i)/\epsilon^2)$ episodes \cite{liu2021sharp}. As a result, V-learning is the first algorithm that breaks the curse of multiagents for finding CCE in Markov games.


\subsection{Finding correlated equilibria}
The algorithm for finding CE is almost the same as the algorithm for finding CCE except that we now require a different \textsc{Adv\_Bandit\_Update} subroutine, which has the following high probability weighted \emph{swap regret} guarantee.


\begin{assumption} \label{ass:swap_regret}
For any $t \in \N$ and any $\delta >0$, with probability at least $1-\delta$, we have
\begin{equation}\label{eq:swap_reg}
\max_{\psi \in \Psi}\sum_{i=1}^t \alpha_t^{i} [\la \theta_i, l_i \ra-\la \psi \diamond\theta_i, l_i\ra ] \le \AVGSwapReg(B, t, \log(1/\delta)).
\end{equation}
We assume the existence of an upper bound $\CUMSwapReg(B, t, \log(1/\delta)) \ge \sum_{t'=1}^t \AVGSwapReg(B, t', \log(1/\delta))$where (i) $\AVGSwapReg(B, t, \log(1/\delta))$ is non-decreasing in $B$ for any $t, \delta$; (ii) $\CUMSwapReg(B, t, \log(1/\delta))$ is concave in $t$ for any $B$, $\delta$.
\end{assumption}

Here $\Psi$ denotes the set $\{\psi: \cB \rightarrow \cB\}$ which consists of all maps from actions in $\cB$ to actions in $\cB$. Meanwhile, for any $\theta \in \Delta_{\cB}$, the term $\psi \diamond \theta \in \Delta_{\cB}$ denotes the distribution over actions where $\psi \diamond \theta (b) = \sum_{b':\psi(b') = b} \theta (b') $. We note that bounded swap regret is a stronger requirement compared to bounded external regret as in \eqref{eq:external_reg}, since by maximizing over a subset of functions in $\Psi$ which map all actions in $\cB$ to one single action, we recover the external regret by \eqref{eq:swap_reg}.

Assumption \ref{ass:swap_regret} can be satisfied by modifying many existing algorithms with external regret to the swap regret setting. In particular, we prove that the Follow-the-Regularized-Leader for swap regret (FTRL\_swap) algorithm (Algorithm \ref{algorithm:FTRL_swap}) satisfies  Assumption \ref{ass:swap_regret} with bounds $\AVGSwapReg(B, t,\log(1/\delta))\le \cO(B\sqrt{H\log(B/\delta) /t})$ and $\CUMSwapReg(B, t,\log(1/\delta))\le \cO(B\sqrt{Ht\log(B/\delta)})$. Both bounds have one extra $\sqrt{B}$ factor comparing to the counterparts in external regret. We refer readers to Appendix \ref{sec:bandit_swap} for more details.


We now present the guarantees for V-learning to learn a CCE as follows. Let $A = \max_{j\in[m]} A_j$.

\begin{theorem}
    \label{thm:CE_main_result} Suppose subroutine \textsc{Adv\_Bandit\_Update}  satisfies Assumption \ref{ass:swap_regret}. For any $\delta \in (0, 1)$ and $K \in \N$, let $\iota = \log(mHSAK/\delta)$. Choose learning rate $\alpha_t$ according to \eqref{eq:hyper_V} and bonus $\{\beta_{j, t}\}_{t=1}^K$ of the $j\th$ player so that $\sum_{i=1}^t\alpha_t^i\beta_{j,i}=\Theta(H\AVGSwapReg(A_j, t, \iota)+\sqrt{H^3\iota/t})$ for any $t \in [K]$. Then, with probability at least $1-\delta$, after all the players running Algorithm \ref{algorithm:V} for $K$ episodes, let $\hat{\pi}_j$ be the output policy by algorithm \ref{algorithm:V-policy} for the $j\th$ player, then we have the joint policy $\hat{\pi} = \hat{\pi}_1 \odot \ldots \odot \hat{\pi}_m$ satisfies
    \begin{equation*}
        \max_{j\in[m]}\max_{\phi_j}[V_{j,1}^{\phi_j\diamond\hat{\pi}}(s_1)-V_{j,1}^{\hat{\pi}}(s_1)] \le \cO( (H^2S/K)\cdot\CUMSwapReg(A, K/S, \iota) +\sqrt{SH^5\iota/K}).
    \end{equation*}
  When instantiating \textsc{Adv\_Bandit\_Update} by FTRL\_swap (Algorithm~\ref{algorithm:FTRL_swap}), we can choose  $\beta_{j,t} = c \cdot A_j\sqrt{H^3 \iota/t}$ for some absolute constant $c$, which leads to 
  $\max_{j\in[m]}\max_{\phi_j}[V_{j,1}^{\phi_j\diamond\hat{\pi}}(s_1)-V_{j,1}^{\hat{\pi}}(s_1)] \le \cO(A\sqrt{H^5S\iota/K} )$.

\end{theorem}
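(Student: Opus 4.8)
The plan is to reduce the CE guarantee to a per-player, per-state swap-regret accounting that mirrors the CCE analysis of Theorem~\ref{thm:CCE_main_result}, replacing the external-regret bound of Assumption~\ref{ass:external_regret} by the swap-regret bound of Assumption~\ref{ass:swap_regret} at every point where a deviation is certified. Fix a player $j$; the other players' intermediate policies $\hat\pi_{-j}$ define a (nonstationary, correlated) environment for her, and the quantity to control is $\max_{\phi_j}[V_{j,1}^{\phi_j\diamond\hat\pi}(s_1)-V_{j,1}^{\hat\pi}(s_1)]$. First I would unroll the incremental update in Line~\ref{line:update_V}: after $t=N_h^k(s)$ visits to $(s,h)$, the maintained value satisfies $V_{j,h}^k(s)=\alpha_t^0(H+1-h)+\sum_{i=1}^t\alpha_t^i\big(r_{j,h}^{k^i}+V_{j,h+1}^{k^i}(s_{h+1}^{k^i})+\beta_{j,i}\big)$, where $k^i$ indexes the $i\th$ visit and the weights $\alpha_t^i$ are those of \eqref{eq:hyper_V}. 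This identity is the bridge between the algorithm's bookkeeping and the value of the mixture output policy $\hat\pi$ of Algorithm~\ref{algorithm:V-policy}, whose recursive sampling is matched exactly to the weights $\{\alpha_t^i\}$.

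The argument then splits into two backward inductions over $h$. In the realized-value direction I would show that the weighted V-values track the true value of $\hat\pi$ from above up to the accumulated bonuses and a martingale term, i.e. $\tfrac1K\sum_{k}V_{j,1}^k(s_1)\le V_{j,1}^{\hat\pi}(s_1)+(\text{bonus}+\text{noise})$; this step is identical to the CCE and single-agent cases since it references no deviation, and the noise is handled by an Azuma bound contributing the $\sqrt{SH^5\iota/K}$ term. In the deviation direction I would certify that the same weighted V-values dominate the best strategy-modified value, $\max_{\phi_j}V_{j,1}^{\phi_j\diamond\hat\pi}(s_1)\le \tfrac1K\sum_k V_{j,1}^k(s_1)$. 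Here Assumption~\ref{ass:swap_regret} is invoked at each $(s,h)$: the losses $\ell_i=\frac{H-r_{j,h}^{k^i}-V_{j,h+1}^{k^i}(s_{h+1})}{H}$ fed to the $(s,h)\th$ bandit and the plays $\theta_i=\pi_{j,h}^{k^i}(\cdot|s)$ satisfy the weighted swap-regret bound, and a history-dependent modification $\phi_j$ decomposes, state by state, into swap maps $\psi\in\Psi$ at each $(s,h)$, so the per-state swap regret is exactly what is needed to absorb the gain of $\phi_j$ into the bonus. Choosing $\beta_{j,t}$ so that $\sum_{i=1}^t\alpha_t^i\beta_{j,i}=\Theta(H\AVGSwapReg(A_j,t,\iota)+\sqrt{H^3\iota/t})$ makes the bonus pointwise dominate the swap regret plus the concentration slack, closing the induction.

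Subtracting the two directions gives $\max_{\phi_j}V_{j,1}^{\phi_j\diamond\hat\pi}(s_1)-V_{j,1}^{\hat\pi}(s_1)\le(\text{bonus}+\text{noise})$, and it remains to sum the bonus contribution over the $K$ episodes. Writing $n_{s,h}=N_h^K(s)$ with $\sum_s n_{s,h}=K$ for each $h$, the accumulated bonus is bounded via the envelope $\CUMSwapReg$, and concavity of $\CUMSwapReg$ in $t$ (Assumption~\ref{ass:swap_regret}(ii)) together with Jensen's inequality yields $\sum_s\CUMSwapReg(A,n_{s,h},\iota)\le S\,\CUMSwapReg(A,K/S,\iota)$, producing the $(H^2S/K)\CUMSwapReg(A,K/S,\iota)$ term; a Cauchy--Schwarz step on the $\sqrt{H^3\iota/t}$ pieces gives the $\sqrt{SH^5\iota/K}$ term. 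A union bound over the $m$ players and the $HS$ state--step pairs (absorbed into $\iota=\log(mHSAK/\delta)$) yields the high-probability statement, and the FTRL\_swap instantiation with $\AVGSwapReg(A_j,t,\iota)\le\cO(A_j\sqrt{H\iota/t})$ and the stated $\beta_{j,t}=c\,A_j\sqrt{H^3\iota/t}$ specializes the bound to $\cO(A\sqrt{H^5S\iota/K})$.

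The main obstacle I expect is the deviation direction: making precise that a history-dependent strategy modification $\phi_j$, which may remap actions differently along every trajectory, is controlled by the collection of per-state \emph{swap} maps certified by Assumption~\ref{ass:swap_regret}. The reason this succeeds (and why swap regret, rather than external regret, is the right notion for CE) is that the output policy $\hat\pi$ replays, at each $(s,h)$, one of the past bandit distributions $\pi_{j,h}^{k^i}(\cdot|s)$, so the effect of $\phi_j$ restricted to $(s,h)$ is precisely a swap modification $\psi\diamond\theta_i$ of that distribution, and the backward induction aggregates these local swap-regret bounds without cross-state coupling. Getting the optimism bookkeeping to line up so the induction hypothesis is preserved at every step---in particular that the truncation $V_{j,h}^k(s)\le H+1-h$ in Line~\ref{line:monotone} does not destroy the domination, which holds because the deviation value is itself bounded by $H+1-h$---is the most delicate part of the accounting.
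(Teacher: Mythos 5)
Your proposal is correct and follows essentially the same route as the paper: the two backward inductions you describe are precisely the paper's optimism lemma (Lemma~\ref{lem:optimism_CE}, which uses the per-state weighted swap-regret bound of Lemma~\ref{lem:swap-local-regret} to dominate the best strategy modification) and its pessimism lemma (Lemma~\ref{lem:pessimism_CE}, formalized there via the auxiliary lower estimates $\low{V}$ with negated bonus), and the final accumulation via pigeonhole plus concavity of $\CUMSwapReg$ matches the paper's proof of Theorem~\ref{thm:CE_main_result}. The delicate point you flag---that the history-dependent modification and the non-Markov output policy must be reconciled with per-state swap maps in the induction step---is exactly the step the paper handles in part $(v)$ of the optimism argument.
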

Theorem \ref{thm:CE_main_result} claims that, to find an $\epsilon-$approximate CE, V-learning only needs to use a number of episodes $K=\tlO(H^5SA^2/\epsilon^2)$, where $A = \max_{j\in[m]} A_j$. It has an extra $A$ multiplicative factor comparing to the sample complexity of finding CCE, since CE is a subset of CCE thus finding CE is expected to be more difficult. Nevertheless, the sample complexity presented here is far better than value iteration based algorithm, which requires at least $\Omega(H^4 S^2 (\prod_{i=1}^m A_i)/\epsilon^2)$ episodes for finding CE \cite{liu2021sharp}. V-learning is also the first algorithm that breaks the curse of multiagents for finding CE in Markov games.





\section{Monotonic V-Learning}
\label{sec:monotone}

In the previous sections, we present the V-learning algorithm whose output policy (Algorithm \ref{algorithm:V-policy}) is a nested mixture of Markov polices. Storing such a output policy requires $\cO(HSA_jK)$ space for the $j\th$ player. In Section \ref{sec:multiplayer}, we argue this approach has a significant advantage over directly storing a general correlated policy when the number of agents is large. Nevertheless, this space complexity can be undesirable when the number of agents is small.

In this section, we introduce a simple monotonic techique to V-learning, which allows each agent to output a Markov policy when finding Nash equilibria in the two-player zero-sum setting. Storing a Markov policy only takes $\cO(HSA_j)$ space for the $j\th$ player. A similar result for the single-agent setting can be immediately obtained by setting the second player in the Markov game to be a dummy player with only a single action to choose from.

\paragraph{Monotonic update} Monotonic V-learning is almost the same as V-learning with only the Line \ref{line:monotone} in Algorithm \ref{algorithm:V} changed to 
\begin{equation}
\label{eq:monotone_update}
    V_h(s_h) \setto \min\{H+1-h, \tilde{V}_h(s_h),V_h(s_h)\},
\end{equation}
This step guarantees $V_h(s_h)$ to monotonically decrease at each step. This is helpful because in two-player zero-sum Markov games, all Nash equilibria share a unique value which we denote as $V^\star$. By design, we can prove that the V-values maintained in V-learning are high probability upper bounds of $V^\star$ (Lemma \ref{lem:mono_order}). 
This monotonic update allows our V-value estimates to always get closer to $V^\star$ after each update, which improves the accuracy of our V-value estimates. 

\paragraph{Markov output policy} For an arbitrary fixed $(s,h)\in \mathcal{S}\times [H]$, let $t_1$ be the last episode when the value ${V}_{1,h}(s)$ is updated (i.e., strictly decreases), and let $t_2$ be the last episode when the value ${V}_{2,h}(s)$ is updated. Then the output policy for this $(s, h)$ has the following form.
\begin{equation}\label{eq:monotone_output}
\tilde\pi_{1,h}(\cdot|s):=\sum_{i=1}^{t_2}\alpha_{t_2}^i\pi_{1,h}^{k^i}(\cdot|s),\quad \tilde\pi_{2,h}(\cdot|s):=\sum_{i=1}^{t_1}\alpha_{t_1}^i\pi_{2,h}^{k^i}(\cdot|s),
\end{equation}
where $k^i$ denotes the index of episode when state $s$ is visited at step $h$ is visited for the $i^{\rm th}$ time. Recall that $\pi_{j,h}^{k}(\cdot|s)$ is the policy maintained by the $j\th$ player at the beginning of the $k\th$ episode when she runs V-learning. That is, the new output policy is simply the weighted average of policies computed in the V-learning at each $(s, h)$ pair. Clearly, the policies $\tilde{\pi}_1$ and $\tilde{\pi}_2$ defined by \eqref{eq:monotone_output} are Markov policies.

We remark that although the execution of $\tilde{\pi}_1$ and $\tilde{\pi}_2$ can be fully decentralized, in \eqref{eq:monotone_output} the computation of $\tilde\pi_{1,h}(\cdot|s)$ depends on $t_2$ while the computation of $\tilde\pi_{2,h}(\cdot|s)$ depends on $t_1$.
That is, two players need to communicate at the end the indexes of the most recent episodes when their $V$-values are updated. As a result, monotonic V-learning is not fully decentralized.

\begin{theorem}
    \label{thm:2p0s_monotone} Monotonic V-learning with output policy $\tilde{\pi} = \tilde{\pi}_1 \times \tilde{\pi}_2$ as specified by \eqref{eq:monotone_output} has the same theoretical guarantees as Theorem \ref{thm:2p0s_main_result} with the same choices of hyperparamters.
\end{theorem}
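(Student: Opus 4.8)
The plan is to reduce the Nash gap of the Markov policy $\tilde{\pi} = \tilde{\pi}_1 \times \tilde{\pi}_2$ to the duality gap of the final monotone V-values. Write $\overline{V}_{j,h}(s)$ for the value $V_{j,h}(s)$ maintained by the $j\th$ player after its last update at $(s,h)$. Using the zero-sum identity $V_{1,1}^{\tilde{\pi}}(s_1) + V_{2,1}^{\tilde{\pi}}(s_1) = H$ (since $r_{2,h}=1-r_{1,h}$) together with two one-sided certificates
$$\overline{V}_{1,h}(s) \ge V_{1,h}^{\dag,\tilde{\pi}_2}(s), \qquad \overline{V}_{2,h}(s) \ge V_{2,h}^{\dag,\tilde{\pi}_1}(s) \quad \text{for all } (s,h),$$
I would obtain, since each best-response gap is nonnegative,
$$\max_{j\in[2]}\big[V_{j,1}^{\dag,\tilde{\pi}_{-j}}(s_1) - V_{j,1}^{\tilde{\pi}}(s_1)\big] \le \big[\overline{V}_{1,1}(s_1) + \overline{V}_{2,1}(s_1)\big] - H.$$
It therefore suffices to (Part 1) prove the two certificates and (Part 2) bound the right-hand side by the target rate.

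Part 1 is the genuinely new ingredient and relies on both the monotonic update \eqref{eq:monotone_update} and the specific cross-indexed weights in \eqref{eq:monotone_output}. I would prove the first inequality by backward induction on $h$. Let $t_1 = t_1(s,h)$ be the last episode at which $\overline{V}_{1,h}(s)$ is updated; by the monotone rule, $\overline{V}_{1,h}(s)$ equals the incremental estimate $\tilde{V}_{1,h}(s)$ produced at that moment, which unrolls via the weights $\{\alpha_{t_1}^i\}$ of \eqref{eq:hyper_V} into $\alpha_{t_1}^0(H+1-h) + \sum_{i=1}^{t_1}\alpha_{t_1}^i[r_{1,h}^{k^i} + V_{1,h+1}^{k^i}(s_{h+1}^{k^i}) + \beta_{1,i}]$. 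Monotonicity (Lemma \ref{lem:mono_order}) gives $V_{1,h+1}^{k^i}(s') \ge \overline{V}_{1,h+1}(s')$, and the inductive hypothesis gives $\overline{V}_{1,h+1}(s') \ge V_{1,h+1}^{\dag,\tilde{\pi}_2}(s')$. Replacing the realized reward, transition, and opponent action by their conditional means (a martingale/Azuma argument contributing the $\sqrt{H^3\iota/t}$ terms), the averaged opponent play $\sum_{i=1}^{t_1}\alpha_{t_1}^i\pi_{2,h}^{k^i}(\cdot|s)$ is \emph{by construction} exactly $\tilde{\pi}_{2,h}(\cdot|s)$, and applying the weighted external regret guarantee (Assumption \ref{ass:external_regret}) to player 1's own plays compares her realized value to the best fixed action against $\tilde{\pi}_{2,h}$. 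The bonus $\beta_{1,t}$ is chosen precisely to dominate the combined regret and concentration error, upgrading this to $\overline{V}_{1,h}(s) \ge V_{1,h}^{\dag,\tilde{\pi}_2}(s)$; the truncation at $H+1-h$ is harmless because $V_{1,h}^{\dag,\tilde{\pi}_2}(s) \le H+1-h$. The second certificate is symmetric with $t_2(s,h)$ in place of $t_1(s,h)$, which is exactly why \eqref{eq:monotone_output} couples player 1's output weights to $t_2$ and player 2's to $t_1$.

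For Part 2, bounding $\overline{V}_{1,1}(s_1) + \overline{V}_{2,1}(s_1) - H$ is a statement purely about the V-values the algorithm maintains, and I would handle it by the same recursion as in the proof of Theorem \ref{thm:2p0s_main_result}: unroll both players' monotone V-values step by step, telescope the per-step gaps using the weight identity $\sum_{t\ge i}\alpha_t^i = 1 + 1/H$, and accumulate the bonus contributions $\sum_i \alpha_t^i\beta_{j,i} = \Theta(H\AVGReg(A_j,t,\iota) + \sqrt{H^3\iota/t})$ together with the martingale errors. Summing over the $S$ states and $H$ steps yields $\cO((H^2S/K)\CUMReg(A,K/S,\iota) + \sqrt{H^5S\iota/K})$, matching Theorem \ref{thm:2p0s_main_result}, and the FTRL instantiation gives the stated $\cO(\sqrt{H^5SA\iota/K})$ rate. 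Moreover, since the monotone values satisfy $\overline{V}_{j,h}(s) \le$ their non-monotone counterparts while both remain $\ge V^\star$, this quantity is no larger than in the non-monotone analysis, so no rate is lost.

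I expect the main obstacle to be Part 1, specifically making the induction for the one-sided certificates fully rigorous. The delicate points are (i) verifying that the monotone value at the last update time unrolls cleanly with the weights $\{\alpha_{t_1}^i\}$ despite the running $\min$ in \eqref{eq:monotone_update}, i.e.\ arguing the minimum is attained at the final update so that earlier truncations do not corrupt the expansion, and (ii) ensuring the concentration and regret errors are controlled uniformly over all $(s,h)$ and over the random, mutually correlated stopping times $t_1(s,h)$ and $t_2(s,h)$. The zero-sum identity together with the careful weight matching in \eqref{eq:monotone_output} is what lets these two separate one-sided arguments combine into the two-sided Nash gap bound.
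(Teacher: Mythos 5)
Your proposal matches the paper's proof essentially step for step: your Part 1 certificates are exactly Lemma \ref{lem:mono_order}, your zero-sum reduction of the Nash gap to $\overline{V}_{1,1}(s_1)+\overline{V}_{2,1}(s_1)-H$ is the opening display of the paper's proof, and your Part 2 is the same $\delta_h^k$ recursion borrowed from Theorem \ref{thm:CCE_main_result}. The stopping-time worry you raise is resolved in the paper by proving the optimism certificate uniformly for \emph{all} episodes $k$ (union bound over $(s,h,k)$) and then using monotonicity to bound the final value by the episode average, rather than conditioning on the random last-update times.
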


Theorem \ref{thm:2p0s_monotone} asserts that V-learning can be modified to output Markov policies when finding Nash equilibria of two-player zero-sum Markov games. As a special case, the same technique and results directly apply to the single-agent setting.

\section{Conclusion}
\label{sec:conclusion}
In this paper, we develop the first \emph{decentralized} algorithm that breaks the \emph{curse of multiagents} for learning general Markov games. Behind this new result is a new class of single-agent RL algorithms---V-learning, which converts any adversarial bandit algorithm with suitable regret guarantees into a RL algorithm. A remarkable advantage of V-learning is its effortless extension to the multiagent setting while having much preferred theoretical guarantees over existing methods: by simply running V-learning independently for all agents, we find Nash equilibria (for two-player zero-sum games), CCE, and CE in a number of samples that scales with only $\max_{j\in[m]} A_j$, in contrast to existing algorithms whose number of samples scales with $\prod_{j\in[m]} A_j$. 





\bibliographystyle{plainnat}
\bibliography{ref}

\begin{thebibliography}{59}
\providecommand{\natexlab}[1]{#1}
\providecommand{\url}[1]{\texttt{#1}}
\expandafter\ifx\csname urlstyle\endcsname\relax
  \providecommand{\doi}[1]{doi: #1}\else
  \providecommand{\doi}{doi: \begingroup \urlstyle{rm}\Url}\fi

\bibitem[Azar et~al.(2017)Azar, Osband, and Munos]{azar2017minimax}
Mohammad~Gheshlaghi Azar, Ian Osband, and R{\'e}mi Munos.
\newblock Minimax regret bounds for reinforcement learning.
\newblock In \emph{Proceedings of the 34th International Conference on Machine
  Learning-Volume 70}, pages 263--272. JMLR. org, 2017.

\bibitem[Bai and Jin(2020)]{bai2020provable}
Yu~Bai and Chi Jin.
\newblock Provable self-play algorithms for competitive reinforcement learning.
\newblock \emph{arXiv preprint arXiv:2002.04017}, 2020.

\bibitem[Bai et~al.(2020)Bai, Jin, and Yu]{NEURIPS2020_172ef5a9}
Yu~Bai, Chi Jin, and Tiancheng Yu.
\newblock Near-optimal reinforcement learning with self-play.
\newblock In \emph{Advances in Neural Information Processing Systems}, 2020.

\bibitem[Baker et~al.(2020)Baker, Kanitscheider, Markov, Wu, Powell, McGrew,
  and Mordatch]{baker2020emergent}
Bowen Baker, Ingmar Kanitscheider, Todor Markov, Yi~Wu, Glenn Powell, Bob
  McGrew, and Igor Mordatch.
\newblock Emergent tool use from multi-agent autocurricula.
\newblock In \emph{International Conference on Learning Representations}, 2020.

\bibitem[Blum and Mansour(2007)]{blum2007external}
Avrim Blum and Yishay Mansour.
\newblock From external to internal regret.
\newblock \emph{Journal of Machine Learning Research}, 8\penalty0 (6), 2007.

\bibitem[Brambilla et~al.(2013)Brambilla, Ferrante, Birattari, and
  Dorigo]{brambilla2013swarm}
Manuele Brambilla, Eliseo Ferrante, Mauro Birattari, and Marco Dorigo.
\newblock Swarm robotics: a review from the swarm engineering perspective.
\newblock \emph{Swarm Intelligence}, 7\penalty0 (1):\penalty0 1--41, 2013.

\bibitem[Brown and Sandholm(2018)]{brown2018superhuman}
Noam Brown and Tuomas Sandholm.
\newblock Superhuman ai for heads-up no-limit poker: Libratus beats top
  professionals.
\newblock \emph{Science}, 359\penalty0 (6374):\penalty0 418--424, 2018.

\bibitem[Brown and Sandholm(2019)]{brown2019superhuman}
Noam Brown and Tuomas Sandholm.
\newblock Superhuman ai for multiplayer poker.
\newblock \emph{Science}, 365\penalty0 (6456):\penalty0 885--890, 2019.

\bibitem[Celli et~al.(2020)Celli, Marchesi, Farina, and Gatti]{celli2020no}
Andrea Celli, Alberto Marchesi, Gabriele Farina, and Nicola Gatti.
\newblock No-regret learning dynamics for extensive-form correlated
  equilibrium.
\newblock \emph{Advances in Neural Information Processing Systems}, 33, 2020.

\bibitem[Cesa-Bianchi and Lugosi(2006)]{cesa2006prediction}
Nicolo Cesa-Bianchi and G{\'a}bor Lugosi.
\newblock \emph{Prediction, learning, and games}.
\newblock Cambridge university press, 2006.

\bibitem[Dann et~al.(2017)Dann, Lattimore, and Brunskill]{dann2017unifying}
Christoph Dann, Tor Lattimore, and Emma Brunskill.
\newblock Unifying pac and regret: Uniform pac bounds for episodic
  reinforcement learning.
\newblock In \emph{Advances in Neural Information Processing Systems}, pages
  5713--5723, 2017.

\bibitem[Daskalakis(2013)]{daskalakis2013complexity}
Constantinos Daskalakis.
\newblock On the complexity of approximating a nash equilibrium.
\newblock \emph{ACM Transactions on Algorithms (TALG)}, 9\penalty0
  (3):\penalty0 23, 2013.

\bibitem[Daskalakis et~al.(2021)Daskalakis, Foster, and
  Golowich]{daskalakis2021independent}
Constantinos Daskalakis, Dylan~J Foster, and Noah Golowich.
\newblock Independent policy gradient methods for competitive reinforcement
  learning.
\newblock \emph{arXiv preprint arXiv:2101.04233}, 2021.

\bibitem[Gilpin and Sandholm(2006)]{gilpin2006finding}
Andrew Gilpin and Tuomas Sandholm.
\newblock Finding equilibria in large sequential games of imperfect
  information.
\newblock In \emph{Proceedings of the 7th ACM conference on Electronic
  commerce}, pages 160--169, 2006.

\bibitem[Hansen et~al.(2013)Hansen, Miltersen, and Zwick]{hansen2013strategy}
Thomas~Dueholm Hansen, Peter~Bro Miltersen, and Uri Zwick.
\newblock Strategy iteration is strongly polynomial for 2-player turn-based
  stochastic games with a constant discount factor.
\newblock \emph{Journal of the ACM (JACM)}, 60\penalty0 (1):\penalty0 1--16,
  2013.

\bibitem[Hart and Mas-Colell(2000)]{hart2000simple}
Sergiu Hart and Andreu Mas-Colell.
\newblock A simple adaptive procedure leading to correlated equilibrium.
\newblock \emph{Econometrica}, 68\penalty0 (5):\penalty0 1127--1150, 2000.

\bibitem[Hu and Wellman(2003)]{hu2003nash}
Junling Hu and Michael~P Wellman.
\newblock Nash q-learning for general-sum stochastic games.
\newblock \emph{Journal of machine learning research}, 4\penalty0
  (Nov):\penalty0 1039--1069, 2003.

\bibitem[Iqbal and Sha(2019)]{iqbal2019actor}
Shariq Iqbal and Fei Sha.
\newblock Actor-attention-critic for multi-agent reinforcement learning.
\newblock In \emph{International Conference on Machine Learning}, pages
  2961--2970. PMLR, 2019.

\bibitem[Jaksch et~al.(2010)Jaksch, Ortner, and Auer]{jaksch2010near}
Thomas Jaksch, Ronald Ortner, and Peter Auer.
\newblock Near-optimal regret bounds for reinforcement learning.
\newblock \emph{Journal of Machine Learning Research}, 11\penalty0
  (Apr):\penalty0 1563--1600, 2010.

\bibitem[Jia et~al.(2019)Jia, Yang, and Wang]{jia2019feature}
Zeyu Jia, Lin~F Yang, and Mengdi Wang.
\newblock Feature-based q-learning for two-player stochastic games.
\newblock \emph{arXiv preprint arXiv:1906.00423}, 2019.

\bibitem[Jiang et~al.(2017)Jiang, Krishnamurthy, Agarwal, Langford, and
  Schapire]{jiang2017contextual}
Nan Jiang, Akshay Krishnamurthy, Alekh Agarwal, John Langford, and Robert~E
  Schapire.
\newblock Contextual decision processes with low bellman rank are
  pac-learnable.
\newblock In \emph{International Conference on Machine Learning}, pages
  1704--1713. PMLR, 2017.

\bibitem[Jin et~al.(2018)Jin, Allen-Zhu, Bubeck, and Jordan]{jin2018q}
Chi Jin, Zeyuan Allen-Zhu, Sebastien Bubeck, and Michael~I Jordan.
\newblock Is {Q}-learning provably efficient?
\newblock In \emph{Advances in Neural Information Processing Systems}, pages
  4868--4878, 2018.

\bibitem[Jin et~al.(2020)Jin, Yang, Wang, and Jordan]{jin2020provably}
Chi Jin, Zhuoran Yang, Zhaoran Wang, and Michael~I Jordan.
\newblock Provably efficient reinforcement learning with linear function
  approximation.
\newblock In \emph{Conference on Learning Theory}, pages 2137--2143, 2020.

\bibitem[Jin et~al.(2021{\natexlab{a}})Jin, Liu, and
  Miryoosefi]{jin2021bellman}
Chi Jin, Qinghua Liu, and Sobhan Miryoosefi.
\newblock Bellman eluder dimension: New rich classes of rl problems, and
  sample-efficient algorithms.
\newblock \emph{Advances in Neural Information Processing Systems},
  2021{\natexlab{a}}.

\bibitem[Jin et~al.(2021{\natexlab{b}})Jin, Liu, and Yu]{jin2021power}
Chi Jin, Qinghua Liu, and Tiancheng Yu.
\newblock The power of exploiter: Provable multi-agent rl in large state
  spaces.
\newblock \emph{arXiv preprint arXiv:2106.03352}, 2021{\natexlab{b}}.

\bibitem[Koller and Megiddo(1992)]{koller1992complexity}
Daphne Koller and Nimrod Megiddo.
\newblock The complexity of two-person zero-sum games in extensive form.
\newblock \emph{Games and economic behavior}, 4\penalty0 (4):\penalty0
  528--552, 1992.

\bibitem[Lattimore and Szepesv{\'a}ri(2018)]{lattimore2018bandit}
Tor Lattimore and Csaba Szepesv{\'a}ri.
\newblock Bandit algorithms.
\newblock 2018.

\bibitem[Littman(1994)]{littman1994markov}
Michael~L Littman.
\newblock Markov games as a framework for multi-agent reinforcement learning.
\newblock In \emph{Machine learning proceedings 1994}, pages 157--163.
  Elsevier, 1994.

\bibitem[Littman(2001)]{littman2001friend}
Michael~L Littman.
\newblock Friend-or-foe q-learning in general-sum games.
\newblock In \emph{ICML}, volume~1, pages 322--328, 2001.

\bibitem[Liu et~al.(2021)Liu, Yu, Bai, and Jin]{liu2021sharp}
Qinghua Liu, Tiancheng Yu, Yu~Bai, and Chi Jin.
\newblock A sharp analysis of model-based reinforcement learning with
  self-play.
\newblock In \emph{International Conference on Machine Learning}, pages
  7001--7010. PMLR, 2021.

\bibitem[Lowe et~al.(2017)Lowe, Wu, Tamar, Harb, Abbeel, and
  Mordatch]{lowe2017multi}
Ryan Lowe, Yi~Wu, Aviv Tamar, Jean Harb, Pieter Abbeel, and Igor Mordatch.
\newblock Multi-agent actor-critic for mixed cooperative-competitive
  environments.
\newblock \emph{arXiv preprint arXiv:1706.02275}, 2017.

\bibitem[Mao and Ba{\c{s}}ar(2021)]{mao2021provably}
Weichao Mao and Tamer Ba{\c{s}}ar.
\newblock Provably efficient reinforcement learning in decentralized
  general-sum markov games.
\newblock \emph{arXiv preprint arXiv:2110.05682}, 2021.

\bibitem[Neu(2015)]{neu2015explore}
Gergely Neu.
\newblock Explore no more: Improved high-probability regret bounds for
  non-stochastic bandits.
\newblock In \emph{Advances in Neural Information Processing Systems}, pages
  3168--3176, 2015.

\bibitem[OpenAI(2018)]{openaidota}
OpenAI.
\newblock Openai five.
\newblock \url{https://blog.openai.com/openai-five/}, 2018.

\bibitem[Osband and Van~Roy(2016)]{osband2016lower}
Ian Osband and Benjamin Van~Roy.
\newblock On lower bounds for regret in reinforcement learning.
\newblock \emph{arXiv preprint arXiv:1608.02732}, 2016.

\bibitem[Osband et~al.(2014)Osband, Van~Roy, and Wen]{osband2014generalization}
Ian Osband, Benjamin Van~Roy, and Zheng Wen.
\newblock Generalization and exploration via randomized value functions.
\newblock \emph{arXiv preprint arXiv:1402.0635}, 2014.

\bibitem[Osborne and Rubinstein(1994)]{osborne1994course}
Martin~J Osborne and Ariel Rubinstein.
\newblock \emph{A course in game theory}.
\newblock MIT press, 1994.

\bibitem[Radanovic et~al.(2019)Radanovic, Devidze, Parkes, and
  Singla]{radanovic2019learning}
Goran Radanovic, Rati Devidze, David Parkes, and Adish Singla.
\newblock Learning to collaborate in markov decision processes.
\newblock In \emph{International Conference on Machine Learning}, pages
  5261--5270, 2019.

\bibitem[Rashid et~al.(2018)Rashid, Samvelyan, Schroeder, Farquhar, Foerster,
  and Whiteson]{rashid2018qmix}
Tabish Rashid, Mikayel Samvelyan, Christian Schroeder, Gregory Farquhar, Jakob
  Foerster, and Shimon Whiteson.
\newblock Qmix: Monotonic value function factorisation for deep multi-agent
  reinforcement learning.
\newblock In \emph{International Conference on Machine Learning}, pages
  4295--4304. PMLR, 2018.

\bibitem[Sayin et~al.(2021)Sayin, Zhang, Leslie, Basar, and
  Ozdaglar]{sayin2021decentralized}
Muhammed~O Sayin, Kaiqing Zhang, David~S Leslie, Tamer Basar, and Asuman
  Ozdaglar.
\newblock Decentralized q-learning in zero-sum markov games.
\newblock \emph{arXiv preprint arXiv:2106.02748}, 2021.

\bibitem[Shalev-Shwartz et~al.(2016)Shalev-Shwartz, Shammah, and
  Shashua]{shalev2016safe}
Shai Shalev-Shwartz, Shaked Shammah, and Amnon Shashua.
\newblock Safe, multi-agent, reinforcement learning for autonomous driving.
\newblock \emph{arXiv preprint arXiv:1610.03295}, 2016.

\bibitem[Shapley(1953)]{shapley1953stochastic}
Lloyd~S Shapley.
\newblock Stochastic games.
\newblock \emph{Proceedings of the national academy of sciences}, 39\penalty0
  (10):\penalty0 1095--1100, 1953.

\bibitem[Sidford et~al.(2020)Sidford, Wang, Yang, and Ye]{sidford2020solving}
Aaron Sidford, Mengdi Wang, Lin Yang, and Yinyu Ye.
\newblock Solving discounted stochastic two-player games with near-optimal time
  and sample complexity.
\newblock In \emph{International Conference on Artificial Intelligence and
  Statistics}, pages 2992--3002. PMLR, 2020.

\bibitem[Silver et~al.(2016)Silver, Huang, Maddison, Guez, Sifre, Van
  Den~Driessche, Schrittwieser, Antonoglou, Panneershelvam, Lanctot,
  et~al.]{silver2016mastering}
David Silver, Aja Huang, Chris~J Maddison, Arthur Guez, Laurent Sifre, George
  Van Den~Driessche, Julian Schrittwieser, Ioannis Antonoglou, Veda
  Panneershelvam, Marc Lanctot, et~al.
\newblock Mastering the game of go with deep neural networks and tree search.
\newblock \emph{nature}, 529\penalty0 (7587):\penalty0 484, 2016.

\bibitem[Silver et~al.(2017)Silver, Schrittwieser, Simonyan, Antonoglou, Huang,
  Guez, Hubert, Baker, Lai, Bolton, et~al.]{silver2017mastering}
David Silver, Julian Schrittwieser, Karen Simonyan, Ioannis Antonoglou, Aja
  Huang, Arthur Guez, Thomas Hubert, Lucas Baker, Matthew Lai, Adrian Bolton,
  et~al.
\newblock Mastering the game of go without human knowledge.
\newblock \emph{nature}, 550\penalty0 (7676):\penalty0 354--359, 2017.

\bibitem[Son et~al.(2019)Son, Kim, Kang, Hostallero, and Yi]{son2019qtran}
Kyunghwan Son, Daewoo Kim, Wan~Ju Kang, David~Earl Hostallero, and Yung Yi.
\newblock Qtran: Learning to factorize with transformation for cooperative
  multi-agent reinforcement learning.
\newblock In \emph{International Conference on Machine Learning}, pages
  5887--5896. PMLR, 2019.

\bibitem[Song et~al.(2021)Song, Mei, and Bai]{song2021can}
Ziang Song, Song Mei, and Yu~Bai.
\newblock When can we learn general-sum markov games with a large number of
  players sample-efficiently?
\newblock \emph{arXiv preprint arXiv:2110.04184}, 2021.

\bibitem[Strehl et~al.(2006)Strehl, Li, Wiewiora, Langford, and
  Littman]{strehl2006pac}
Alexander~L Strehl, Lihong Li, Eric Wiewiora, John Langford, and Michael~L
  Littman.
\newblock {PAC} model-free reinforcement learning.
\newblock In \emph{International Conference on Machine Learning}, pages
  881--888, 2006.

\bibitem[Sunehag et~al.(2017)Sunehag, Lever, Gruslys, Czarnecki, Zambaldi,
  Jaderberg, Lanctot, Sonnerat, Leibo, Tuyls, et~al.]{sunehag2017value}
Peter Sunehag, Guy Lever, Audrunas Gruslys, Wojciech~Marian Czarnecki, Vinicius
  Zambaldi, Max Jaderberg, Marc Lanctot, Nicolas Sonnerat, Joel~Z Leibo, Karl
  Tuyls, et~al.
\newblock Value-decomposition networks for cooperative multi-agent learning.
\newblock \emph{arXiv preprint arXiv:1706.05296}, 2017.

\bibitem[Tian et~al.(2021)Tian, Wang, Yu, and Sra]{tian2021online}
Yi~Tian, Yuanhao Wang, Tiancheng Yu, and Suvrit Sra.
\newblock Online learning in unknown markov games.
\newblock In \emph{International Conference on Machine Learning}, pages
  10279--10288. PMLR, 2021.

\bibitem[Vinyals et~al.(2019)Vinyals, Babuschkin, Czarnecki, Mathieu, Dudzik,
  Chung, Choi, Powell, Ewalds, Georgiev, et~al.]{vinyals2019grandmaster}
Oriol Vinyals, Igor Babuschkin, Wojciech~M Czarnecki, Micha{\"e}l Mathieu,
  Andrew Dudzik, Junyoung Chung, David~H Choi, Richard Powell, Timo Ewalds,
  Petko Georgiev, et~al.
\newblock Grandmaster level in starcraft ii using multi-agent reinforcement
  learning.
\newblock \emph{Nature}, 575\penalty0 (7782):\penalty0 350--354, 2019.

\bibitem[Wei et~al.(2017)Wei, Hong, and Lu]{wei2017online}
Chen-Yu Wei, Yi-Te Hong, and Chi-Jen Lu.
\newblock Online reinforcement learning in stochastic games.
\newblock In \emph{Advances in Neural Information Processing Systems}, pages
  4987--4997, 2017.

\bibitem[Wei et~al.(2020)Wei, Lee, Zhang, and Luo]{wei2020linear}
Chen-Yu Wei, Chung-Wei Lee, Mengxiao Zhang, and Haipeng Luo.
\newblock Linear last-iterate convergence in constrained saddle-point
  optimization.
\newblock \emph{arXiv e-prints}, pages arXiv--2006, 2020.

\bibitem[Wei et~al.(2021)Wei, Lee, Zhang, and Luo]{wei2021last}
Chen-Yu Wei, Chung-Wei Lee, Mengxiao Zhang, and Haipeng Luo.
\newblock Last-iterate convergence of decentralized optimistic gradient
  descent/ascent in infinite-horizon competitive markov games.
\newblock \emph{arXiv preprint arXiv:2102.04540}, 2021.

\bibitem[Xie et~al.(2020)Xie, Chen, Wang, and Yang]{xie2020learning}
Qiaomin Xie, Yudong Chen, Zhaoran Wang, and Zhuoran Yang.
\newblock Learning zero-sum simultaneous-move markov games using function
  approximation and correlated equilibrium.
\newblock \emph{arXiv preprint arXiv:2002.07066}, 2020.

\bibitem[Zanette et~al.(2020)Zanette, Lazaric, Kochenderfer, and
  Brunskill]{zanette2020learning}
Andrea Zanette, Alessandro Lazaric, Mykel Kochenderfer, and Emma Brunskill.
\newblock Learning near optimal policies with low inherent bellman error.
\newblock In \emph{International Conference on Machine Learning}, pages
  10978--10989. PMLR, 2020.

\bibitem[Zhang et~al.(2018)Zhang, Yang, Liu, Zhang, and Basar]{zhang2018fully}
Kaiqing Zhang, Zhuoran Yang, Han Liu, Tong Zhang, and Tamer Basar.
\newblock Fully decentralized multi-agent reinforcement learning with networked
  agents.
\newblock In \emph{International Conference on Machine Learning}, pages
  5872--5881. PMLR, 2018.

\bibitem[Zhang et~al.(2020)Zhang, Kakade, Ba{\c{s}}ar, and
  Yang]{zhang2020model}
Kaiqing Zhang, Sham~M Kakade, Tamer Ba{\c{s}}ar, and Lin~F Yang.
\newblock Model-based multi-agent rl in zero-sum markov games with near-optimal
  sample complexity.
\newblock \emph{arXiv preprint arXiv:2007.07461}, 2020.

\bibitem[Zinkevich et~al.(2007)Zinkevich, Johanson, Bowling, and
  Piccione]{zinkevich2007regret}
Martin Zinkevich, Michael Johanson, Michael Bowling, and Carmelo Piccione.
\newblock Regret minimization in games with incomplete information.
\newblock \emph{Advances in neural information processing systems},
  20:\penalty0 1729--1736, 2007.

\end{thebibliography}

\clearpage

\appendix

\section{Notations and Basic Lemmas}
\label{app:notation}

\renewcommand{\a}{\bm{a}}

\subsection{Notations}

In this subsection, we introduce some notations that will be frequently used in appendixes. Recall that we use $V^k, N^k, \pi^k$ to denote the value, counter and policy maintained by V-learning algorithm at \emph{the beginning} of the episode $k$.

We also introduce a new policy $\hat \pi_h^k$ for a single agent (defined by its execution in Algorithm \ref{algorithm:V-sampling}), which can be viewed as a part of the output policy in Algorithm \ref{algorithm:V-policy}. 
The definition of $\hat \pi_h^k$ is very similar to $\hat \pi$ except two differences: (1) $\hat \pi_h^k$ is a policy for step $h,\ldots,H$ while $\hat\pi$ is a policy for step $1,\ldots,H$; (2) in $\hat\pi$ the initial value of $k$ is sampled uniformly at random from $[K]$ at the very beginning while in $\hat \pi_h^k$ the initial value of $k$ is given.

We remark that $\hat \pi_h^k$ is a non-Markov policy that does not depends on history before to the $h\th$ step. In symbol, we can express this class of policy as $\pi_j \defeq \{ \pi_{j,h'}: \Omega \times (\cS \times \cA)^{h'-h}\times \cS \rightarrow \cA_j \}_{h'=h}^H$. We call this class of policy the \emph{policy starting from the $h\th$ step}, and denote it as $\Pi_h$. Similar to Section \ref{sec:prelim}, we can also define joint policy $\pi = \pi_1 \odot \ldots \odot \pi_m$ and product policy $\pi = \pi_1 \times \ldots \times \pi_m$ for policies in $\Pi_h$. We can also define value $V^{\pi}_h(s)$ for joint policy $\pi \in \Pi_h$ as
\begin{equation*} 
 \textstyle V^{\pi}_{i, h}(s) \defeq \E_{\pi}\left[\left.\sum_{h' =
        h}^H r_{i,h'}(s_{h'}, \bm{a}_{h'}) \right| s_h=s \right].
\end{equation*}
This allows us to define the corresponding best response of $\pi_{-i}$ as the maximizer of $\max_{\pi'_i \in \Pi_h} V_{i,h}^{\pi'_i \times \pi_{-i}}(s)$. We also denote this maximum value as $V_{i,h}^{\dagger, \pi_{-i}}(s)$. We define the strategy modification for policies starting from the $h\th$ step as
$\phi_i:=\{\phi_{i,h'}: (\cS \times \cA)^{h'-h}\times \cS \times \cA_i \rightarrow \cA_i\}_{h'=h}^H$, and denote the set of such strategy modification as $\Phi_h$.

Finally, for simplicity of notation, we  define two operators $\P$ and $\D$ as follows:
\begin{equation}
	\begin{cases}
	\P_h[V](s,\a) = \E_{s'\sim\P_h(\cdot \mid s,\a)}[V(s')],\\
	\D_{\pi}[Q](s) = \E_{\a\sim\pi(\cdot\mid s)} [Q(s,\a)],
	\end{cases}
\end{equation}
for any value function $V$, $Q$ and any one-step Markov policy $\pi$.

\begin{algorithm}[t]
   \caption{\textsc{Executing Policy $\hat{\pi}_{h}^{k}$}}
   \label{algorithm:V-sampling}
   \begin{algorithmic}[1]
      \FOR{step $h'=h, h+1, \dots,H$}
      \STATE observe $s_{h'}$, and set $t \setto N^{k}_{h'}(s_{h'})$.
      \STATE set $k \setto k^i_{h'}(s_{h'})$, where $i\in [t]$ is sampled with probability $\alpha^i_t$. 
      \STATE take action $a_{h'} \sim \pi^{k}_{h'}(\cdot|s_{h'})$.
      \ENDFOR
   \end{algorithmic}
\end{algorithm}

\subsection{Basic lemmas}
We first present a proposition which clarify the relations among the three different kind of equilibria.
In particular, we show that, similar to strategic games, we also have Nash $\subset$ CE $\subset$ CCE in Markov games.
\begin{proposition}[Nash $\subset$ CE $\subset$ CCE]\label{prop:nash-ce-cce}
In Markov games, any $\epsilon$-approximate Nash equilibrium is an $\epsilon$-approximate CE, and any $\epsilon$-approximate CE is an $\epsilon$-approximate CCE.
\end{proposition}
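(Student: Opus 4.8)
The plan is to prove the two inclusions separately, reducing each to a comparison of the relevant deviation quantities appearing in Definitions \ref{def:NE}, \ref{def:CCE}, and \ref{def:CE}. Since all three definitions measure the same quantity $V_{i,1}^{\hat\pi}(s_1)$ being subtracted off, and differ only in the best-deviation term, it suffices to compare these best-deviation terms. For the inclusion Nash $\subset$ CE, I first observe that a Nash equilibrium is by definition a \emph{product} policy, and for a product policy the shared randomness $\omega$ factorizes as $(\omega_1,\ldots,\omega_m)$ with each $\pi_i$ depending only on $\omega_i$. Under this independence, a strategy modification $\phi_i \diamond \pi_i$ cannot exploit any correlation with $\pi_{-i}$, so applying $\phi_i$ and then marginalizing over $\omega_i$ yields some independent policy $\pi_i'$ of the $i\th$ player. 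Hence $\max_{\phi_i} V_{i,1}^{(\phi_i\diamond\pi_i)\odot\pi_{-i}}(s_1) \le \max_{\pi_i'} V_{i,1}^{\pi_i' \times \pi_{-i}}(s_1) = V_{i,1}^{\dagger,\pi_{-i}}(s_1)$; combined with the trivial reverse inequality (taking $\phi_i$ to ignore its input and output the best-response action) this gives equality, so the CE-gap equals the Nash-gap for product policies, and the $\epsilon$-approximate statement follows immediately.

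For the inclusion CE $\subset$ CCE, I would use the general relation already asserted in the Preliminaries, namely that for any joint policy $\pi$ one has $\max_{\phi_i} V_{i,1}^{(\phi_i\diamond\pi_i)\odot\pi_{-i}}(s_1) \ge \max_{\pi_i'} V_{i,1}^{\pi_i'\times\pi_{-i}}(s_1) = V_{i,1}^{\dagger,\pi_{-i}}(s_1)$. This is exactly the statement that the best strategy modification is at least as powerful as the best response, which the excerpt proves (or states) just after the definition of strategy modification. Taking the maximum over $i\in[m]$ and subtracting $V_{i,1}^{\hat\pi}(s_1)$ shows that the CE-gap dominates the CCE-gap, so any $\epsilon$-approximate CE (CE-gap $\le\epsilon$) is automatically an $\epsilon$-approximate CCE (CCE-gap $\le\epsilon$).

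I expect the main obstacle to be the first inclusion, specifically making rigorous the claim that for a product policy a strategy modification cannot outperform the best response. The subtlety is that a strategy modification $\phi_{i,h}$ is allowed to depend on the full history $\tau_h = (s_1,\bm a_1,\ldots,s_h)$, which includes the \emph{realized} actions of the other players, whereas a best response may only condition on its own information. One might worry that observing $\pi_{-i}$'s realized actions gives $\phi_i$ extra leverage. The resolution is that in a product policy, $\pi_{-i}$'s actions are statistically independent of $\omega_i$ given the state trajectory, so conditioning on them provides no predictive information about future play by $-i$ beyond what the state already encodes; thus the modified policy's value can be reproduced by an independent (history-dependent, hence by the best-response remark deterministic Markov-sufficient) policy of player $i$. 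I would formalize this by conditioning on $\omega_{-i}$ and arguing that for each fixed realization the induced single-agent problem for player $i$ is an MDP whose optimum is achieved by an independent policy, then averaging over $\omega_{-i}$. The CCE inclusion, by contrast, is essentially immediate from the best-response-versus-modification inequality and requires no real work.
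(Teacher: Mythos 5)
Your overall structure matches the paper's: both inclusions reduce to comparing the three deviation benchmarks against the common term $V_{i,1}^{\pi}(s_1)$; the CE $\subset$ CCE direction follows from the dominance of the best strategy modification over the best response for arbitrary joint policies (the paper re-derives this by restricting to modifications that ignore the input action, but invoking the statement from the Preliminaries is equivalent); and the Nash $\subset$ CE direction rests on showing that for a product policy the best strategy modification is no more powerful than the best response.

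The one place you go astray is your proposed formalization of the Nash $\subset$ CE step. Your first justification is the right one and is exactly the paper's: for a product policy, $\phi_i \diamond \pi_i$ draws its randomness only from $\omega_i$, which is independent of $\omega_{-i}$, so $\phi_i \diamond \pi_i$ is \emph{itself} an admissible independent policy $\pi_i'$. (Recall that policies, and hence best responses, are allowed to depend on the full history in $(\cS\times\cA)^{h-1}\times\cS$, including the realized actions of the other players --- this is why the ``extra leverage'' worry is vacuous, not because those realized actions carry no predictive information; for history-dependent $\pi_{-i}$ they certainly can.) Hence $\max_{\phi_i} V_{i,1}^{(\phi_i\diamond\pi_i)\times\pi_{-i}}(s_1) \le \max_{\pi_i'} V_{i,1}^{\pi_i'\times\pi_{-i}}(s_1)$ with no further work. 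By contrast, the formalization you then propose --- condition on $\omega_{-i}$, optimize the induced MDP for each realization, and average --- does not close the argument: the per-realization optimizer depends on $\omega_{-i}$, which player $i$ does not observe, so this route only yields the bound $\E_{\omega_{-i}}\brac{\max_{\mu} V(\mu\mid\omega_{-i})}$, which upper-bounds the left-hand side but in general strictly exceeds $\max_{\pi_i'} V_{i,1}^{\pi_i'\times\pi_{-i}}(s_1)$ (wrong direction of Jensen). Drop that detour and keep the direct one-line argument.
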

\begin{proof}
We prove two claims separately. 

For Nash $\subset$ CE, let $\pi=\pi_1\times\pi_2\times\cdots\pi_m$ be an $\epsilon$-approximate Nash equilibrium, then
\begin{align*}
 \max_{\phi_i} V_{i, 1}^{(\phi_i \diamond \pi_i) \times \pi_{-i}}(s_1)
\stackrel{(a)}{=} \max_{\pi_i'}V_{i, 1}^{\pi_i' \times \pi_{-i}}(s_1) \stackrel{(b)}{\le}
V_{i, 1}^{\pi}(s_1)+\epsilon,
\end{align*}
Step (a) is because that $\pi$ is a product policy, where the randomness of different agents are completely independent. In this case, maximizing over strategy modification $\phi_i$ is equivalent to maximizing over a new independent policy. Step (b) directly follows from $\pi$ being an $\epsilon$-approximate Nash equilibrium. By definition, this proves that $\pi$ is also an $\epsilon$-approximate CE.

For CE $\subset$ CCE, let $\pi=\pi_1\odot\pi_2\odot\cdots\pi_m$ be an $\epsilon$-approximate CE, then we have
\begin{align*}
\max_{\pi_i'}V_{i, 1}^{\pi_i' \times \pi_{-i}}(s_1)
\stackrel{(c)}{\le}
\max_{\phi_i} V_{i, 1}^{(\phi_i \diamond \pi_i) \odot \pi_{-i}}(s_1) 
\stackrel{(d)}{\le}
V_{i, 1}^{\pi}(s_1)+\epsilon,
\end{align*}
Step (c) is because by definition of strategy modification $\phi_i:=\{\phi_{i,h}: (\cS \times \cA)^{h-1}\times \cS \times \cA_i \rightarrow \cA_i\}$, we can consider a subset of strategy modification $\phi'_i:=\{\phi'_{i,h}: (\cS \times \cA)^{h-1}\times \cS \rightarrow \cA_i\}$ which modifies the policy ignoring whatever the action $\pi_i$ takes. It is not hard to see that maxmizing over the strategy modification in this subset is equivalent to maximizing over a new independent policy $\pi_i'$. Therefore, maximizing over all strategy modification is greater or equal to maximizing over $\pi_i'$. Finally, step (d) follows from $\pi$ being an $\epsilon$-approximate CE. By definition, this proves that $\pi$ is also an $\epsilon$-approximate CCE.
\end{proof}


Next, we present some basic lemmas that will be used in the proofs of different theorems.
We start by introducing some useful properties of sequence $\{\alpha_t^i\}$ defined in \eqref{eq:hyper_V}.

\begin{lemma}(\citep[Lemma 4.1]{jin2018q},\citep[Lemma 2]{tian2021online})
\label{lem:step_size}
The following properties hold for $\alpha _{t}^{i}$:
\begin{enumerate}
\item \label{lem:lr_hoeffding}
$\frac{1}{\sqrt{t}} \le \sum_{i=1}^t \frac{\alpha^i_t}{\sqrt{i}} \le \frac{2}{\sqrt{t}}$ and $\frac{1}{t} \le \sum_{i=1}^t \frac{\alpha^i_t}{i} \le \frac{2}{t}$ for every $t \ge 1$.
\item \label{lem:lr_property0}
$\max_{i\in[t]} \alpha^i_t \le \frac{2H}{t}$ and $\sum_{i=1}^t (\alpha^i_t)^2 \le \frac{2H}{t}$ for every $t \ge 1$.
\item \label{lem:lr_property}
$\sum_{t=i}^\infty \alpha^i_t = 1 + \frac{1}{H}$ for every $i \geq 1$.
\end{enumerate}
\end{lemma}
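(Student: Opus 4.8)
The plan is to reduce all three properties to two elementary facts: the one-step recursion for the weights and their normalization. From \eqref{eq:hyper_V}, for $1 \le i < t$ one has $\alpha_t^i = (1-\alpha_t)\alpha_{t-1}^i$ while $\alpha_t^t = \alpha_t$; moreover $\alpha_1 = (H+1)/(H+1) = 1$, so $1-\alpha_1 = 0$ forces $\alpha_t^0 = \prod_{j=1}^t(1-\alpha_j) = 0$ for every $t \ge 1$. A one-line induction on $t$ using the recursion then yields the normalization $\sum_{i=1}^t \alpha_t^i = (1-\alpha_t)\cdot 1 + \alpha_t = 1$. I would record this first, since Properties 2 and 3 both lean on it.

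For Property 3 I would pass to a closed form. Writing $1-\alpha_j = (j-1)/(H+j)$ and collapsing the product gives $\alpha_t^i = (H+1)\,\frac{(t-1)!\,(H+i-1)!}{(i-1)!\,(H+t)!}$. The crux is the partial-fraction/telescoping identity $\frac{(t-1)!}{(H+t)!} = \frac{1}{H}\!\left[\frac{(t-1)!}{(H+t-1)!} - \frac{t!}{(H+t)!}\right]$, under which $\sum_{t=i}^\infty \frac{(t-1)!}{(H+t)!}$ telescopes to $\frac{1}{H}\frac{(i-1)!}{(H+i-1)!}$ (the tail term vanishes for $H \ge 1$). Substituting back cancels every factorial and leaves $\sum_{t=i}^\infty \alpha_t^i = (H+1)/H = 1 + 1/H$.

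For Property 2 I would first show $\alpha_t^i$ is nondecreasing in $i$: the ratio $\alpha_t^i/\alpha_t^{i-1} = \alpha_i/[\alpha_{i-1}(1-\alpha_i)] = (H+i-1)/(i-1) \ge 1$. Hence the maximum over $i \in [t]$ is attained at $i = t$, so $\max_i \alpha_t^i = \alpha_t = (H+1)/(H+t) \le 2H/t$ using $H \ge 1$. The bound on the sum of squares is then free from normalization, since $\sum_{i=1}^t (\alpha_t^i)^2 \le (\max_i \alpha_t^i)\sum_{i=1}^t \alpha_t^i = \alpha_t \le 2H/t$.

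Property 1 is where I expect to spend the most effort, and I would argue by induction on $t$. Setting $b_t := \sum_{i=1}^t \alpha_t^i/\sqrt i$, the recursion gives $b_t = \alpha_t/\sqrt t + (1-\alpha_t) b_{t-1}$ with base case $b_1 = 1 \in [1,2]$. For the upper bound $b_t \le 2/\sqrt t$, substituting $b_{t-1} \le 2/\sqrt{t-1}$ and clearing denominators reduces the claim to $2\sqrt{t(t-1)} \le H + 2t - 1$, which holds because $4t(t-1) = (2t-1)^2 - 1 \le (2t-1)^2$ and $H \ge 0$; the lower bound $b_t \ge 1/\sqrt t$ follows immediately from $1/\sqrt{t-1} \ge 1/\sqrt t$. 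The sum $\sum_i \alpha_t^i/i$ is handled by the identical induction with $\sqrt i$ replaced by $i$, where the reduced inequality is merely the linear $H+1 \le 2H$. The only real obstacle is bookkeeping: arranging each induction so that the target inequality closes at the inductive step; no idea beyond the recursion and normalization is needed.
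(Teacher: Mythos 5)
Your proposal is correct, and in fact it supplies more than the paper does: the paper states this lemma with only citations to \citet{jin2018q} and \citet{tian2021online} and gives no proof of its own. Your argument is the standard one and all the key steps check out: the recursion $\alpha_t^i=(1-\alpha_t)\alpha_{t-1}^i$ with $\alpha_t^t=\alpha_t$ and the normalization $\sum_{i=1}^t\alpha_t^i=1$; the closed form $\alpha_t^i=(H+1)\frac{(t-1)!\,(H+i-1)!}{(i-1)!\,(H+t)!}$ together with the telescoping identity (whose tail $\frac{T!}{(H+T)!}=\prod_{j=1}^{H}(T+j)^{-1}\to 0$ indeed vanishes) for Property 3; the monotonicity ratio $\frac{H+i-1}{i-1}\ge 1$ plus H\"older for Property 2; and the inductions for Property 1, where the reduced inequalities $2\sqrt{t(t-1)}\le 2t-1\le H+2t-1$ and $H+1\le 2H$ both hold for $H\ge 1$. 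No gaps.
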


Finally, we have the following lemma which express the $\tilde{V}$ maintained in V-learning in the form of weighted sum of earlier updates. 

\begin{lemma}
\label{lem:2p0s-mono-V-relation}
Consider an arbitrary fixed $(s,h,k)$ tuple.
   Let $t=N_{h}^{k}\left( s \right) $ denote the number of times $s$ is visited at step $h$ at the beginning of episode $k$, and suppose $s$ was previously visited at episodes $k^1,\ldots, k^t < k$ at the $h$-th step. Then the two V-values $\tilde V$ and $V$ in Algorithm \ref{algorithm:V} satisfy the following equation:
   \begin{equation}
      \tilde{V}_{j,h}^{k}(s) = 
      \alpha _{t}^{0}(H-h+1)+ \sum_{i=1}^t{\alpha _{t}^{i}\left[ r_{j,h}(s,\a_h^{k^i})+ {V}_{j,h+1}^{k^i}(s_{h+1}^{k^i}) + {\beta}_{j,i} \right]}, \quad j\in[m].
   \end{equation}
\end{lemma}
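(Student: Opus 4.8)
The plan is to prove the identity by induction on the number of visits $t = N_h^k(s)$, unrolling the incremental update in Line \ref{line:update_V} of Algorithm \ref{algorithm:V}. The entire argument rests on two multiplicative recursions satisfied by the weights in \eqref{eq:hyper_V}. Since $\alpha_t^0 = \prod_{j=1}^t(1-\alpha_j)$ and $\alpha_t^i = \alpha_i\prod_{j=i+1}^t(1-\alpha_j)$, one reads off $\alpha_t^0 = (1-\alpha_t)\alpha_{t-1}^0$, $\alpha_t^i = (1-\alpha_t)\alpha_{t-1}^i$ for $1\le i \le t-1$, and $\alpha_t^t = \alpha_t$ (empty product equal to one). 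These are exactly the relations needed to re-index each earlier contribution after one more incremental step. The identity is claimed for each player $j$ separately, using the shared visit structure (the episodes $k^1,\ldots,k^t$ and the states $s_{h+1}^{k^i}$ are common across players).

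For the base case $t=0$, no visit to $(s,h)$ has occurred before episode $k$, so $\tilde V_{j,h}^k(s)$ still equals its initial value $H+1-h$. The right-hand side reduces to $\alpha_0^0(H-h+1)$ with an empty sum, and $\alpha_0^0 = \prod_{j=1}^0(1-\alpha_j)=1$, so both sides agree.

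For the inductive step, I would fix the episode $k^t$ at which $s$ is visited for the $t$-th time at step $h$, and let $\tilde V_{\mathrm{old}}$ denote the value of $\tilde V_{j,h}(s)$ at the beginning of $k^t$ (that is, after exactly $t-1$ visits). By the induction hypothesis, $\tilde V_{\mathrm{old}} = \alpha_{t-1}^0(H-h+1) + \sum_{i=1}^{t-1}\alpha_{t-1}^i[\,r_{j,h}(s,\a_h^{k^i}) + V_{j,h+1}^{k^i}(s_{h+1}^{k^i}) + \beta_{j,i}\,]$. At this visit the counter becomes $t$ and Line \ref{line:update_V} sets $\tilde V_{\mathrm{new}} = (1-\alpha_t)\tilde V_{\mathrm{old}} + \alpha_t\big(r_{j,h}(s,\a_h^{k^t}) + V_{j,h+1}^{k^t}(s_{h+1}^{k^t}) + \beta_{j,t}\big)$. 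Multiplying the induction hypothesis through by $(1-\alpha_t)$ and applying the recursions above turns every $\alpha_{t-1}^i$ into $\alpha_t^i$ and $\alpha_{t-1}^0$ into $\alpha_t^0$, while the freshly added term carries coefficient $\alpha_t = \alpha_t^t$. Collecting terms yields precisely the claimed expression with $t$ in place of $t-1$. Finally, because $s$ is not visited at step $h$ between episode $k^t$ and the beginning of episode $k$, the value $\tilde V_{j,h}(s)$ is unchanged over that interval and equals $\tilde V_{j,h}^k(s)$, which closes the induction.

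I do not expect a genuine obstacle; the only point demanding care is the bookkeeping of superscripts—ensuring that the index $k^i$ attaches to the correct episode's reward $r_{j,h}(s,\a_h^{k^i})$, next-state value $V_{j,h+1}^{k^i}(s_{h+1}^{k^i})$, and bonus $\beta_{j,i}$, and that the learning-rate index $t$ in the update equals the post-increment counter set in Line \ref{line:update_counter}. The appearance of the truncated value $V_{j,h+1}^{k^i}$ (rather than $\tilde V$) on the right-hand side is automatic, since Line \ref{line:update_V} reads off the stored $V_{h+1}$ at the time of the $i$-th visit.
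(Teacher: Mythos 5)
Your proof is correct and follows the same route the paper takes: the paper simply states that the identity ``follows directly from the update rule,'' and your induction on $t$ using the recursions $\alpha_t^0=(1-\alpha_t)\alpha_{t-1}^0$, $\alpha_t^i=(1-\alpha_t)\alpha_{t-1}^i$, $\alpha_t^t=\alpha_t$ is exactly the omitted unrolling. The bookkeeping (base case via $\alpha_0^0=1$, constancy of $\tilde V_{j,h}(s)$ between visits, and the truncated $V_{j,h+1}^{k^i}$ appearing on the right-hand side) is all handled properly.
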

\begin{proof}
	The proof  follows directly from the update rule in Line \ref{line:update_V} Algorithm \ref{algorithm:V}. 
	Note that $\alpha _{t}^{0}$ is equal to zero for any $t>1$ and equal to one for $t=0$.
\end{proof}

\section{Proofs for Computing CCE in General-sum MGs}

\newcommand{\uhat}{\underaccent{\tilde}}
In this section, we give complete proof of Theorem~\ref{thm:CCE_main_result}. To avoid repeatedly state the condition of Theorem~\ref{thm:CCE_main_result} in each lemma, we will use
\begin{itemize}
  \item Condition of the adversarial bandit sub-procudure (Assumption~\ref{ass:external_regret}) and
  \item Set the bonus $\{\beta_{j, t}\}_{t=1}^K$ of the $j\th$ player so that $\sum_{i=1}^t\alpha_t^i\beta_{j,i}=\Theta(H\AVGReg(A_j, t, \iota)+\sqrt{H^3\iota/t})$ for any $t \in [K]$.
\end{itemize}
throughout the whole section.

The following Lemma is a direct consequence of Assumption \ref{ass:external_regret}, which will play an important role in our later analysis.

\begin{lemma}
 \label{lem:2p0s-momo-local-regret}
 Under Assumption \ref{ass:external_regret}, the following event is true with probability at least $1-\delta$: for any $(s,h,k) \in \cS \times [H] \times [K]$,  let $t=N_{h}^{k}\left( s \right) $ and suppose $s$ was previously visited at episodes $k^1,\ldots, k^t < k$ at the $h$-th step, then for all $j\in[m]$
   $$
\underset{\mu}{\max}\sum_{i=1}^t{\alpha _{t}^{i}\D_{\mu \times \pi_{ -j,h}^{k^i}} \left( r_{j,h} +\P_h {V}_{j,h+1}^{k^i} \right)\left( s \right) }
-\sum_{i=1}^t{\alpha _{t}^{i}\D_{\pi_{h}^{k^i}} \left( r_{j,h} +\P_h {V}_{j,h+1}^{k^i} \right)\left( s \right)}\le
H \AVGReg(A_j,t,\iota),
$$
where $\iota =\log(mHSAK/\delta)$.
\end{lemma}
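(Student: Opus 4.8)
The plan is to recognize that Lemma~\ref{lem:2p0s-momo-local-regret} is nothing more than a translation of the abstract weighted external-regret guarantee (Assumption~\ref{ass:external_regret}) into the language of the $(s,h)$-th adversarial bandit instance embedded in V-learning, followed by a union bound over all $(s,h)$ pairs. The key conceptual step is to identify, for a fixed $(s,h)$, the correspondence between the bandit protocol (Protocol~\ref{pro:adversarial_bandit}) and the updates V-learning performs at that state-step. First I would fix $(s,h)$ and enumerate the episodes $k^1 < k^2 < \cdots$ at which $s$ is visited at step $h$; the $i$-th such visit is exactly the $i$-th round of the $(s,h)$-th bandit. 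At round $i$, the strategy played is $\theta_i = \pi_{j,h}^{k^i}(\cdot\mid s)$, and the loss vector fed to player $j$'s bandit is the loss corresponding to the feedback $\frac{H - r_{j,h} - V_{j,h+1}(s_{h+1})}{H}$ in Line~\ref{line:bandits}.

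The central identification is that the \emph{expected} loss vector $\ell_i \in [0,1]^{A_j}$ at round $i$ is $\ell_i(a) = \frac{1}{H}\big(H - (r_{j,h} + \P_h V_{j,h+1}^{k^i})(s, a, \pi_{-j,h}^{k^i})\big)$, i.e.\ the conditional expectation (over the randomness of the other players' actions and the transition) of the realized bandit feedback given player $j$ plays action $a$. With this choice, $\langle \theta_i, \ell_i\rangle = \frac{1}{H}\big(H - \D_{\pi_h^{k^i}}(r_{j,h}+\P_h V_{j,h+1}^{k^i})(s)\big)$ and $\langle \theta, \ell_i\rangle = \frac{1}{H}\big(H - \D_{\theta \times \pi_{-j,h}^{k^i}}(r_{j,h}+\P_h V_{j,h+1}^{k^i})(s)\big)$ for any fixed $\theta = \mu$. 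Subtracting, the constant $H$ and the common $1/H$ cancel, and the regret expression
\[
\max_{\mu}\sum_{i=1}^t \alpha_t^i\big[\langle \mu, \ell_i\rangle - \langle\theta_i, \ell_i\rangle\big]
\]
becomes \emph{exactly} $\tfrac{1}{H}$ times the left-hand side of the lemma. Applying Assumption~\ref{ass:external_regret} (which, note, is stated with the roles of $\theta$ and $\theta_i$ as in \eqref{eq:external_reg}, bounding $\max_\theta \sum_i \alpha_t^i[\langle\theta_i,\ell_i\rangle - \langle\theta,\ell_i\rangle]$; I would just be careful that the sign convention matches, treating the feedback as a \emph{loss} so that maximizing the deviation $\mu$ corresponds to the benchmark $\theta$ in the assumption) yields the bound $\AVGReg(A_j, t, \iota/\ldots)$ with probability $1-\delta'$, and multiplying through by $H$ gives the stated $H\cdot\AVGReg(A_j,t,\iota)$.

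The remaining step is the probabilistic bookkeeping. Assumption~\ref{ass:external_regret} holds for each individual bandit instance and each $t$ with probability $1-\delta'$; I would set $\delta' = \delta/(mSH)$ (or similar) and take a union bound over all $m$ players, all $S$ states, all $H$ steps, and all possible visit-counts $t \le K$, which is absorbed into the logarithmic factor $\iota = \log(mHSAK/\delta)$ appearing inside $\AVGReg$. I expect the main subtlety—rather than a true obstacle—to be verifying that the unbiasedness condition $\E[\tilde\ell_i(b_i)\mid \ell_i, b_i] = \ell_i(b_i)$ required by the bandit protocol is genuinely satisfied here: the feedback $\frac{H - r_{j,h} - V_{j,h+1}(s_{h+1})}{H}$ is random through $s_{h+1}$ and through the \emph{other players'} actions $\bm{a}_{-j}$, so I must confirm that conditioning on the history up to the $i$-th visit (hence on the fixed policies $\pi_{-j,h}^{k^i}$ and $V_{j,h+1}^{k^i}$) makes the realized feedback an unbiased estimate of the intended loss vector $\ell_i$, with the $\D$ and $\P_h$ operators supplying exactly the two expectations being averaged. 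This conditioning argument, together with the boundedness of $V_{j,h+1}^{k^i} \in [0, H-h]$ ensuring $\ell_i \in [0,1]^{A_j}$, is what legitimizes the direct invocation of the assumption.
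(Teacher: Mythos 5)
Your proposal is correct and takes essentially the same route as the paper's proof: instantiate Assumption~\ref{ass:external_regret} on the $(s,h)$-th adversarial bandit of player $j$, where the $i$-th round is the $i$-th visit to $(s,h)$, the played strategy is $\pi_{j,h}^{k^i}(\cdot\mid s)$, the loss vector is $\frac{1}{H}\left(H - r_{j,h} - \P_h V_{j,h+1}^{k^i}\right)$ marginalized over $\pi_{-j,h}^{k^i}$ (for which the realized feedback is conditionally unbiased and in $[0,1]$), then rescale by $H$ and union bound over $(j,s,h,t)$ to get $\iota=\log(mHSAK/\delta)$. The one blemish is that your displayed regret expression transposes $\langle\mu,\ell_i\rangle$ and $\langle\theta_i,\ell_i\rangle$ relative to \eqref{eq:external_reg} --- since the feedback is a loss, the quantity that equals $1/H$ times the lemma's left-hand side is $\max_{\mu}\sum_{i=1}^t\alpha_t^i\left[\langle\theta_i,\ell_i\rangle-\langle\mu,\ell_i\rangle\right]$ --- but you explicitly flag the sign convention and correctly state the form of the assumption, so this is a typo rather than a gap.
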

\begin{proof}
By Assumption \ref{ass:external_regret} and the adversarial bandit update step in Algorithm \ref{algorithm:V}, we have that with probability at least $1-\delta$, for any $(s,h,k) \in \cS \times [H] \times [K]$,  
 $$
\underset{\mu}{\max}
\sum_{i=1}^t{\alpha _{t}^{i}\D_{\pi_{h}^{k^i}} \left(\frac{H- r_{j,h} -\P_h {V}_{j,h+1}^{k^i}}{H}\right)\left( s \right)}
-
\sum_{i=1}^t{\alpha _{t}^{i}\D_{\mu \times \pi_{ -j,h}^{k^i}} \left(\frac{H- r_{j,h} -\P_h {V}_{j,h+1}^{k^i}}{H}\right)\left( s \right) }
\le
 \AVGReg(A_j,t,\iota),
$$
which implies the desired result by simple algebraic transformation. 
\end{proof}

Then we show $V$ is actually an optimistic estimation of the value function 
of player $j$'th best response to  the output policy.

\begin{lemma}[Optimism]
    \label{lem:opt_CCE}
    For any $\delta \in (0,1]$, with probability at least $1-\delta$, for any $(s,h,k,j) \in \cS \times [H] \times [K]\times [m]$,
    $
  V_{j,h}^{k}(s) \ge  V_{j,h}^{\dag ,\hat{\pi}_{-j,h}^{k}}(s)
    $.
  \end{lemma}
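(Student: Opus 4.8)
The plan is to prove the statement by \emph{backward induction} on $h$, from the trivial base case $h=H+1$ (where both sides equal $0$) down to $h=1$, conditioning throughout on the high-probability event of Lemma~\ref{lem:2p0s-momo-local-regret} together with a martingale-concentration event described below. Fix $(s,h,k,j)$, let $t=N_h^k(s)$, and let $k^1,\dots,k^t$ index the earlier visits to $(s,h)$. Starting from the expansion of $\tilde V_{j,h}^k(s)$ in Lemma~\ref{lem:2p0s-mono-V-relation}, I would first replace the empirical quantity $\sum_{i=1}^t \alpha_t^i\big[r_{j,h}(s,\a_h^{k^i})+V_{j,h+1}^{k^i}(s_{h+1}^{k^i})\big]$ by its conditional mean $\sum_{i=1}^t \alpha_t^i\,\D_{\pi_h^{k^i}}(r_{j,h}+\P_h V_{j,h+1}^{k^i})(s)$ up to an error of order $\cO(\sqrt{H^3\iota/t})$. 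This is a standard Azuma--Hoeffding argument: since $\a_h^{k^i}\sim\pi_h^{k^i}(\cdot|s)$ and $s_{h+1}^{k^i}\sim\P_h(\cdot|s,\a_h^{k^i})$, while $V_{j,h+1}^{k^i}$ (the value at the \emph{beginning} of episode $k^i$) is measurable before these draws, the increments form a martingale difference sequence bounded by $\cO(H)$; combining the weight estimates $\sum_i(\alpha_t^i)^2\le 2H/t$ and $\max_i\alpha_t^i\le 2H/t$ from Lemma~\ref{lem:step_size} with a union bound over all $(s,h,k,j)$ and $t\le K$ gives the claimed fluctuation.

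Next I would invoke the weighted-regret guarantee of Lemma~\ref{lem:2p0s-momo-local-regret} to pass from $\sum_i\alpha_t^i\D_{\pi_h^{k^i}}(\cdots)(s)$ to $\max_\mu\sum_i\alpha_t^i\D_{\mu\times\pi_{-j,h}^{k^i}}(\cdots)(s)$ at the cost of $H\,\AVGReg(A_j,t,\iota)$, and then use the prescribed bonus $\sum_i\alpha_t^i\beta_{j,i}=\Theta\big(H\,\AVGReg(A_j,t,\iota)+\sqrt{H^3\iota/t}\big)$ to absorb \emph{both} the regret term and the concentration error. Together with $\alpha_t^0(H-h+1)\ge 0$ and the induction hypothesis $V_{j,h+1}^{k^i}\ge V_{j,h+1}^{\dag,\hat\pi_{-j,h+1}^{k^i}}=:\overline V_{j,h+1}^{k^i}$, this yields the key inequality
\[
\tilde V_{j,h}^k(s)\ \ge\ \max_{\mu\in\Delta_{\cA_j}}\sum_{i=1}^t\alpha_t^i\,\D_{\mu\times\pi_{-j,h}^{k^i}}\!\left(r_{j,h}+\P_h\overline V_{j,h+1}^{k^i}\right)(s).
\]

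It remains to show that the right-hand side upper-bounds the best-response value $\overline V_{j,h}^k(s)=V_{j,h}^{\dag,\hat\pi_{-j,h}^k}(s)$, and I expect this to be the \textbf{main obstacle}, because $\hat\pi_{-j,h}^k$ is the nested, non-Markov mixture policy defined by Algorithm~\ref{algorithm:V-sampling}, not a simple Markov policy. Unrolling one step of that algorithm, playing $\hat\pi_{-j,h}^k$ amounts to sampling an index $i$ with probability $\alpha_t^i$ (based on the observed $s$), playing the opponents' Markov distribution $\pi_{-j,h}^{k^i}$ at step $h$, and continuing with $\hat\pi_{-j,h+1}^{k^i}$. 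The crucial point is that any best-responding $\pi_j'$ must commit to a \emph{single} first-step distribution $\mu=\pi_{j,h}'(\cdot|s)$ that cannot depend on the hidden index $i$, whereas for each fixed realization $i$ its continuation — whatever it is — attains value at most $\overline V_{j,h+1}^{k^i}$ against $\hat\pi_{-j,h+1}^{k^i}$ pointwise in the next state. Hence, writing $g^i$ for the opponents' continuation under index $i$, for every $\pi_j'$ we get $\sum_i\alpha_t^i V_{j,h}^{\pi_j'\times g^i}(s)\le\sum_i\alpha_t^i\D_{\mu\times\pi_{-j,h}^{k^i}}(r_{j,h}+\P_h\overline V_{j,h+1}^{k^i})(s)$, and taking the maximum over $\pi_j'$ delivers exactly the desired bound with the single $\mu$ \emph{outside} the sum. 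Chaining this with the key inequality gives $\tilde V_{j,h}^k(s)\ge\overline V_{j,h}^k(s)$; since also $\overline V_{j,h}^k(s)\le H-h+1$, the truncation in Line~\ref{line:monotone} preserves the bound, i.e. $V_{j,h}^k(s)=\min\{H-h+1,\tilde V_{j,h}^k(s)\}\ge\overline V_{j,h}^k(s)$, closing the induction. Finally, I would rescale $\delta$ by an absolute constant so that the regret and concentration events hold jointly with probability at least $1-\delta$.
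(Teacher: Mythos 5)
Your proposal is correct and follows essentially the same route as the paper's own proof: backward induction, the weighted expansion of $\tilde V_{j,h}^k$ from Lemma~\ref{lem:2p0s-mono-V-relation}, Azuma--Hoeffding with the step-size bounds of Lemma~\ref{lem:step_size}, the per-state regret bound of Lemma~\ref{lem:2p0s-momo-local-regret} absorbed by the bonus, and then the induction hypothesis. Your treatment of the main obstacle --- that the best response to the non-Markov mixture $\hat\pi_{-j,h}^k$ must commit to a single first-step distribution $\mu$ independent of the hidden index $i$, while each continuation is dominated pointwise by $V_{j,h+1}^{\dag,\hat\pi_{-j,h+1}^{k^i}}$ --- is precisely the paper's step $(v)$ (its steps (a)--(c)), so no gap remains.
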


  \begin{proof}[Proof of Lemma~\ref{lem:opt_CCE}]
     We prove by backward induction.
      The claim is trivially satisfied for $h=H+1$. Suppose it is true for $h+1$, consider a fixed state $s$.
      It suffices to show
      $\tilde{V}_{j,h}^{k}(s)\ge V_{j,h}^{\dag ,\hat{\pi}_{-j,h}^{k}}(s)$ because
      $V_{j,h}^{k}(s) = \min\{\tilde{V}_{j,h}^{k}(s),H-h+1\}$.
      Let $t=N_{h}^{k}\left( s \right) $ and suppose $s$ was previously visited at episodes $k^1,\ldots, k^t < k$ at the $h$-th step. Then using Lemma~\ref{lem:2p0s-mono-V-relation},
  \begin{align*}
    \tilde{V}_{j,h}^{k}(s)&=\alpha _{t}^{0}H+\sum_{i=1}^t{\alpha _{t}^{i}\left[ r_{j,h}(s,\bm{a}_h^{k^i})+V_{j,h+1}^{k^i}(s_{h+1}^{k^i}) +\beta_{j,i} \right]}
  \\
  &\overset{\left( i \right)}{\ge} \sum_{i=1}^t{\alpha _{t}^{i}\D_{\pi _{h}^{k^i} }\left( r_{j,h} +\P_hV_{j,h+1}^{k^i} \right)\left( s \right)}+\sum_{i=1}^t{\alpha _{t}^{i}\beta_{j,i}}-\cO\left(\sqrt{\frac{H^3\iota}{t}}\right)
  \\
  &\overset{\left( ii \right)}{\ge} \underset{\mu}{\max}\sum_{i=1}^t{\alpha _{t}^{i}\D_{\mu \times \pi _{-j,h}^{k^i} }\left( r_{j,h} +\P_hV_{j,h+1}^{k^i} \right)\left( s \right)}+\sum_{i=1}^t{\alpha _{t}^{i}\beta_{j,i}}-\cO\left(\sqrt{\frac{H^3\iota}{t}}\right)-H \AVGReg(A_j,t,\iota)
  \\
  &\overset{\left( iii \right)}{\ge} \underset{\mu}{\max}\sum_{i=1}^t{\alpha _{t}^{i}\D_{\mu \times \pi _{-j,h}^{k^i} }\left( r_{j,h} +\P_hV_{j,h+1}^{k^i} \right)\left( s \right)}
  \\
  &\overset{\left( iv \right)}{\ge} \underset{\mu}{\max}\sum_{i=1}^t{\alpha _{t}^{i}\D_{\mu \times \pi _{-j,h}^{k^i} }\left( r_{j,h} +\P_h V_{j,h+1}^{\dag ,\hat{\pi}_{-j,h+1}^{k^i}} \right)\left( s \right)} \stackrel{(v)}{\ge} V_{j,h}^{\dag ,\hat{\pi}_{-j,h}^{k}}(s)
  \end{align*}
  where $(i)$ is by martingale concentration and Lemma~\ref{lem:lr_property0}, $(ii)$ is by Lemma~\ref{lem:2p0s-momo-local-regret}, $(iii)$ is by the definition of ${\beta}_{j,i}$, and $(iv)$ is by induction hypothesis. 

  Finally, we remark that $(v)$ is not directly from Bellman equation since $\hat{\pi}_{-j,h}^{k}$ is non-Markov policy, and the best reponse of a non-Markov policy is not necessary a Markov policy. We prove $(v)$ as follows. Recalls definitions for policies in $\Pi_h$ as in Appendix \ref{app:notation}, by the definition, we have 
  \begin{align*}
   V_{j,h}^{\dag ,\hat{\pi}_{-j,h}^{k}}(s) =& \max_{\mu \in \Pi_h} V_{j,h}^{\mu \times\hat{\pi}_{-j,h}^{k}} \\
  \stackrel{(a)}{=}& \max_{\mu_h} \max_{\mu_{(h+1):H}} \sum_{i=1}^t{\alpha _{t}^{i}\E_{\bm{a} \sim \mu_h \times \pi _{-j,h}^{k^i} }\left( r_{j,h}(s, \bm{a}) + \E_{s'} V_{j,h+1}^{\mu_{(h+1):H} ,\hat{\pi}_{-j,h+1}^{k^i}}(s, \bm{a}, s') \right)}\\
  \stackrel{(b)}{\le}& \max_{\mu_h} \sum_{i=1}^t{\alpha _{t}^{i}\E_{\bm{a} \sim \mu_h \times \pi _{-j,h}^{k^i} }\left( r_{j,h}(s, \bm{a}) + \E_{s'} \max_{\mu_{(h+1):H}} V_{j,h+1}^{\mu_{(h+1):H} ,\hat{\pi}_{-j,h+1}^{k^i}}(s, \bm{a}, s') \right)}\\
  \stackrel{(c)}{=}& 
  \max_{\mu_h}\sum_{i=1}^t\alpha _{t}^{i} \E_{\bm{a} \sim \mu_h \times \pi _{-j,h}^{k^i} }\left( r_{j,h}(s, \bm{a}) + 
  \E_{s'} V_{j,h+1}^{\dagger, \hat{\pi}_{-j,h+1}^{k^i}}(s') \right)\\
  =& \underset{\mu}{\max}\sum_{i=1}^t{\alpha _{t}^{i}\D_{\mu \times \pi _{-j,h}^{k^i} }\left( r_{j,h} +\P_h V_{j,h+1}^{\dag ,\hat{\pi}_{-j,h+1}^{k^i}} \right)\left( s \right)}
  \end{align*}
  where $V_{j,h+1}^\pi(s, \bm{a}, s')$ for policy $\pi \in \Pi_h$ is defined as:
  \begin{equation*} 
  \textstyle V^{\pi}_{i, h+1}(s, \bm{a}, s') \defeq \E_{\pi}\left[\left.\sum_{h' =
        h+1}^H r_{h'}(s_{h'}, \bm{a}_{h'}) \right| s_h=s, \bm{a}_h = \bm{a}, s_{h+1} = s'\right].
  \end{equation*}
  Step (a) uses the relation between $\hat{\pi}_{-j,h}^{k}$ and $\{\hat{\pi}_{-j,h+1}^{k^i}\}_i$. Step (b) pushes $\max$ inside summation and expectation. Step (c) is because the Markov nature of Markov game and that $\{\hat{\pi}_{-j,h+1}^{k^i}\}_i$ are policies that does not depend on history at step $h$, we know the maximization over $\mu_{(h+1):H}$ is achieved at policies in $\Pi_{h+1}$. This finishes the proof.
  \end{proof}

To proceed with the analysis, we  need to introduce two  pessimistic V-estimations $\uhat{V}$ and $\low{V}$ that are defined similarly as $\tilde{V}$ and $V$. Formally, let $t=N_{h}^{k}\left( s \right) $ denote the number of times $s$ is visited at step $h$ at the beginning of episode $k$, and suppose $s$ was previously visited at episodes $k^1,\ldots, k^t < k$ at the $h$-th step. Then
   \begin{equation}
    \label{eq:undertilde-def}
    \uhat{V}_{j,h}^{k}(s) = \sum_{i=1}^t{\alpha _{t}^{i}\left[ r_{j,h}(s,\a_h^{k^i})+ \low{V}_{j,h+1}^{k^i}(s_{h+1}^{k^i}) - {\beta}_{j,i} \right]},
   \end{equation}
  \begin{equation}
    \label{eq:low-def}
    \low{V}^{k}_{j,h}(s) = \max\{0, \uhat{V}^{k}_{j,h}(s)\},
  \end{equation}
  for any player $j \in [m]$ and $k \in [K]$. 
  We emphasize that $\uhat{V}$ and $\low{V}$ are defined only for the purpose of analysis. 
  Neither do they influence the decision made by each agent, nor do the agents need to maintain these quantities when running V-learning.

  Equipped with the lower estimations, we  are ready to lower bound $V_{j,h}^{\hat{\pi}_h^{k}}$.
  \begin{lemma}[Pessimism]
    \label{lem:pess_CCE}
    For any $\delta \in (0,1]$, with probability at least $1-\delta$, the following holds for any $(s,h,k,j) \in \cS \times [H] \times [K]\times [m]$ and any player $j$,
    $
  \low{V}_{j,h}^{k}(s) \le  V_{j,h}^{\hat{\pi}^{k}_{h}}(s)
    $.
  \end{lemma}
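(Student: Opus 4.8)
The plan is to prove Lemma~\ref{lem:pess_CCE} by backward induction on $h$, mirroring the structure of the optimism proof (Lemma~\ref{lem:opt_CCE}) but with the inequalities reversed and the external-regret step dropped entirely. The base case $h=H+1$ is trivial since both sides are zero. For the inductive step, fix $(s,h,k,j)$, set $t=N_h^k(s)$ with previous visit episodes $k^1,\dots,k^t$, and recall the pessimistic definitions \eqref{eq:undertilde-def}--\eqref{eq:low-def}. Since $\low{V}_{j,h}^{k}(s)=\max\{0,\uhat{V}_{j,h}^{k}(s)\}$ and $V_{j,h}^{\hat\pi_h^k}(s)\ge 0$ always holds (rewards are nonnegative), it suffices to show $\uhat{V}_{j,h}^{k}(s)\le V_{j,h}^{\hat\pi_h^k}(s)$.

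First I would expand $\uhat{V}_{j,h}^{k}(s)$ using \eqref{eq:undertilde-def} and apply a martingale concentration argument together with Lemma~\ref{lem:step_size} (property~\ref{lem:lr_property0}) to replace the empirical sample $r_{j,h}(s,\a_h^{k^i})+\low{V}_{j,h+1}^{k^i}(s_{h+1}^{k^i})$ by its conditional expectation $\D_{\pi_h^{k^i}}(r_{j,h}+\P_h\low{V}_{j,h+1}^{k^i})(s)$, incurring an additive error of order $\cO(\sqrt{H^3\iota/t})$. Crucially, the sign here is the mirror image of step $(i)$ in Lemma~\ref{lem:opt_CCE}: here the $-\beta_{j,i}$ term and the concentration error combine, and by the choice $\sum_{i=1}^t\alpha_t^i\beta_{j,i}=\Theta(H\AVGReg(A_j,t,\iota)+\sqrt{H^3\iota/t})$ the subtracted bonus dominates the concentration slack, yielding
\begin{equation*}
\uhat{V}_{j,h}^{k}(s)\le \sum_{i=1}^t \alpha_t^i\, \D_{\pi_h^{k^i}}\!\left(r_{j,h}+\P_h\low{V}_{j,h+1}^{k^i}\right)(s).
\end{equation*}
Note that unlike the optimism proof, there is no best-response maximization $\max_\mu$ and hence \emph{no} invocation of the external-regret Lemma~\ref{lem:2p0s-momo-local-regret}: the policy actually played, $\pi_h^{k^i}$, appears directly, which is exactly why pessimism is cheaper to establish.

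Next I would apply the induction hypothesis $\low{V}_{j,h+1}^{k^i}(s')\le V_{j,h+1}^{\hat\pi_{h+1}^{k^i}}(s')$ pointwise to upgrade the bound to $\sum_{i=1}^t\alpha_t^i \D_{\pi_h^{k^i}}(r_{j,h}+\P_h V_{j,h+1}^{\hat\pi_{h+1}^{k^i}})(s)$. The final step is to identify this weighted sum with $V_{j,h}^{\hat\pi_h^k}(s)$ exactly. This holds by the construction of the output policy $\hat\pi_h^k$ in Algorithm~\ref{algorithm:V-sampling}: at step $h$, after observing $s$ with visit count $t$, the policy samples an index $i\in[t]$ with probability $\alpha_t^i$, plays the one-step distribution $\pi_h^{k^i}(\cdot|s)$, and then continues according to $\hat\pi_{h+1}^{k^i}$. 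Unrolling this recursive definition gives precisely the weighted-average Bellman identity $V_{j,h}^{\hat\pi_h^k}(s)=\sum_{i=1}^t\alpha_t^i\,\D_{\pi_h^{k^i}}(r_{j,h}+\P_h V_{j,h+1}^{\hat\pi_{h+1}^{k^i}})(s)$, so the final inequality is in fact an equality chained with the earlier ones.

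The main obstacle I anticipate is getting the concentration-versus-bonus bookkeeping exactly right so that the $-\beta_{j,i}$ term genuinely absorbs the martingale fluctuation with the correct sign, and doing this uniformly over all $(s,h,k,j)$ via a union bound (hence the $\iota=\log(mHSAK/\delta)$ factor). A secondary subtlety worth stating carefully is the last identification step: because $\hat\pi_h^k$ is a non-Markov policy in $\Pi_h$, one cannot simply invoke a standard Bellman equation, but unlike the optimism proof there is no max operator to push around, so the identity follows cleanly from the recursive sampling structure of Algorithm~\ref{algorithm:V-sampling} rather than requiring the delicate argument used for step $(v)$ in Lemma~\ref{lem:opt_CCE}.
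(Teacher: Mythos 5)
Your proposal matches the paper's proof essentially step for step: backward induction, reduction to $\uhat{V}_{j,h}^{k}(s)\le V_{j,h}^{\hat\pi_h^k}(s)$, martingale concentration absorbed by the subtracted bonus $\beta_{j,i}$ (with no invocation of the regret lemma), the induction hypothesis, and the final identification of the weighted sum with $V_{j,h}^{\hat\pi_h^k}(s)$ via the recursive structure of Algorithm~\ref{algorithm:V-sampling}. Your observation that the last step is a clean equality (no $\max$ to push through, unlike step $(v)$ of the optimism lemma) is exactly why the paper's pessimism proof is shorter, so nothing is missing.
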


  \begin{proof}[Proof of Lemma~\ref{lem:pess_CCE}]
    We prove by backward induction. The claim is trivially satisfied for $h=H+1$. Suppose it is true for $h+1$, consider a fixed state $s$.
      It suffices to show
      $\uhat{V}_{j,h}^{k}(s)\le V_{j,h}^{\hat{\pi}^{k}_{h}}(s)$ because
      $\low{V}_{j,h}^{k}(s) = \max\{\uhat{V}_{j,h}^{k}(s),0\}$.
      Let $t=N_{h}^{k}\left( s \right) $ and suppose $s$ was previously visited at episodes $k^1,\ldots, k^t < k$ at the $h$-th step. Then by Equation~\ref{eq:undertilde-def},
  \begin{align*}
   \uhat{V}_{j,h}^{k}(s)&=\sum_{i=1}^t{\alpha _{t}^{i}\left[ r_{j,h}(s,\bm{a}_h^{k^i})+\low{V}_{j,h+1}^{k^i}(s_{h+1}^{k^i}) -{\beta}_{j,i} \right]}
  \\
  &\overset{\left( i \right)}{\le} \sum_{i=1}^t{\alpha _{t}^{i}\D_{\pi _{h}^{k^i} }\left( r_h +\P_h\low{V}_{j,h+1}^{k^i} \right)\left( s \right)}-\sum_{i=1}^t{\alpha _{t}^{i}{\beta}_{j,i}}+\cO\left(\sqrt{\frac{H^3\iota}{t}}\right)
  \\
  &\overset{\left( ii \right)}{\le} \sum_{i=1}^t{\alpha _{t}^{i}\D_{\pi _{h}^{k^i} }\left( r_h +\P_h\low{V}_{j,h+1}^{k^i} \right)\left( s \right)}
  \\
  &\overset{\left( iii \right)}{\le} \sum_{i=1}^t{\alpha _{t}^{i}\D_{\pi _{h}^{k^i} }\left( r_h +\P_h V_{j,h+1}^{\hat{\pi}_h^{k^i}} \right)\left( s \right)}
  \\
  &=V_{j,h}^{\hat{\pi}^{k}_h}(s)
  \end{align*}
  where $(i)$ is by martingale concentration, $(ii)$ is by the definition of ${\beta}_{j,i}$, and $(iii)$ is by induction hypothesis.
  \end{proof}

  To prove Theorem~\ref{thm:CCE_main_result}, it remains to bound the gap $\sum_{k=1}^K\max_{j}( V_{1,j}^{k}-\low{V}_{1,j}^{k} ) ( s_1 )$.


  \begin{proof}[Proof of Theorem~\ref{thm:CCE_main_result}]
    Consider player $j$, we define $\delta _{j,h}^{k}:= V_{j,h}^{k}(s_h^k)-\low{V}_{j,h}^{k}(s_h^k) \ge 0$. The non-negativity here is a simple consequence of the update rule and induction. We want to bound $\delta _{h}^{k}:= \max_{j} \delta _{j,h}^{k}$.
  Let $n_h^k =N_{h}^{k}\left( s_h^k \right) $ and suppose $s_h^k$ was previously visited at episodes $k^1,\ldots, k^{n_h^k} < k$ at the $h$-th step.
  Now by the update rule of $V_{j,h}^{k}(s_h^k)$ and $\low{V}_{j,h}^{k}(s_h^k)$,
  \begin{align*}
  \delta _{j,h}^{k}=& V_{j,h}^{k}(s_h^k)-\low{V}_{j,h}^{k}(s_h^k)
  \\
  \le &\alpha _{n_h^k}^{0}H+\sum_{i=1}^{n_h^k}{\alpha _{n_h^k}^{i}\left[\left( V_{j,h+1}^{k^i }-\low{V}_{j,h+1}^{k^i } \right)\left( s_{h+1}^{k^i} \right) +2\beta_{j,i} \right]}
  \\
  =&\alpha _{n_h^k}^{0}H+\sum_{i=1}^{n_h^k}{\alpha _{n_h^k}^{i}\delta _{j,h+1}^{k^i }}+\cO(H\AVGReg(A_j, n_h^k, \iota)+\sqrt{H^3\iota/n_h^k})
  \end{align*}
  where in the last step we have used $\sum_{i=1}^t\alpha_t^i\beta_{j,i}=\Theta(H\AVGReg(A_j, t, \iota)+\sqrt{H^3\iota/t})$.

  Now by taking maximum w.r.t. $j$ on both sides and notice $\AVGReg(B, t, \iota)$ is non-decreasing in $B$, we have
  \begin{align*}
    \delta _{h}^{k}\le &\alpha _{n_h^k}^{0}H+\sum_{i=1}^{n_h^k}{\alpha _{n_h^k}^{i}\delta _{h+1}^{k^i}}+\cO(H\AVGReg(A, n_h^k, \iota)+\sqrt{H^3\iota/n_h^k}).
    \end{align*}

  Summing the first two terms w.r.t. $k$,
  $$
  \sum_{k=1}^K{\alpha _{n_{h}^{k}}^{0}H}=\sum_{k=1}^K{H\mathbb{I}\left\{ n_{h}^{k}=0 \right\}}\le SH,
  $$
  $$
  \sum_{k=1}^K{\sum_{i=1}^{n_{h}^{k}}{\alpha _{n_{h}^{k}}^{i}\delta _{h+1}^{k^i}}}\overset{\left( i \right)}{\le} \sum_{k'=1}^K{\delta _{h+1}^{k'}\sum_{i=n_{h}^{k'}+1}^{\infty}{\alpha _{i}^{n_{h}^{k'}}}}\overset{\left( ii \right)}{\le} \left( 1+\frac{1}{H} \right) \sum_{k=1}^K{\delta _{h+1}^{k}}.
  $$
  where $(i)$ is by changing the order of summation and $(ii)$ is by Lemma~\ref{lem:step_size}. Putting them together,

  \begin{align*}
      \sum_{k=1}^K{\delta _{h}^{k}}=&\sum_{k=1}^K{\alpha _{n_{h}^{k}}^{0}H}+\sum_{k=1}^K{\sum_{i=1}^{n_{h}^{k}}{\alpha _{n_{h}^{k}}^{i}\delta _{h+1}^{k^i}}}+\sum_{k=1}^K{\cO(H\AVGReg(A, n_h^k, \iota)+\sqrt{H^3\iota/n_h^k})}
  \\
  \le& HS+\left( 1+\frac{1}{H} \right) \sum_{k=1}^K{\delta _{h+1}^{k}}+\sum_{k=1}^K{\cO(H\AVGReg(A, n_h^k, \iota)+\sqrt{H^3\iota/n_h^k})}
  \end{align*}
  Recursing this argument for $h \in [H]$ gives
     $$
     \sum_{k=1}^K{\delta _{1}^{k}}\le eSH^2+e\sum_{h=1}^H\sum_{k=1}^K{\cO(H\AVGReg(A, n_h^k, \iota)+\sqrt{H^3\iota/n_h^k})}
     $$

     By pigeonhole argument,
     \begin{align*}
     \sum_{k=1}^K{(H\AVGReg(A, n_h^k, \iota)+\sqrt{H^3\iota/n_h^k})}=&\cO\left( 1 \right) \sum_{s}\sum_{n=1}^{N_{h}^{K}\left( s \right)}\left(H\AVGReg(A, n, \iota)+\sqrt{\frac{H^3\iota}{n}}\right)\\
     \le&\cO\left( 1 \right) \sum_{s}{\left(H\CUMReg(A, N_{h}^{K}\left( s \right), \iota)+\sqrt{H^3N_{h}^{K}(s)\iota}\right)}\\
      \le& \cO\left( HS\CUMReg(A, K/S, \iota)+ \sqrt{H^3SK\iota}\right),
     \end{align*}
  where in the last step we have used concavity.

  Finally take the sum w.r.t. $h\in[H]$ we have
  \begin{equation*}
  \sum_{k=1}^K{\max_{j}[V_{1,j}^{k}-\low{V}_{1,j}^{k}](s_1)} \le \cO\left( H^2S\CUMReg(A, K/S, \iota)+ \sqrt{H^5SK\iota}\right),
  \end{equation*}
  which implies
  \begin{equation*}
    \max_{j\in[m]}[V_{j,1}^{\dag ,\hat{\pi}_{-j}}(s_1)-V_{j,1}^{\hat{\pi}}(s_1)] \le \cO( (H^2S/K) \cdot \CUMReg(A, K/S, \iota)+\sqrt{H^5S\iota/K}).\qedhere
    \end{equation*}
  \end{proof}




\section{Proofs for Computing CE in General-sum MGs}

In this section, we give complete proof of Theorem~\ref{thm:CE_main_result}. To avoid repeatedly state the condition of Theorem~\ref{thm:CE_main_result} in each lemma, we will use 
\begin{itemize}
  \item Condition of the adversarial bandit sub-procudure (Assumption~\ref{ass:swap_regret}) and
  \item Set the bonus $\{\beta_{j, t}\}_{t=1}^K$ of the $j\th$ player so that $\sum_{i=1}^t\alpha_t^i\beta_{j,i}=\Theta(H\AVGSwapReg(A_j, t, \iota)+\sqrt{H^3\iota/t})$ for any $t \in [K]$.
\end{itemize}
throughout the whole section.

We begin with a swap regret version of Lemma~\ref{lem:2p0s-momo-local-regret}.

\begin{lemma}
  \label{lem:swap-local-regret}
  The following event is true with probability at least $1-\delta$: for any $(s,h,k) \in \cS \times [H] \times [K]$,  let $t=N_{h}^{k}\left( s \right) $ and suppose $s$ was previously visited at episodes $k^1,\ldots, k^t < k$ at the $h$-th step, then for all $j\in[m]$
    $$
    \max_{\phi_j} \sum_{i=1}^t \alpha_t^i \D_{\phi_j\diamond\pi_{j,h}^{k^i}\times \pi_{-j,h}^{k^i}} \left[r_{j,h}+\P_hV_{j,h+1}^{k^i}\right](s) - \sum_{i=1}^t{\alpha _{t}^{i}\D_{\pi_{h}^{k^i}} \left( r_{j,h} +\P_h {V}_{j,h+1}^{k^i} \right)\left( s \right)}\le
 H \AVGSwapReg(A_j,t,\iota),
 $$
 where $\iota =\log(KHS/\delta)$.
 \end{lemma}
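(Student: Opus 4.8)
The plan is to mirror exactly the proof of Lemma~\ref{lem:2p0s-momo-local-regret}, but now invoking the swap-regret guarantee (Assumption~\ref{ass:swap_regret}) in place of the external-regret guarantee. The key observation is that the adversarial bandit instance attached to each pair $(s,h)$ is fed, at the $i\th$ visit to $s$, the loss vector $\ell_i(\cdot) = \frac{H - r_{j,h}(s,\cdot,\cdot) - \P_h V_{j,h+1}^{k^i}}{H}$ and plays the strategy $\theta_i = \pi_{j,h}^{k^i}(\cdot\mid s)$. The weighted swap-regret bound \eqref{eq:swap_reg} then applies verbatim with $B = A_j$ and the correct logarithmic factor.

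First I would fix $(s,h,k)$, set $t = N_h^k(s)$, and identify the visit episodes $k^1,\dots,k^t$. I would apply Assumption~\ref{ass:swap_regret} to the $(s,h)\th$ bandit, which states that with probability at least $1-\delta$ (after a union bound over all $(s,h,k)$ absorbed into $\iota$),
\begin{equation*}
\max_{\psi\in\Psi}\sum_{i=1}^t \alpha_t^i\bigl[\la\theta_i,\ell_i\ra - \la\psi\diamond\theta_i,\ell_i\ra\bigr] \le \AVGSwapReg(A_j,t,\iota).
\end{equation*}
Next I would substitute the explicit loss $\ell_i = \frac{H - r_{j,h} - \P_h V_{j,h+1}^{k^i}}{H}$. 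Because the loss is an \emph{affine} function of the reward-plus-value quantity, the constant $H/H=1$ terms cancel between $\la\theta_i,\ell_i\ra$ and $\la\psi\diamond\theta_i,\ell_i\ra$ (both $\theta_i$ and $\psi\diamond\theta_i$ are probability distributions), leaving only the $-\frac{1}{H}(r_{j,h}+\P_h V_{j,h+1}^{k^i})$ part. Multiplying through by $H$ flips the sign and rescales the regret bound by $H$, turning the minimization of loss into the maximization of reward-plus-value. This is precisely the algebraic transformation used in Lemma~\ref{lem:2p0s-momo-local-regret}.

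The only genuinely new ingredient is matching notation: I must argue that $\la\psi\diamond\theta_i,\ell_i\ra$, expressed in terms of the strategy modification $\psi=\phi_j$, equals $\D_{\phi_j\diamond\pi_{j,h}^{k^i}\times\pi_{-j,h}^{k^i}}\bigl[r_{j,h}+\P_h V_{j,h+1}^{k^i}\bigr](s)$. This follows because the $j\th$ player's action is drawn from $\phi_j\diamond\pi_{j,h}^{k^i}$ while the other players still act according to $\pi_{-j,h}^{k^i}$, and the operator $\D_{\mu\times\pi_{-j}}$ is exactly the expectation of $r_{j,h}+\P_h V_{j,h+1}$ under this product. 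Since $\Psi$ is the set of all maps $\cB\to\cB$, the maximization over $\psi$ corresponds exactly to the maximization over strategy modifications $\phi_j$ acting on $\cA_j$. I expect this bookkeeping—keeping the $-j$ coordinates fixed while $\psi$ acts only on player $j$'s marginal—to be the only step requiring care; the concentration and algebra are otherwise identical to the external-regret case, so no serious obstacle arises beyond faithfully transcribing the argument.
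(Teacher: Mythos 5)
Your proposal is correct and follows essentially the same route as the paper: a direct application of Assumption~\ref{ass:swap_regret} to the $(s,h)$-th bandit instance with loss $\ell_i = \frac{H - r_{j,h} - \P_h V_{j,h+1}^{k^i}}{H}$, followed by the affine rescaling by $H$ and the identification of $\psi \diamond \theta_i$ with $\phi_j \diamond \pi_{j,h}^{k^i}$ while $\pi_{-j,h}^{k^i}$ stays fixed. You actually spell out the cancellation of the constant term and the sign of the $\P_h V$ term more carefully than the paper's one-line proof does.
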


 \begin{proof}
  By Assumption \ref{ass:swap_regret} and the adversarial bandit update step in Algorithm \ref{algorithm:V}, we have that with probability at least $1-\delta$, for any $(s,h,k) \in \cS \times [H] \times [K]$,  
   \begin{align*}
   	\max_{\phi_j} \sum_{i=1}^t \alpha_t^i \D_{\phi_j\diamond\pi_{j,h}^{k^i}\times \pi_{-j,h}^{k^i}}\left( \frac{H-r_{j,h}+\P_hV_{j,h+1}^{k^i}}{H}\right) (s)& - \sum_{i=1}^t{\alpha _{t}^{i}\D_{\pi_{h}^{k^i}} \left( \frac{H-r_{j,h}+\P_hV_{j,h+1}^{k^i}}{H}\right) \left( s \right)}\\
  \le&
   \AVGSwapReg(A_j,t,\iota).
   \end{align*} 
  \end{proof}

  We begin with proving $V$ is actually an optimistic estimation of the value function under best response.
  
  \begin{lemma}[Optimism]
    \label{lem:optimism_CE}
    For any $\delta \in (0,1)$, with probability at least $1-\delta$, the following holds for any $(s,h,k,j) \in \cS \times [H] \times [K]\times [m]$,
    $
  V_{j,h}^{k}(s) \ge \max_{\phi_j}  V^{(\phi_j\diamond\hat{\pi}^k_{j,h})\odot\hat{\pi}^k_{-j,h} }_{j,h}(s)
    $.
  \end{lemma}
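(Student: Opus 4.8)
The plan is to mirror the backward-induction argument of Lemma~\ref{lem:opt_CCE}, but replacing the external-regret control by the swap-regret control of Lemma~\ref{lem:swap-local-regret} and replacing the best-response target by the best-strategy-modification target. I would induct on $h$ from $H+1$ downwards; the base case $h=H+1$ is trivial since both sides vanish. For the inductive step at a fixed $(s,h,k,j)$, since $V_{j,h}^k(s)=\min\{\tilde V_{j,h}^k(s),H-h+1\}$ and the right-hand side is bounded by $H-h+1$, it suffices to prove $\tilde V_{j,h}^k(s)\ge \max_{\phi_j}V_{j,h}^{(\phi_j\diamond\hat\pi_{j,h}^k)\odot\hat\pi_{-j,h}^k}(s)$. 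I would then expand $\tilde V_{j,h}^k(s)$ using Lemma~\ref{lem:2p0s-mono-V-relation}.

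The first string of inequalities is essentially identical to (i)--(iv) in Lemma~\ref{lem:opt_CCE}. Step (i): a martingale concentration bound together with Lemma~\ref{lem:lr_property0} replaces the sampled rewards and next-state values $r_{j,h}(s,\bm{a}_h^{k^i})+V_{j,h+1}^{k^i}(s_{h+1}^{k^i})$ by their one-step expectations $\D_{\pi_h^{k^i}}(r_{j,h}+\P_h V_{j,h+1}^{k^i})(s)$ up to an additive $\cO(\sqrt{H^3\iota/t})$. Step (ii): I apply the \emph{swap}-regret bound of Lemma~\ref{lem:swap-local-regret} (rather than the external-regret Lemma~\ref{lem:2p0s-momo-local-regret}) to lower bound this by $\max_{\phi_j}\sum_i \alpha_t^i \D_{\phi_j\diamond\pi_{j,h}^{k^i}\times\pi_{-j,h}^{k^i}}(r_{j,h}+\P_h V_{j,h+1}^{k^i})(s)$ at the cost of $H\AVGSwapReg(A_j,t,\iota)$. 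Step (iii): the choice $\sum_i\alpha_t^i\beta_{j,i}=\Theta(H\AVGSwapReg(A_j,t,\iota)+\sqrt{H^3\iota/t})$ absorbs both the concentration and swap-regret error terms. Step (iv): the induction hypothesis gives $V_{j,h+1}^{k^i}\ge \max_{\phi_j'}V_{j,h+1}^{(\phi_j'\diamond\hat\pi_{j,h+1}^{k^i})\odot\hat\pi_{-j,h+1}^{k^i}}$, which I substitute inside $\P_h$.

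What remains is the analog of step (v), relating $\max_{\phi_j}\sum_i\alpha_t^i\D_{\phi_j\diamond\pi_{j,h}^{k^i}\times\pi_{-j,h}^{k^i}}(r_{j,h}+\P_h\max_{\phi_j'}V_{j,h+1}^{(\phi_j'\diamond\hat\pi_{j,h+1}^{k^i})\odot\hat\pi_{-j,h+1}^{k^i}})(s)$ to $\max_{\phi_j}V_{j,h}^{(\phi_j\diamond\hat\pi_{j,h}^k)\odot\hat\pi_{-j,h}^k}(s)$, and this is the step I expect to be the main obstacle. I would unfold the definition of the correlated output policy $\hat\pi_h^k$ (Algorithm~\ref{algorithm:V-sampling}): conditioned on the shared index $i$ drawn with probability $\alpha_t^i$, all agents play $\pi_h^{k^i}$ at step $h$ and then continue with $\hat\pi_{h+1}^{k^i}$. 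A strategy modification $\phi_j\in\Phi_h$ decomposes into its step-$h$ component---which at state $s$ acts as a swap $\psi=\phi_{j,h}(s,\cdot)\in\Psi$ turning $\pi_{j,h}^{k^i}$ into $\psi\diamond\pi_{j,h}^{k^i}$, and this is precisely why the local per-state guarantee must be a swap-regret bound rather than an external-regret bound---and a continuation component $\phi_{j,(h+1):H}$. The crux is then to split the maximization: I push the maximization over the continuation component inside the sum over $i$ and the expectation over $(\bm{a},s')$, which is legitimate because the continuations $\hat\pi_{h+1}^{k^i}$ lie in $\Pi_{h+1}$ and therefore do not reference the history before step $h+1$, so that the best continuation modification for each realized $s'$ coincides with $\max_{\phi_j'}V_{j,h+1}^{(\phi_j'\diamond\hat\pi_{j,h+1}^{k^i})\odot\hat\pi_{-j,h+1}^{k^i}}(s')$. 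As in the proof of Lemma~\ref{lem:opt_CCE}, I would make this push-inside step rigorous by introducing the auxiliary continuation value $V_{j,h+1}^{\pi}(s,\bm{a},s')$, which then yields the desired inequality and closes the induction.
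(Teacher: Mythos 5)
Your proposal follows the paper's proof essentially verbatim: the same backward induction, the same reduction to $\tilde V$ via Lemma~\ref{lem:2p0s-mono-V-relation}, and the same chain (i) martingale concentration, (ii) the swap-regret bound of Lemma~\ref{lem:swap-local-regret}, (iii) absorption into the bonus, (iv) the induction hypothesis. The paper simply omits the details of step (v) with a pointer to Lemma~\ref{lem:opt_CCE}; your elaboration of that step---decomposing $\phi_j$ into a step-$h$ swap plus a continuation modification and pushing the continuation maximum inside the sum over $i$ and the expectation, using that the continuations lie in $\Pi_{h+1}$---is exactly the intended adaptation and is correct.
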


  \begin{proof}[Proof of Lemma~\ref{lem:optimism_CE}]
    We prove by backward induction.
      The claim is trivially satisfied for $h=H+1$. Suppose it is true for $h+1$, consider a fixed state $s$. 
      It suffices to show 
      $\tilde{V}_{j,h}^{k}(s)\ge \max_{\phi_j}  V^{(\phi_j\diamond\hat{\pi}^k_{j,h})\odot\hat{\pi}^k_{-j,h} }_{j,h}(s)$ because
      $V_{j,h}^{k}(s) = \min\{\tilde{V}_{j,h}^{k}(s),H-h+1\}$.
      Let $t=N_{h}^{k}\left( s \right) $ and suppose $s$ was previously visited at episodes $k^1,\ldots, k^t < k$ at the $h$-th step. Then using Lemma~\ref{lem:2p0s-mono-V-relation},
  \begin{align*}
    \tilde{V}_{j,h}^{k}(s)&=\alpha _{t}^{0}(H-h+1)+\sum_{i=1}^t{\alpha _{t}^{i}\left[ r_{j,h}(s,\bm{a}_h^{k^i})+V_{j,h+1}^{k^i}(s_{h+1}^{k^i}) +\beta_{j,i} \right]}
  \\
  &\overset{\left( i \right)}{\ge} \sum_{i=1}^t{\alpha _{t}^{i}\D_{\pi _{h}^{k^i} }\left( r_{j,h} +\P_hV_{j,h+1}^{k^i} \right)\left( s \right)}+\sum_{i=1}^t{\alpha _{t}^{i}\beta_{j,i}}-\cO\left(\sqrt{\frac{H^3\iota}{t}}\right)
  \\
  &\overset{\left( ii \right)}{\ge} \max_{\phi_j}\sum_{i=1}^t{\alpha _{t}^{i}\D_{(\phi_j\diamond\pi _{j,h}^{k^i}) \times \pi _{-j,h}^{k^i} }\left( r_h +\P_hV_{j,h+1}^{k^i} \right)\left( s \right)}+\sum_{i=1}^t{\alpha _{t}^{i}\beta_{j,i}}-\cO\left(\sqrt{\frac{H^3\iota}{t}}\right)-H \AVGReg(A_j,t,\iota)
  \\
  &\overset{\left( iii \right)}{\ge} \max_{\phi_j}\sum_{i=1}^t{\alpha _{t}^{i}\D_{(\phi_j\diamond\pi _{j,h}^{k^i}) \times \pi _{-j,h}^{k^i} }\left( r_h +\P_hV_{j,h+1}^{k^i} \right)\left( s \right)}
  \\
  &\overset{\left( iv \right)}{\ge} \max_{\phi_j}\sum_{i=1}^t{\alpha _{t}^{i}\D_{(\phi_j\diamond\pi _{j,h}^{k^i}) \times \pi _{-j,h}^{k^i} }\left( r_h +\P_h \max_{\phi'_j}  V^{(\phi_j'\diamond\hat{\pi}^{k^i}_{j,h+1})\odot\hat{\pi}^{k^i}_{-j,h+1}}_{j,h} \right)\left( s \right)}
  \\
  &\stackrel{(v)}{\ge} \max_{\phi_j}  V^{(\phi_j\diamond\hat{\pi}^k_{j,h})\odot\hat{\pi}^k_{-j,h} }_{j,h}(s)
  \end{align*}
  where $(i)$ is by martingale concentration and Lemma~\ref{lem:lr_property0}, $(ii)$ is by Lemma~\ref{lem:swap-local-regret}, $(iii)$ is by the definition of ${\beta}_{j,i}$, and $(iv)$ is by induction hypothesis. Finally, $(v)$ follows from a similar reasoning as in the proof of Lemma \ref{lem:opt_CCE}, which we omit here.
  \end{proof}
  
  We still need to lower bound $V_{j,h}^{\hat{\pi}_h^{k}}$. To do this, we estimate $\uhat{V}$ and $\low{V}$ defined by Equation~\ref{eq:undertilde-def} and Equation~\ref{eq:low-def}. These quantities are indeed the lower bounds we need.
  
  \begin{lemma}[Pessimism]
    \label{lem:pessimism_CE}
     For any $\delta \in (0,1)$, with probability at least $1-\delta$, the following holds for any $(s,h,k,j) \in \cS \times [H] \times [K]\times [m]$,
    $
  \low{V}_{j,h}^{k}(s) \le  V_{h}^{\hat{\pi}_h^{k}}(s)
    $.
  \end{lemma}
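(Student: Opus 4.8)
The plan is to prove this by backward induction on $h$, exactly paralleling the pessimism lemma for CCE (Lemma~\ref{lem:pess_CCE}). The base case $h=H+1$ is immediate since all value estimates vanish there. For the inductive step, fix $(s,h,k,j)$ and observe that since $\low{V}_{j,h}^k(s)=\max\{\uhat{V}_{j,h}^k(s),0\}$ and $V_{j,h}^{\hat{\pi}_h^k}(s)\ge 0$, it suffices to establish the stronger bound $\uhat{V}_{j,h}^k(s)\le V_{j,h}^{\hat{\pi}_h^k}(s)$.

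To do so, I would unfold the definition of $\uhat{V}_{j,h}^k(s)$ in \eqref{eq:undertilde-def} as a weighted sum over the past visits $k^1,\dots,k^t$ (with $t=N_h^k(s)$) of the one-step rewards plus the pessimistic continuation estimates $\low{V}_{j,h+1}^{k^i}$, minus the bonuses $\beta_{j,i}$. The argument then proceeds through three inequalities. First, a martingale (Azuma--Hoeffding) concentration replaces the empirical quantities $r_{j,h}(s,\bm a_h^{k^i})+\low{V}_{j,h+1}^{k^i}(s_{h+1}^{k^i})$ by their conditional expectations $\D_{\pi_h^{k^i}}(r_{j,h}+\P_h\low{V}_{j,h+1}^{k^i})(s)$, at the cost of an additive error of order $\sqrt{H^3\iota/t}$ (controlled using the weight bounds of Lemma~\ref{lem:step_size}). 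Second, since the chosen bonus schedule satisfies $\sum_{i=1}^t\alpha_t^i\beta_{j,i}=\Theta(H\AVGSwapReg(A_j,t,\iota)+\sqrt{H^3\iota/t})$ and swap regret is nonnegative, the accumulated bonus $\sum_i\alpha_t^i\beta_{j,i}$ dominates this concentration error, so subtracting it yields the clean bound $\uhat{V}_{j,h}^k(s)\le \sum_{i=1}^t\alpha_t^i\D_{\pi_h^{k^i}}(r_{j,h}+\P_h\low{V}_{j,h+1}^{k^i})(s)$. Third, the induction hypothesis $\low{V}_{j,h+1}^{k^i}\le V_{j,h+1}^{\hat{\pi}_{h+1}^{k^i}}$ upgrades the continuation estimate to the true value of the output continuation policy, and the resulting expression is exactly $V_{j,h}^{\hat{\pi}_h^k}(s)$ by the definition of how $\hat{\pi}_h^k$ selects, at state $s$, a past index $k^i$ with probability $\alpha_t^i$ and then plays $\pi_h^{k^i}$.

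The crucial observation that makes this routine is that the pessimistic estimates $\uhat{V}$ and $\low{V}$ in \eqref{eq:undertilde-def}--\eqref{eq:low-def} are defined with no reference whatsoever to the adversarial-bandit subroutine, and the lower bound on the realized value $V_{j,h}^{\hat{\pi}_h^k}$ never invokes any regret guarantee: it needs only concentration, a sufficiently large bonus, and the recursive structure of $\hat{\pi}_h^k$. Consequently this lemma is entirely agnostic to whether the bandit algorithm controls external or swap regret, and the proof is word-for-word identical to that of Lemma~\ref{lem:pess_CCE}, with $\AVGSwapReg$ merely replacing $\AVGReg$ in the bonus bookkeeping. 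I do not expect any genuine obstacle here; the only point requiring minor care is confirming, in the second inequality, that the swap-regret bonus still upper bounds the $\sqrt{H^3\iota/t}$ martingale fluctuation, which holds because $\AVGSwapReg\ge 0$ forces the bonus term to already contain a $\Theta(\sqrt{H^3\iota/t})$ contribution.
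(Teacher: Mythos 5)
Your proposal is correct and follows essentially the same route as the paper: backward induction, reduction to bounding $\uhat{V}_{j,h}^{k}(s)$ via the truncation at zero, then martingale concentration, absorption of the concentration error by the bonus, and the induction hypothesis, matching the paper's proof step for step. Your observation that the pessimism direction never invokes the (swap) regret guarantee and is therefore identical to the proof of Lemma~\ref{lem:pess_CCE} up to replacing $\AVGReg$ by $\AVGSwapReg$ in the bonus is exactly how the paper treats it.
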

  
  \begin{proof}[Proof of Lemma~\ref{lem:pessimism_CE}]
    We prove by backward induction. The claim is trivially satisfied for $h=H+1$. Suppose it is true for $h+1$, consider a fixed state $s$. 
      It suffices to show 
      $\uhat{V}_{j,h}^{k}(s)\le V_{j,h}^{\hat{\pi}^{k}_{h}}(s)$ because
      $\low{V}_{j,h}^{k}(s) = \max\{\uhat{V}_{j,h}^{k}(s),0\}$.
      Let $t=N_{h}^{k}\left( s \right) $ and suppose $s$ was previously visited at episodes $k^1,\ldots, k^t < k$ at the $h$-th step. Then by equation~\eqref{eq:undertilde-def},
      \begin{align*}
        \uhat{V}_{j,h}^{k}(s)&=\sum_{i=1}^t{\alpha _{t}^{i}\left[ r_{j,h}(s,\bm{a}_h^{k^i})+\low{V}_{j,h+1}^{k^i}(s_{h+1}^{k^i}) -{\beta}_{j,i} \right]}
       \\
       &\overset{\left( i \right)}{\le} \sum_{i=1}^t{\alpha _{t}^{i}\D_{\pi _{h}^{k^i} }\left( r_h +\P_h\low{V}_{j,h+1}^{k^i} \right)\left( s \right)}-\sum_{i=1}^t{\alpha _{t}^{i}{\beta}_{j,i}}+\cO\left(\sqrt{\frac{H^3\iota}{t}}\right)
       \\
       &\overset{\left( ii \right)}{\le} \sum_{i=1}^t{\alpha _{t}^{i}\D_{\pi _{h}^{k^i} }\left( r_h +\P_h\low{V}_{j,h+1}^{k^i} \right)\left( s \right)}
       \\
       &\overset{\left( iii \right)}{\le} \sum_{i=1}^t{\alpha _{t}^{i}\D_{\pi _{h}^{k^i} }\left( r_h +\P_h V_{j,h+1}^{\hat{\pi}_h^{k^i}} \right)\left( s \right)}
       \\
       &=V_{j,h}^{\hat{\pi}^{k}_h}(s)
       \end{align*}
       where $(i)$ is by martingale concentration, $(ii)$ is by the definition of ${\beta}_{j,i}$, and $(iii)$ is by induction hypothesis.
  \end{proof}
  
  To prove Theorem~\ref{thm:CE_main_result}, it remains to bound the gap $\sum_{k=1}^K\max_{j}( V_{1,j}^{k}-\low{V}_{1,j}^{k} ) ( s_1 )$.

  \begin{proof}[Proof of Theorem~\ref{thm:CE_main_result}]
    Consider player $j$, we define $\delta _{j,h}^{k}:= V_{j,h}^{k}(s_h^k)-\low{V}_{j,h}^{k}(s_h^k) \ge 0$. The non-negativity here is a simple consequence of the update rule and induction. We want to bound $\delta _{h}^{k}:= \max_{j} \delta _{j,h}^{k}$.
  Let $n_h^k =N_{h}^{k}\left( s_h^k \right) $ and suppose $s_h^k$ was previously visited at episodes $k^1,\ldots, k^{n_h^k} < k$ at the $h$-th step. 
  Now by the update rule of $V_{j,h}^{k}(s_h^k)$ and $\low{V}_{j,h}^{k}(s_h^k)$,
  \begin{align*}
  \delta _{j,h}^{k}=& V_{j,h}^{k}(s_h^k)-\low{V}_{j,h}^{k}(s_h^k)
  \\
  \le &\alpha _{n_h^k}^{0}H+\sum_{i=1}^{n_h^k}{\alpha _{n_h^k}^{i}\left[\left( V_{j,h+1}^{k^i }-\low{V}_{j,h+1}^{k^i } \right)\left( s_{h+1}^{k^i} \right) +2\beta_{j,i} \right]}
  \\
  =&\alpha _{n_h^k}^{0}H+\sum_{i=1}^{n_h^k}{\alpha _{n_h^k}^{i}\delta _{j,h+1}^{k^i }}+\cO(H\AVGSwapReg(A_j, n_h^k, \iota)+\sqrt{H^3\iota/n_h^k})
  \end{align*}
  where in the last step we have used $\sum_{i=1}^t\alpha_t^i\beta_{j,i}=\Theta(H\AVGSwapReg(A_j, t, \iota)+\sqrt{H^3\iota/t})$.
  
  Now by taking maximum w.r.t. $j$ on both sides and notice $\AVGSwapReg(B, t, \iota)$ is non-decreasing in $B$, we have 
  \begin{align*}
    \delta _{h}^{k}\le &\alpha _{n_h^k}^{0}H+\sum_{i=1}^{n_h^k}{\alpha _{n_h^k}^{i}\delta _{h+1}^{k^i}}+\cO(H\AVGSwapReg(A, n_h^k, \iota)+\sqrt{H^3\iota/n_h^k}).
    \end{align*}

  Summing the first two terms w.r.t. $k$,
  $$
  \sum_{k=1}^K{\alpha _{n_{h}^{k}}^{0}H}=\sum_{k=1}^K{H\mathbb{I}\left\{ n_{h}^{k}=0 \right\}}\le SH,
  $$
  $$
  \sum_{k=1}^K{\sum_{i=1}^{n_{h}^{k}}{\alpha _{n_{h}^{k}}^{i}\delta _{h+1}^{k^i}}}\overset{\left( i \right)}{\le} \sum_{k'=1}^K{\delta _{h+1}^{k'}\sum_{i=n_{h}^{k'}+1}^{\infty}{\alpha _{i}^{n_{h}^{k'}}}}\overset{\left( ii \right)}{\le} \left( 1+\frac{1}{H} \right) \sum_{k=1}^K{\delta _{h+1}^{k}}.
  $$
  where $(i)$ is by changing the order of summation and $(ii)$ is by Lemma~\ref{lem:step_size}. Putting them together,
  
  \begin{align*}
      \sum_{k=1}^K{\delta _{h}^{k}}=&\sum_{k=1}^K{\alpha _{n_{h}^{k}}^{0}H}+\sum_{k=1}^K{\sum_{i=1}^{n_{h}^{k}}{\alpha _{n_{h}^{k}}^{i}\delta _{h+1}^{k^i}}}+\sum_{k=1}^K{\cO(H\AVGSwapReg(A, n_h^k, \iota)+\sqrt{H^3\iota/n_h^k})}
  \\
  \le& HS+\left( 1+\frac{1}{H} \right) \sum_{k=1}^K{\delta _{h+1}^{k}}+\sum_{k=1}^K{\cO(H\AVGSwapReg(A, n_h^k, \iota)+\sqrt{H^3\iota/n_h^k})}
  \end{align*}
  Recursing this argument for $h \in [H]$ gives
     $$
     \sum_{k=1}^K{\delta _{1}^{k}}\le eSH^2+e\sum_{h=1}^H\sum_{k=1}^K{\cO(H\AVGSwapReg(A, n_h^k, \iota)+\sqrt{H^3\iota/n_h^k})}
     $$
  
     By pigeonhole argument,
     \begin{align*}
     \sum_{k=1}^K{(H\AVGSwapReg(A, n_h^k, \iota)+\sqrt{H^3\iota/n_h^k})}=&\cO\left( 1 \right) \sum_{s}\sum_{n=1}^{N_{h}^{K}\left( s \right)}\left(H\AVGSwapReg(A, n, \iota)+\sqrt{\frac{H^3\iota}{n}}\right)\\
     \le&\cO\left( 1 \right) \sum_{s}{\left(H\CUMSwapReg(A, N_{h}^{K}\left( s \right), \iota)+\sqrt{H^3N_{h}^{K}(s)\iota}\right)}\\
      \le& \cO\left( HS\CUMSwapReg(A, K/S, \iota)+ \sqrt{H^3SK\iota}\right),
     \end{align*}
  where in the last step we have used concavity.

  Finally take the sum w.r.t. $h\in[H]$ we have
  \begin{equation*}
  \sum_{k=1}^K{\max_{j}[V_{1,j}^{k}-\low{V}_{1,j}^{k}](s_1)} \le \cO\left( H^2S\CUMSwapReg(A, K/S, \iota)+ \sqrt{H^5SK\iota}\right),
  \end{equation*}
  which implies 
  \begin{equation*}
    \max_{j\in[m]}[V_{j,1}^{\dag ,\hat{\pi}_{-j}}(s_1)-V_{j,1}^{\hat{\pi}}(s_1)] \le \cO( (H^2S/K) \cdot \CUMSwapReg(A, K/S, \iota)+\sqrt{H^5S\iota/K}).
    \end{equation*}
 
  \end{proof}

\section{Proofs for MDPs and Two-player Zero-sum MGs}
\label{app:mdp-2p0s}

In this section, we prove the main theorems for V-learning in the setting of single-agent (MDPs) and two-player zero-sum MGs.

\begin{proof}[Proof of Theorem \ref{thm:2p0s_main_result}]	
To begin with, we notice an equivalent  definition of two-player zero-sum MGs is that the reward function satisfies $r_{1,h} = 1- r_{2,h}$ for all $h\in[H]$. The reason we use this definition instead of the common version $r_{1,h}=-r_{2,h}$ is we want to make it consistent with our assumption that the reward function takes value in $[0,1]$ for any player.
Although this definition does not satisfy the zero-sum condition, its Nash equilibria are the same as those of the zero-sum version because adding a constant to the reward function of player $2$ per step will not change the dynamics of the game.

 In order to show $\hat \pi = \hat\pi_1 \times \hat\pi_2$ is an approximate Nash policy, it suffices to control 
 $$\max_{\pi_1}V_{1,1}^{\pi_1,\hat{\pi}_{2}}(s_1)-\min_{\pi_2} V_{1,1}^{\hat{\pi}_1, \pi_2}(s_1).
 $$
 Since $r_{1,h} = 1- r_{2,h}$ for all $h\in[H]$, with probability at least $1-\delta$
 \begin{align*}
 & \max_{\pi_1}V_{1,1}^{\pi_1,\hat{\pi}_{2}}(s_1)-\min_{\pi_2} V_{1,1}^{\hat{\pi}_1, \pi_2}(s_1) \\
 = & \max_{\pi_1}V_{1,1}^{\pi_1,\hat{\pi}_{2}}(s_1)-\left(H-\max_{\pi_2} V_{2,1}^{\hat{\pi}_1, \pi_2}(s_1)\right)\\
 = & \left(\max_{\pi_1}V_{1,1}^{\pi_1,\hat{\pi}_{2}}(s_1) - 	V_{1,1}^{\hat{\pi}_1\odot\hat\pi_2}(s_1)\right)
 + \left(\max_{\pi_2} V_{2,1}^{\hat{\pi}_1, \pi_2}(s_1)- V_{2,1}^{\hat{\pi}_1\odot\hat\pi_2}(s_1)\right)\\
 \le & \cO( (H^2S/K) \cdot \CUMReg(A, K/S, \iota)+\sqrt{H^5S\iota/K}),
 \end{align*}   
 where the last inequality follows from Theorem \ref{thm:CCE_main_result}.
 The reason we can use Theorem \ref{thm:CCE_main_result} here is  the precondition of Theorem \ref{thm:2p0s_main_result} is a special case of  the precondition of Theorem \ref{thm:CCE_main_result}. 
\end{proof}

\begin{proof}[Proof of Theorem \ref{thm:single_main_result}]
	Since MDPs is a subclass of two-player zero-sum MGs by simply choosing the action set of the second player to be a singleton, it suffices to only prove Theorem \ref{thm:2p0s_main_result}, from which the single-agent guarantee, Theorem \ref{thm:single_main_result} trivially follows.  
\end{proof}

\section{Proofs for Monotonic V-learning}

In this section, we prove Theorem \ref{thm:2p0s_monotone}. The algorithm is V-learning with monotonic update, and the setting we consider is two-player zero-sum Markov games. 
As before, we assume $r_{1,h} (s,a)= 1-r_{2,h}(s,a)$ for all $s,a,h$. 
The reason for assuming $r_{1,h} (s,a)= 1-r_{2,h}(s,a)$ instead of 
$r_{1,h} (s,a)= -r_{2,h}(s,a)$ can be found in Appendix \ref{app:mdp-2p0s}.

For two player zero-sum MGs, we can define its minimax value function (Nash value function) by the following Bellman equations 
\begin{equation}
	\begin{cases}
	V^\star_{j,h}(s) = \max_{\pi_{j,h}}\min_{\pi_{-j,h}} \D_{\pi_{j,h}\times \pi_{-j,h}}[Q^\star_{j,h}](s),\\
	  Q^\star_{j,h}(s,\a) = r_{j,h}(s,\a) +\P_h[V^\star_{j,h+1}](s,\a),\\
	 V^\star_{j,H+1}(s)=Q^\star_{j,H+1}(s,\a) = 0.
	\end{cases}
\end{equation}

\begin{lemma}[Optimism of V-estimates]\label{lem:mono_order}
With probability at least $1-\delta$, for any $(s,h,k,j) \in \cS \times [H] \times [K]\times [2]$,  
\begin{equation}
\tilde{V}^k_{j,h}(s)\ge  {V}^k_{j,h}(s)\ge V^{\dagger,\tilde{\pi}_{-j}}_{j,h}(s)\ge V^\star_{j,h}(s),
\end{equation}
where $V^\star_{j,h}$ is the minimax  (Nash) value function defined above.
\end{lemma}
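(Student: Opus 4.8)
The plan is to prove the four-term chain by treating its three inequalities separately; the two outer ones are short, and the middle optimism inequality $V^k_{j,h}(s)\ge V^{\dagger,\tilde\pi_{-j}}_{j,h}(s)$ carries essentially all the work. The leftmost inequality $\tilde V^k_{j,h}(s)\ge V^k_{j,h}(s)$ is immediate from the monotonic update \eqref{eq:monotone_update}, whose right-hand side is a minimum that includes $\tilde V_{j,h}(s)$, so $V_{j,h}(s)\le\tilde V_{j,h}(s)$ throughout. For the rightmost inequality $V^{\dagger,\tilde\pi_{-j}}_{j,h}(s)\ge V^\star_{j,h}(s)$, I would use that a best response against a \emph{fixed} opponent dominates the game value player $j$ can secure: for every $\pi_j$ we have $V^{\pi_j\times\tilde\pi_{-j}}_{j,h}(s)\ge\min_{\pi_{-j}}V^{\pi_j\times\pi_{-j}}_{j,h}(s)$, and maximizing over $\pi_j$ gives $V^{\dagger,\tilde\pi_{-j}}_{j,h}(s)\ge\max_{\pi_j}\min_{\pi_{-j}}V^{\pi_j\times\pi_{-j}}_{j,h}(s)=V^\star_{j,h}(s)$, the last equality being the standard fact that the stagewise minimax recursion defining $V^\star$ coincides with the full max-min value of the game.

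The central inequality I would prove by backward induction on $h$, mirroring the structure of Lemma \ref{lem:opt_CCE}. Because the update \eqref{eq:monotone_update} makes $\{V^k_{j,h}(s)\}_k$ nonincreasing, it suffices to lower bound the final value, attained at the last episode $t=t_j(s,h)$ at which $V_{j,h}(s)$ strictly decreased; there $V_{j,h}(s)=\tilde V_{j,h}(s)$, which Lemma \ref{lem:2p0s-mono-V-relation} expands as $\alpha^0_t(H{-}h{+}1)+\sum_{i=1}^t\alpha^i_t[r_{j,h}(s,\bm{a}^{k^i}_h)+V^{k^i}_{j,h+1}(s^{k^i}_{h+1})+\beta_{j,i}]$. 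I would then (i) apply martingale concentration together with Lemma \ref{lem:step_size} to replace the sampled reward-plus-value by $\sum_i\alpha^i_t\D_{\pi^{k^i}_h}(r_{j,h}+\P_hV^{k^i}_{j,h+1})(s)$ up to an $\cO(\sqrt{H^3\iota/t})$ error, (ii) invoke the weighted external-regret guarantee (Lemma \ref{lem:2p0s-momo-local-regret}) to pass from the played policy to an arbitrary best-response distribution $\mu$ at an additive cost $H\AVGReg(A_j,t,\iota)$, (iii) absorb both error terms using the calibration $\sum_i\alpha^i_t\beta_{j,i}=\Theta(H\AVGReg(A_j,t,\iota)+\sqrt{H^3\iota/t})$, and (iv) apply the induction hypothesis, which combined with monotonicity lets me replace each intermediate $V^{k^i}_{j,h+1}$ by $V^{\dagger,\tilde\pi_{-j}}_{j,h+1}$.

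The crux — and the reason for the cross-dependence in \eqref{eq:monotone_output} — is that $\tilde\pi_{-j,h}(\cdot|s)=\sum_{i=1}^{t}\alpha^i_{t}\pi^{k^i}_{-j,h}(\cdot|s)$ is built from exactly the same weights $\{\alpha^i_t\}_{i=1}^t$ with $t=t_j(s,h)$ as the value expansion, which is precisely why player $-j$'s output policy is indexed by player $j$'s last-update episode. Since $\mu$ is fixed and expectation is linear in the opponent's action law, $\sum_i\alpha^i_t\D_{\mu\times\pi^{k^i}_{-j,h}}(\cdot)(s)=\D_{\mu\times\tilde\pi_{-j,h}}(\cdot)(s)$, so after maximizing over $\mu$ the bound becomes the one-step Bellman optimality operator for player $j$'s best response against the \emph{Markov} opponent $\tilde\pi_{-j}$, applied to $V^{\dagger,\tilde\pi_{-j}}_{j,h+1}$; because $\tilde\pi_{-j}$ is Markov its best response is Markov and satisfies this recursion, so the right-hand side equals $V^{\dagger,\tilde\pi_{-j}}_{j,h}(s)$ and the induction closes. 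I expect the main obstacle to be the index bookkeeping around monotonicity: one must verify that the count $t$ driving the value expansion at $(s,h)$ coincides with the count defining $\tilde\pi_{-j,h}$, and that monotonicity is used in the correct direction so that the induction hypothesis is available at every intermediate episode $k^i$ and not merely at the terminal one.
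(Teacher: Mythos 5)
Your proposal is correct and follows essentially the same route as the paper's proof: the two outer inequalities are dispatched exactly as in the paper, and the middle one is the same backward induction combining Lemma \ref{lem:2p0s-mono-V-relation}, martingale concentration, the weighted-regret bound of Lemma \ref{lem:2p0s-momo-local-regret}, the bonus calibration, and monotonicity, merely run in the reverse direction (lower-bounding $\tilde V$ rather than upper-bounding $V^{\dagger,\tilde\pi_{-j}}$). Your observation that one should anchor the expansion at the last-update episode so that the count $t$ matches the mixture weights defining $\tilde\pi_{-j,h}$ is precisely the point of the cross-indexing in \eqref{eq:monotone_output}, and is if anything a slightly more careful reading of the step the paper states for arbitrary $k$.
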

\begin{proof}[Proof of Lemma~\ref{lem:mono_order}]
Note that $\tilde{V}^k_{j,h}(s)\ge  {V}^k_{j,h}(s)$ is straightforward by the update rule of V-learning, and $V^{\dagger,\tilde{\pi}_{-j}}_{j,h}(s)\ge V^\star_{j,h}(s)$  directly follows from the definition of minimax value function. 
Therefore, we only need to prove the second inequality. 
 We do this by backward induction.

       The claim is true for $h=H+1$. Assume for any $s$ and $k$,  ${V}^k_{j,h+1}(s)\ge V^{\dagger,\tilde{\pi}_{-j}}_{j,h+1}(s)$. For a fixed $(s,h,k) \in \cS \times [H]\times[K]$, let $t=N_{h}^{k}\left( s \right) $ and suppose $s$ was previously visited in episode $k^1,\ldots, k^t < k$ at the $h$-th step. By Bellman equation,
   \begin{align*}
      V^{\dagger,\tilde{\pi}_{-j}}_{j,h}(s)
            \le & \alpha _{t}^{0}(H-h+1)+ \underset{\mu}{\max}\sum_{i=1}^t{\alpha _{t}^{i}\D_{\mu \times \pi _{ -j,h}^{k^i}} \left( r_{j,h} +\P_h V_{j,h+1}^{\dagger,\tilde{\pi}_{-j}} \right) \left( s \right)}
      \\
      \le& \alpha _{t}^{0}(H-h+1)+\underset{\mu}{\max}\sum_{i=1}^t{\alpha _{t}^{i}\D_{\mu \times \pi _{ -j,h}^{k^i}} \left( r_{j,h} +\P_h {V}_{j,h+1}^{k^i} \right) \left( s \right)}\\
      \le & \alpha _{t}^{0}(H-h+1)+\sum_{i=1}^t{\alpha _{t}^{i}\D_{\pi_{h}^{k^i}} \left( r_{j,h} +\P_h {V}_{j,h+1}^{k^i} \right)\left( s \right)}+
H \AVGReg(A_j,t,\iota)\\
\le & \alpha _{t}^{0}(H-h+1)+\sum_{i=1}^t{\alpha _{t}^{i}\left[ r_{j,h}(s,\a_h^{k^i})+ {V}_{j,h+1}^{k^i}(s_{h+1}^{k^i})  \right]} + \cO\left(\sqrt{\frac{2H^3\iota}{t}}\right)+H \AVGReg(A_j,t,\iota)
\end{align*}
where the second inequality follows from our induction hypothesis and the monotonicity of $V^k$,  the third inequality follows from Lemma \ref{lem:2p0s-momo-local-regret}, and the last one follows from martingale concentration as well as Lemma \ref{lem:step_size}. By Lemma \ref{lem:2p0s-mono-V-relation} and the precondition of Theorem \ref{thm:2p0s_monotone}, we know the RHS is no larger than  
$\tilde{V}^k_{j,h}(s)$.
Note that $V^k$ can be equivalently defined as
$$
{V}_{j,h}^k(s) =  \min\{ \min_{t\in[k]} \tilde{V}_{j,h}^t(s),H-h+1\}, 
$$
we conclude ${V}^k_{j,h}(s)\ge V^{\dagger,\tilde{\pi}_{-j}}_{j,h}(s)$ for any $k\in[K]$.
\end{proof}


Now we are ready to prove Theorem \ref{thm:2p0s_monotone}.
\begin{proof}[Proof of Theorem \ref{thm:2p0s_monotone}]
By the monotonicity of ${V}$ and Lemma \ref{lem:mono_order}
\begin{align*}
&\quad V^{\dagger,\tilde{\pi}_{2}}_{1,1}(s_1) - \min_{\pi_2}V^{\tilde{\pi}_{1}\times\pi_2}_{1,1}(s_1)\\
&= V^{\dagger,\tilde{\pi}_{2}}_{1,1}(s_1) -\left(H-V^{\dagger,\tilde{\pi}_{1}}_{2,1}(s_1)\right) \\
& \le  {V}^K_{1,1}(s_1)+{V}_{2,1}^K(s_1)-H \\
& \le \frac{1}{K} \sum_{k=1}^{K} \left( {V}^k_{1,1}(s_1)+{V}_{2,1}^k(s_1)-H\right) \\
& \le \frac{1}{K} \sum_{k=1}^{K}\left(\tilde{V}^k_{1,1}(s_1)+\tilde{V}_{2,1}^k(s_1)-H\right),
\end{align*}
where the first equality follows from the definition of two-player zero-sum game, i.e., $r_{1,h}=1-r_{2,h}$.

Now we can mimic the proof of Theorem~\ref{thm:CCE_main_result}.
    Define $\delta _{h}^{k}:= \tilde{V}^k_{1,h}(s_h^k)+\tilde{V}_{2,h}^k(s_h^k)-(H-h+1)$. The non-negativity here follows from Lemma \ref{lem:mono_order} as below
 $$
  \tilde{V}^k_{1,h}(s_h^k)+\tilde{V}_{2,h}^k(s_h^k)-(H-h+1)
  \ge {V}^\star_{1,h}(s_h^k)+{V}_{2,h}^\star(s_h^k)  -(H-h+1)
  = (H-h+1) -(H-h+1) = 0.
 $$
  Let $n_h^k =N_{h}^{k}\left( s_h^k \right) $ and suppose $s_h^k$ was previously visited at episodes $k^1,\ldots, k^{n_h^k} < k$ at the $h$-th step. By Lemma \ref{lem:2p0s-mono-V-relation} and the fact that $r_{1,h} = 1- r_{2,h}$ for all $h$, we have
  \begin{align*}
  \delta _{h}^{k}=&  \tilde{V}^k_{1,h}(s_h^k)+\tilde{V}_{2,h}^k(s_h^k)-(H-h+1)  \\
  = &2\alpha _{n_h^k}^{0}H+\sum_{i=1}^{n_h^k}{\alpha _{n_h^k}^{i}\left[\left( \tilde V_{1,h+1}^{k^i }-\tilde{V}_{2,h+1}^{k^i } \right)\left( s_{h+1}^{k^i} \right)-(H-h)  +\beta_{1,i}+\beta_{2,i} \right]}
  \\
  =&2\alpha _{n_h^k}^{0}H+\sum_{i=1}^{n_h^k}{\alpha _{n_h^k}^{i}\delta _{h+1}^{k^i }}+\cO(H\AVGReg(A, n_h^k, \iota)+\sqrt{H^3\iota/n_h^k})
  \end{align*}
  where in the last step we used $\sum_{i=1}^t\alpha_t^i\beta_{j,i}=\Theta(H\AVGReg(A_j, t, \iota)+\sqrt{H^3\iota/t})$.
  
  The remaining steps follow exactly the same as the proof of Theorem~\ref{thm:CCE_main_result}. As a result, we obtain
\begin{align*}
V^{\dagger,\tilde{\pi}_{2}}_{1,1}(s_1) - \min_{\tilde\pi_2}V^{\tilde{\pi}_{1}\times\tilde\pi_2}_{1,1}(s_1) &\le 
\frac{1}{K} \sum_{k=1}^{K}\left(\tilde{V}^k_{1,1}(s_1)+\tilde{V}_{2,1}^k(s_1)-H\right)\\
&\le \cO\left( \frac{H^2S}{K} \cdot \CUMReg(A, K/S, \iota)+\sqrt{\frac{H^5S\iota}{K}}\right),
\end{align*}
which completes the proof.
\end{proof}


\section{Adversarial Bandit with Weighted External Regret}
\label{sec:bandit}

In this section, we present a Follow-the-Regularized-Leader (FTRL) style algorithm that achieves low weighted (external) regret for the adversarial bandit problem. Although FTRL is a classial algorithm in the adversarial bandit literature, we did not find a good reference of FTRL with changing step size, weighted regret and high probability bound. For completeness of this work, we provide detailed derivations here.



We present the FTRL algorithm in Algorithm \ref{algorithm:FTRL}. In Corollary \ref{cor:single-step-regret}, we prove that FTRL satisfies the Assumption \ref{ass:external_regret} with good regret bounds. Recall that $B$ is the number of actions, and our normalization condition requires loss $\tilde{l}_t \in [0,1]^B$ for any $t$.


\begin{algorithm}[t]
   \caption{FTRL for Weighted External Regret (FTRL)}
   \label{algorithm:FTRL}
   \begin{algorithmic}[1]
      \STATE {\bfseries Initialize:} for any $b \in \cB$, $\theta_{1}(b) \setto 1/B$.
      \FOR{episode $t=1,\dots,K$}
      \STATE Take action $b_t \sim \theta_t(\cdot)$, and observe loss $\tilde{l}_t(b_t)$.
      \STATE $\hat{l}_t(b) \leftarrow \tilde{l}_t( b_t)\mathbb{I}\{b_t=b\}/(\theta_t (b) +\gamma_t)$ for all $b \in \cB$.  
      \label{line:FTRL_loss}
      \STATE $\theta_{t+1}(b) \propto \exp[-(\eta_{t}/w_t) \cdot \sum_{i=1}^{t}w_i \hat{l}_i(b)]$  
      \ENDFOR
   \end{algorithmic}
\end{algorithm}



\begin{corollary}
   \label{cor:single-step-regret}
By choosing hyperparameter $w_t=\alpha _t\left( \prod_{i=2}^t{\left( 1-\alpha _i \right)} \right) ^{-1}$  and $\eta _t=\gamma _t=\sqrt{\frac{H\log B}{Bt}}$, FTRL (Algorithm~\ref{algorithm:FTRL}) satisfies Assumption \ref{ass:external_regret} with
$$
   \AVGReg(B, t, \log(1/\delta)) = 10\sqrt{HB\log(B/\delta) /t},\qquad \CUMReg(B, t, \log(1/\delta)) = 20\sqrt{HBt\log(B/\delta) }
$$
\end{corollary}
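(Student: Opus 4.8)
The plan is to establish the weighted-regret guarantee for FTRL (Corollary~\ref{cor:single-step-regret}) in two parts: first prove the per-round weighted external regret bound $\AVGReg(B,t,\log(1/\delta)) \le 10\sqrt{HB\log(B/\delta)/t}$ in Assumption~\ref{ass:external_regret}, and then derive the cumulative bound $\CUMReg$ by summing over $t' = 1,\dots,t$. The main technical object is the weighted regret $\max_{\theta}\sum_{i=1}^t \alpha_t^i[\la\theta_i,\ell_i\ra - \la\theta,\ell_i\ra]$, and the key observation is that the weights $w_t = \alpha_t(\prod_{i=2}^t(1-\alpha_i))^{-1}$ are chosen precisely so that the normalized weights $w_i/(\sum_{j} w_j)$ reproduce the sequence $\alpha_t^i$; this lets me convert a standard \emph{unweighted} FTRL analysis with the importance-weighted loss estimators $\hat\ell_t$ into the desired weighted statement.

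\emph{First I would} carry out the deterministic FTRL potential/regret analysis. Writing $\theta_{t+1}(b) \propto \exp[-(\eta_t/w_t)\sum_{i\le t} w_i\hat\ell_i(b)]$, the effective learning rate on the cumulative weighted loss is $\eta_t/w_t$, so the standard mirror-descent (entropy-regularized FTRL) decomposition gives a regret bound of the form (stability term) $+$ (penalty term). The stability term contributes $\sum_i \tfrac{\eta_i}{2}\sum_b w_i \theta_i(b)\hat\ell_i(b)^2$ after rescaling, and the penalty term contributes $\log B$ divided by the final learning rate. The exploration parameter $\gamma_t$ in the denominator of $\hat\ell_t(b) = \tilde\ell_t(b_t)\mathbb{I}\{b_t=b\}/(\theta_t(b)+\gamma_t)$ is what controls the second moment $\sum_b \theta_i(b)\hat\ell_i(b)^2 \le \sum_b \theta_i(b)/(\theta_i(b)+\gamma_i)^2 \le B$ and simultaneously keeps $\hat\ell$ a slight underestimate of $\ell$, which is the standard bias-variance tradeoff for bandit feedback.

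\emph{Next I would} handle the two genuinely bandit-specific complications. (i) \textbf{Bias from $\gamma_t$:} $\hat\ell_t$ is not unbiased — $\E[\hat\ell_t(b)\mid \theta_t] = \ell_t(b)\theta_t(b)/(\theta_t(b)+\gamma_t)$, undershooting $\ell_t(b)$ by at most $\gamma_t\ell_t(b)$. I must account for this bias in both the learner's loss and the comparator's loss, contributing a $\sum_i \eta_i \cdot (\text{something}) \gamma_i$-type term. (ii) \textbf{High-probability conversion:} the statement is a high-probability (not in-expectation) bound, so I would control the martingale differences $\sum_i w_i(\hat\ell_i(b) - \E[\hat\ell_i(b)])$ via a Freedman-type / Bernstein-type concentration inequality for the importance-weighted estimators, using the standard trick that $\gamma_t\hat\ell_t(b) \le 1$ makes the exponential moments of $\hat\ell$ tractable. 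This yields an additive $\log(B/\delta)$ inside the square root, uniformly over the $B$ comparator arms (plus a union bound over $b$). Plugging in $\eta_t = \gamma_t = \sqrt{H\log B/(Bt)}$ and the weight ratios (which satisfy $w_t/\sum_{i\le t}w_i = \alpha_t^t$ and more generally reproduce the $\alpha_t^i$ profile, with $\max_i \alpha_t^i \le 2H/t$ and $\sum_i(\alpha_t^i)^2\le 2H/t$ from Lemma~\ref{lem:step_size}) balances penalty against stability to give the $\sqrt{HB\log(B/\delta)/t}$ rate; the extra $H$ factor is exactly the price of the $\alpha_t^i$ weights, as flagged in the text.

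\emph{The hard part will be} the bookkeeping that ties the changing step size $\eta_t/w_t$ together with the weights $w_i$ so that the telescoping FTRL penalty and the accumulated stability terms collapse cleanly into the claimed per-$t$ rate rather than a cumulative sum — the nonstandard combination of (a) time-varying learning rate, (b) exponentially growing weights $w_i$, and (c) high-probability bandit concentration is precisely what the authors note is missing from standard references. Concretely, I expect the delicate estimate to be showing that $\sum_{i=1}^t \eta_i \, w_i \, (\text{second moment of }\hat\ell_i) \big/ (\sum_{i\le t} w_i)$ telescopes to $O(\sqrt{HB\log B/t})$ using $\sum_i (\alpha_t^i)^2 \le 2H/t$; once the single-step $\AVGReg$ bound is in hand, the cumulative bound $\CUMReg(B,t,\cdot) = 20\sqrt{HBt\log(B/\delta)} \ge \sum_{t'\le t}\AVGReg(B,t',\cdot)$ follows immediately from $\sum_{t'=1}^t 1/\sqrt{t'} \le 2\sqrt{t}$, and concavity of $\CUMReg$ in $t$ is evident since $\sqrt{\cdot}$ is concave.
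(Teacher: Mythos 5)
Your proposal follows essentially the same route as the paper: the same observation that $w_i/w_j = \alpha_t^i/\alpha_t^j$ reduces the $\alpha_t^i$-weighted regret to a $w_i$-weighted FTRL analysis, the same three-way decomposition into the FTRL stability/penalty terms, the $\gamma_t$-induced bias, and a high-probability martingale correction via the implicit-exploration ($\gamma_t\hat\ell_t\le$ const) exponential-moment trick of Neu, followed by plugging in $\eta_t=\gamma_t$ and the properties of $\alpha_t^i$ from Lemma~\ref{lem:step_size}. The cumulative bound via $\sum_{t'\le t}1/\sqrt{t'}\le 2\sqrt{t}$ and the monotonicity/concavity checks also match the paper's proof.
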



To prove Corollary \ref{cor:single-step-regret}, we show a more general weighted regret guarantee which works for any set of weights $\{w_i\}_{i=1}^\infty$ in addition to $\{\alpha_{t}^i\}_{i=1}^t$. In particular, a general weighted regret is defined as 
\begin{equation} \label{eq:weighted_regret}
\AVGRegret(t) =\max_{\theta^{\star}}\sum_{i=1}^t{w_i\left< \theta_i-\theta^{\star}, l_i \right>}
\end{equation}


\begin{theorem}
    \label{thm:adv-bandit}
    For any $t \le K$, following Algorithm~\ref{algorithm:FTRL}, if $\eta _i \le 2 \gamma_i$ and $\eta _i$ is non-increasing for all $i \le t$, let $\iota = \log(B/\delta)$ 
    , then with probability $1-3\delta$, we have
    $$
    \AVGRegret(t) \le \frac{w_t\log B}{\eta _t}+\frac{B}{2}\sum_{i=1}^t{\eta _iw_{i}}+\frac{1}{2}\max _{i\le t}w_i\iota +B\sum_{i=1}^t{\gamma _iw_i}+\sqrt{2\iota \sum_{i=1}^t{w_{i}^{2}}}+\max _{i\le t}w_i\iota /\gamma _t.
    $$
\end{theorem}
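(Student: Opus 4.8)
The plan is to run the by-now-standard high-probability analysis of EXP3-type algorithms, adapted to three new features: the weighted regret $\AVGRegret(t)$, the time-varying recency-reweighted learning rate $\eta_t/w_t$, and the implicit-exploration bias $\gamma_t$ in the loss estimator $\hat l_i$. First I would reduce the comparator: since $\theta\mapsto\sum_{i} w_i\langle\theta_i-\theta,l_i\rangle$ is linear on $\Delta_\cB$, its maximum is attained at a vertex $e_{b^\star}$, so it suffices to control the regret against each pure comparator and then union-bound over the $B$ actions (this is where the $\log B$ hidden in $\iota$ is spent). Then I would split
\[
\AVGRegret(t)=\underbrace{\sum_{i=1}^t w_i\langle\theta_i,l_i-\hat l_i\rangle}_{(A)}+\underbrace{\sum_{i=1}^t w_i\langle\theta_i-e_{b^\star},\hat l_i\rangle}_{(B)}+\underbrace{\sum_{i=1}^t w_i\langle e_{b^\star},\hat l_i-l_i\rangle}_{(C)},
\]
and bound the three pieces separately.

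Term $(B)$ is the genuine follow-the-regularized-leader regret measured on the \emph{observed} estimated losses $\hat l_i$. I would analyze it by the classical FTRL potential argument with the negative-entropy regularizer and effective step size $\eta_i/w_i$. Because $\eta_i$ is non-increasing and the weights $w_i$ are non-decreasing (one checks $w_1=1$, $w_2=H+1$, $w_3=(H+1)(H+2)/2,\dots$ is increasing), the sequence $\eta_i/w_i$ is non-increasing, so the regularization/telescoping part collapses to $\frac{w_t\log B}{\eta_t}$, the range of negentropy on the simplex being $\log B$. The hypothesis $\eta_i\le 2\gamma_i$ is exactly what keeps every exponent $\eta_i\hat l_i(b)\le\eta_i/\gamma_i\le 2$ bounded, legitimizing the second-order bound $e^{-x}\le 1-x+x^2$ and producing the stability term $\tfrac12\sum_i\eta_i w_i\sum_b\theta_i(b)\hat l_i(b)^2$. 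Since $\E[\sum_b\theta_i(b)\hat l_i(b)^2\mid\mathcal F_{i-1}]=\sum_b\frac{\theta_i(b)^2}{(\theta_i(b)+\gamma_i)^2}\le B$, the main contribution is $\frac{B}{2}\sum_i\eta_i w_i$, and the mean-zero fluctuation of this term is a martingale with increments $\cO(w_i)$ (again by $\eta_i\le2\gamma_i$) absorbed into the lower-order terms below.

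For term $(A)$ I would separate conditional mean from fluctuation. The conditional mean is precisely the implicit-exploration bias, $\E[\langle\theta_i,l_i-\hat l_i\rangle\mid\mathcal F_{i-1}]=\sum_b\theta_i(b)l_i(b)\frac{\gamma_i}{\theta_i(b)+\gamma_i}\le\gamma_i B$, summing with weights to $B\sum_i\gamma_i w_i$. The fluctuation is a martingale whose increments lie in an interval of length $w_i$ and whose conditional second moments are at most $w_i^2$ (as $\langle\theta_i,\hat l_i\rangle\in[0,1]$), so Freedman's inequality gives $\sqrt{2\iota\sum_i w_i^2}+\tfrac12\max_{i\le t}w_i\,\iota$. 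Term $(C)$ is handled by the implicit-exploration (Neu-style) supermartingale lemma: the $+\gamma_i$ shift makes $\hat l_i(b)$ a slight \emph{under}estimate, so for $\mathcal F_{i-1}$-measurable coefficients $\lambda_i\in[0,2\gamma_i]$ the quantity $\sum_i\lambda_i(\hat l_i(b)-l_i(b))$ is bounded by $\log(1/\delta)$ with high probability. Applying this with $\lambda_i=\frac{2\gamma_t}{\max_{j\le t}w_j}\,w_i\le 2\gamma_t\le 2\gamma_i$ for each fixed vertex $b^\star$ and union-bounding over the $B$ vertices gives $(C)\le\frac{\max_{i\le t}w_i\,\iota}{2\gamma_t}$. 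Collecting $(A)$, $(B)$, $(C)$ and union-bounding over the three high-probability events (total failure $3\delta$) yields the claimed six-term inequality.

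I expect the main obstacle to be the stability term of $(B)$: the natural quantity $\sum_b\theta_i(b)\hat l_i(b)^2$ is random and deterministically as large as $1/\gamma_i$, so the target $\frac{B}{2}\sum_i\eta_i w_i$ holds only in conditional expectation, and one must verify that the accompanying martingale fluctuation is dominated by the already-present $\sqrt{2\iota\sum_i w_i^2}$ and $\max_{i\le t}w_i\,\iota$ terms rather than spawning a new, larger term. Keeping the bias accounting, the variance proxy, and the $1/\gamma_t$ rescaling mutually consistent so that the constants in the final bound actually close is the delicate bookkeeping here.
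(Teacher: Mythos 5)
Your decomposition is exactly the paper's (their $(A)$ is your $(B)$, their $(B)$ is your $(A)$, their $(C)$ is your $(C)$), and your treatment of the bias term ($B\sum_i\gamma_i w_i$ plus an Azuma/Freedman fluctuation) and of the comparator term (the Neu-style implicit-exploration supermartingale applied per vertex with a union bound over $\cB$) matches Lemmas \ref{lem:bound-B} and \ref{lem:bound-C}. The one place you genuinely diverge is the FTRL stability term, and that is precisely where your argument does not close. You propose to replace $\sum_b\theta_i(b)\hat l_i(b)^2$ by its conditional expectation ($\le B$) and absorb the resulting martingale fluctuation into the existing terms; but that fluctuation has increments of order $w_i$ and conditional variance of order $B\eta_i w_i^2/\gamma_i$, so Freedman produces either a second $\sqrt{2\iota\sum_i w_i^2}$-sized term or (after AM--GM) a doubling of $\frac{B}{2}\sum_i\eta_i w_i$ plus another $\max_i w_i\iota$ --- none of which fits inside the six terms of the stated inequality. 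You also double-book the $\frac12\max_{i\le t}w_i\iota$ budget: you spend it on the Freedman correction for your term $(A)$, whereas in the paper that term is generated elsewhere and the bias fluctuation is handled by plain Azuma--Hoeffding with no extra additive $\iota$ term.

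The paper's fix avoids the martingale entirely: since $\theta_i(b)\hat l_i(b)\le 1$ deterministically, one has $\langle\theta_i,\hat l_i^2\rangle\le\sum_b\hat l_i(b)$ pointwise, and then the \emph{same} implicit-exploration concentration (Lemma \ref{lem:Neu}, with the constant coefficients $c_i(b)=\eta_i\le 2\gamma_i$) converts $\sum_i\eta_i w_i\sum_b\hat l_i(b)$ into $\sum_i\eta_i w_i\sum_b l_i(b)+\max_{i\le t}w_i\iota\le B\sum_i\eta_i w_i+\max_{i\le t}w_i\iota$. This is where the hypothesis $\eta_i\le 2\gamma_i$ is actually used, and it yields exactly the $\frac{B}{2}\sum_i\eta_i w_i+\frac12\max_{i\le t}w_i\iota$ contribution in the statement. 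With that substitution your outline becomes the paper's proof; without it, the bound you can prove is weaker by constant factors than the one claimed.
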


We postpone the proof of theorem \ref{thm:adv-bandit} to the end of this section. We first show how to obtain Corollary \ref{cor:single-step-regret} from Theorem \ref{thm:adv-bandit}.

\begin{proof}[Proof of Corollary \ref{cor:single-step-regret}]
The weights $\{w_t\}_{t=1}^K$ we choose satisfy a nice property: for any $t$ we have
$$
\frac{w_i}{w_j}=\frac{\alpha _{t}^{i}}{\alpha _{t}^{j}}.
$$

We prove this for $i\le j$ and the other case is similar. By definition, 
$$
\frac{w_i}{w_j}=\frac{\alpha _i}{\alpha _j}\prod_{k=i+1}^j{\left( 1-\alpha _k \right)},
$$
and 
$$\frac{\alpha _{t}^{i}}{\alpha _{t}^{j}}=\frac{\alpha _i}{\alpha _j}\prod_{k=i+1}^j{\left( 1-\alpha _k \right)}.
$$
We can easily verify that the RHS are the same.

Define $\wAVGRegret(t):=\max_{\theta \in \Delta_\cB}\sum_{i=1}^t \alpha_t^{i} [\la \theta_i, \ell_i \ra - \la \theta, \ell_i\ra ]$. By plugging $w_t=\alpha _t\left( \prod_{i=2}^t{\left( 1-\alpha _i \right)} \right) ^{-1}$ into Theorem~\ref{thm:adv-bandit}, 
and using the property above, we have the regret guarantee

\begin{align*}
   \wAVGRegret(t) \le  \frac{\alpha _t\log B}{\eta _t}+\frac{B}{2}\sum_{i=1}^t{\eta _i\alpha_{t}^i}+\frac{1}{2}\alpha _t\iota +B\sum_{i=1}^t{\gamma _i\alpha_{t}^i}+\sqrt{2\iota \sum_{i=1}^t{\left(\alpha_{t}^i \right) ^2}}+\alpha _t\iota /\gamma _t.
\end{align*}

   By choosing $\eta _t=\gamma _t=\sqrt{\frac{H\log B}{Bt}}$ and using Lemma \ref{lem:step_size}, we can further upper bound the regret by 
   \begin{align*}
      \wAVGRegret(t) \le &\frac{\left( H+1 \right) \log B}{H+t}\sqrt{\frac{Bt}{H\log B}}+\frac{3}{2}\sqrt{HB\log B}\sum_{i=1}^t{\frac{\alpha _{t}^{i}}{\sqrt{t}}}\\
      +&\frac{\left( H+1 \right) \iota}{2\left( H+t \right)}+\sqrt{2\iota \sum_{i=1}^t{\left( \alpha _{t}^{i} \right) ^2}}+\frac{\left( H+1 \right) \iota}{\left( H+t \right)}\sqrt{\frac{Bt}{H\log B}} \\
\le& 2\sqrt{\frac{HB\log B}{t}}+3\sqrt{\frac{HB\log B}{t}}+\frac{H\iota}{t}+2\sqrt{\frac{H\iota}{t}}+2\sqrt{\frac{HB\log B}{t}}\\
\le& 9\sqrt{HB\iota /t}+H\iota /t.
   \end{align*}
   
To further simplify the above upper bound, consider two cases:
\begin{itemize}
   \item If $H\iota /t \le 1$, $\sqrt{H\iota /t} \ge H\iota /t$ and thus $\wAVGRegret(t) \le 10\sqrt{HB\iota /t}$.
   \item If $H\iota /t \ge 1$, $\sqrt{HB\iota /t} \ge 1 \ge \wAVGRegret(t)$ where the last step is by the definition of $\wAVGRegret(t)$. Therefore we have $\wAVGRegret(t) \le \sqrt{HB\iota /t}$.
\end{itemize}
Combining the two cases above gives $\wAVGRegret(t) \le 10\sqrt{HB\iota /t}$.
   
Finally, we pick $\AVGReg(B, t, \log(1/\delta)) := 10\sqrt{HB\iota /t}$, which is non-decreasing in $B$. Since $\sum_{t'=1}^t\AVGReg(B, t, \log(1/\delta)) \le 20\sqrt{HBt\iota }$, we choose $\CUMReg(B, t, \log(1/\delta)) = 20\sqrt{HBt\iota }$, which is concave in $t$.
\end{proof}


To prove Theorem \ref{thm:adv-bandit}, we first note that the weighted regret \eqref{eq:weighted_regret} can be decomposed into three terms
\begin{align}
   \sum_{i=1}^t{w_i\left< \theta_i-\theta^{\star}, l_i \right>} =&\sum_{i=1}^t{w_i\left< \theta_i-\theta^{\star},l_i \right>}
\nonumber \\ 
=&\underset{\left( A \right)}{\underbrace{\sum_{i=1}^t{w_i\left< \theta_i-\theta^{\star},\hat{l}_i \right>}}}+\underset{\left( B \right)}{\underbrace{\sum_{i=1}^t{w_i\left< \theta_i,l_i-\hat{l}_i \right>}}}+\underset{\left( C \right)}{\underbrace{\sum_{i=1}^t{w_i\left< \theta^{\star},\hat{l}_i-l_i \right>}}} \label{eq:regret_decomposition}
\end{align}

The rest of this section is devoted to bounding three terms above. We begin with the following useful lemma adapted from Lemma 1 in \cite{neu2015explore}, which is crucial in achieving high probability guarantees.

\begin{lemma}
   \label{lem:Neu}
   For any sequence of coefficients $c_1, c_2, \ldots, c_t$ s.t. $c_i \in [0,2\gamma_i]^B$ is $\cF_i$-measurable, we have with probability $1-\delta$,
$$
\sum_{i=1}^t{w_i\left< c_i,\hat{l}_i-l_i \right>}\le \max _{i\le t}w_i\iota.
$$
\end{lemma}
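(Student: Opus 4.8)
The plan is to adapt the implicit-exploration (IX) argument of \cite{neu2015explore} to the present weighted, noisy-feedback setting. The starting point is to recognize the estimator in Line~\ref{line:FTRL_loss}, namely $\hat{l}_i(b) = \tilde{l}_i(b_i)\mathbb{I}\{b_i=b\}/(\theta_i(b)+\gamma_i)$, as a deliberately \emph{downward-biased} variant of the unbiased importance-sampling estimator $\bar{l}_i(b) \defeq \tilde{l}_i(b_i)\mathbb{I}\{b_i=b\}/\theta_i(b)$. The extra $\gamma_i$ in the denominator is precisely what will let us convert the exponentiated partial sums into a supermartingale whose bound carries no $\theta_i(b)^{-1}$ blow-up, and hence no dependence on $B$.

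First I would prove a deterministic pointwise inequality: for every coordinate $b$, since $0\le c_i(b)\le 2\gamma_i$ and the feedback satisfies $\tilde{l}_i(b_i)\in[0,1]$,
\[
c_i(b)\hat{l}_i(b) \le \log\!\left(1 + c_i(b)\bar{l}_i(b)\right).
\]
Only the coordinate $b=b_i$ is nonzero (both $\hat{l}_i$ and $\bar{l}_i$ carry the indicator $\mathbb{I}\{b_i=b\}$), so writing $\ell=\tilde{l}_i(b_i)$, $p=\theta_i(b_i)$, $\gamma=\gamma_i$, this reduces to $c\,\ell/(p+\gamma)\le \log(1+c\,\ell/p)$ for $c\in[0,2\gamma]$. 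I would establish it in three elementary steps: (i) $\ell\le 1$ gives $2\gamma\,\ell/(p+\gamma)\le (2\gamma\ell/p)/(1+\gamma\ell/p)$; (ii) the bound $\log(1+z)\ge z/(1+z/2)$ with $z=2\gamma\bar{l}_i(b_i)$ turns the right-hand side into $\log(1+2\gamma\bar{l}_i(b_i))$; (iii) the concavity bound $\lambda\log(1+z)\le\log(1+\lambda z)$ with $\lambda=c/(2\gamma)\in[0,1]$ absorbs the general coefficient. Summing over $b$ (again only $b_i$ contributes) yields $\la c_i,\hat{l}_i\ra \le \log(1+\la c_i,\bar{l}_i\ra)$.

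Next I would run the standard supermartingale and Markov argument, first without weights. Let $\E_i[\cdot]$ denote conditioning on the history through $\theta_i$ and $l_i$ (i.e.\ before $b_i$ is drawn), so that $c_i$ is $\cF_i$-measurable and $\E_i[\bar{l}_i(b)]=l_i(b)$. Exponentiating the last display and taking $\E_i$,
\[
\E_i\!\left[\exp\la c_i,\hat{l}_i\ra\right] \le \E_i\!\left[1+\la c_i,\bar{l}_i\ra\right] = 1+\la c_i,l_i\ra \le \exp\la c_i,l_i\ra.
\]
Hence $M_t\defeq \exp\!\big(\sum_{i=1}^t \la c_i,\hat{l}_i-l_i\ra\big)$ is a supermartingale with $\E M_t\le M_0=1$, and Markov's inequality gives $\sum_{i=1}^t\la c_i,\hat{l}_i-l_i\ra\le \log(1/\delta)$ with probability at least $1-\delta$.

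Finally I would restore the weights by a rescaling that is legitimate because the $w_i$ are deterministic. Setting $W\defeq \max_{i\le t}w_i$ and $c_i'\defeq (w_i/W)c_i$, each $c_i'$ still lies in $[0,2\gamma_i]^B$ and is $\cF_i$-measurable, so the unweighted bound applies to $\{c_i'\}$; multiplying through by $W$ and using $\log(1/\delta)\le\iota$ gives $\sum_{i=1}^t w_i\la c_i,\hat{l}_i-l_i\ra \le W\log(1/\delta)\le \max_{i\le t}w_i\cdot\iota$, as claimed. The main obstacle is getting the pointwise IX inequality exactly right: it must simultaneously tolerate the factor-of-two slack in $c_i\le 2\gamma_i$, the noisy feedback (handled only through $\tilde{l}_i(b_i)\le 1$ and the unbiasedness $\E_i[\bar{l}_i]=l_i$), and the passage from the per-coordinate bounds to the inner-product bound; once that inequality is in place, everything downstream is a routine supermartingale estimate.
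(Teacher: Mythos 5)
Your proof is correct and follows essentially the same route as the paper's: the implicit-exploration pointwise inequality $c_i(b)\hat{l}_i(b)\le \log(1+c_i(b)\bar{l}_i(b))$ (via $\frac{z}{1+z/2}\le\log(1+z)$ and $\lambda\log(1+z)\le\log(1+\lambda z)$), followed by the supermartingale--Markov argument. Your rescaling $c_i'=(w_i/W)c_i$ with $W=\max_{i\le t}w_i$ is exactly how the paper absorbs the weights (it normalizes by $w=\max_{i\le t}w_i$ inside the exponent), so the two arguments coincide up to bookkeeping.
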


\begin{proof}
   Define $w=\max _{i\le t}w_i$. By definition, 
   \begin{align*}
      w_i\hat{l}_i\left( b \right) =&\frac{w_i\tilde{l}_i\left( b \right) \mathbb{I}\left\{ b_i=b \right\}}{\theta_i\left( b \right) +\gamma _i}
      \le \frac{w_i\tilde{l}_i\left( b \right) \mathbb{I}\left\{ b_i=b \right\}}{\theta_i\left( b \right) +\frac{w_i\tilde{l}_i\left( b \right) \mathbb{I}\left\{ b_i=b \right\}}{w}\gamma _i}
      \\
      =&\frac{w}{2\gamma _i}\frac{\frac{2\gamma _iw_i\tilde{l}_i\left( b \right) \mathbb{I}\left\{ b_i=b \right\}}{w \theta_i\left( b \right)}}{1+\frac{\gamma _iw_i\tilde{l}_i\left( b \right) \mathbb{I}\left\{ b_i=b \right\}}{w\theta_i\left( b \right)}}
      \overset{\left( i \right)}{\le} \frac{w}{2\gamma _i}\log \left( 1+\frac{2\gamma _iw_i\tilde{l}_i\left( b \right) \mathbb{I}\left\{ b_i=b \right\}}{w \theta_i\left( b \right)} \right)    
   \end{align*}
   where $(i)$ follows from $\frac{z}{1+z/2}\le \log \left( 1+z \right) $ for all $z \ge 0$.

   Defining the sum
   $$
   \hat{S}_i=\frac{w_i}{w}\left< c_i,\hat{l}_i \right> , \,\,\, S_i=\frac{w_i}{w}\left< c_i,l_i \right>, 
   $$
we have
\begin{align*}
   \E_i\left[ \exp \left( \hat{S}_i \right) \right] &\le \E_i\left[ \exp \left( \sum_b{\frac{c_i\left( b \right)}{2\gamma _i}\log \left( 1+\frac{2\gamma _iw_i\tilde{l}_i\left( b \right) \mathbb{I}\left\{ b_i=b \right\}}{w \theta_i\left( b \right)} \right)} \right) \right] 
\\
&\overset{\left( i \right)}{\le}\E_i\left[ \prod_b{\left( 1+\frac{c_i\left( b \right) w_i\tilde{l}_i\left( b \right) \mathbb{I}\left\{ b_i=b \right\}}{w \theta_i\left( b \right)} \right)} \right] 
\\
&=\E_i\left[ 1+\sum_b{\frac{c_i\left( b \right) w_i\tilde{l}_i\left( b \right) \mathbb{I}\left\{ b_i=b \right\}}{w \theta_i\left( b \right)}} \right] 
\\
&=1+S_i
\le \exp \left( S_i \right) 
\end{align*}
where $(i)$ follows from $z_1\log \left( 1+z_2 \right) \le \log \left( 1+z_1z_2 \right) $ for any $0 \le z_1 \ge 1$ and $z_2 \ge -1$. Note that here we are using the condition $c_i\left( b \right) \le 2\gamma _i$ for all $b\in[B]$.

Equipped with the above bound, we can now prove the concentration result.
\begin{align*}
   \P\left[ \sum_{i=1}^t{\left( \hat{S}_i-S_i \right)}\ge \iota \right] &=\P\left[ \exp \left[ \sum_{i=1}^t{\left( \hat{S}_i-S_i \right)} \right] \ge \frac{B}{\delta} \right] 
\\
&\le \frac{\delta}{B}\E_t\left[ \exp \left[ \sum_{i=1}^t{\left( \hat{S}_i-S_i \right)} \right] \right] 
\\
&\le \frac{\delta}{B}\E_{t-1}\left[ \exp \left[ \sum_{i=1}^{t-1}{\left( \hat{S}_i-S_i \right)} \right] E_t\left[ \exp \left( \hat{S}_t-S_t \right) \right] \right] 
\\
&\le \frac{\delta}{B}\E_{t-1}\left[ \exp \left[ \sum_{i=1}^{t-1}{\left( \hat{S}_i-S_i \right)} \right] \right] 
\\
&\le \cdots \le \frac{\delta}{B}.
\end{align*}
We conclude the proof by taking a union bound.
\end{proof}

With Lemma~\ref{lem:Neu}, we can bound the three terms $(A)$,$(B)$ and $(C)$ in \eqref{eq:regret_decomposition} separately as below.

\begin{lemma}
   \label{lem:bound-A}
   For any $t \in [K]$, suppose $\eta _i \le 2 \gamma_i$  for all $i \le t$. Then with probability at least $1-\delta$,  for any $\theta^{\star} \in \Delta^B$, 
   $$
   \sum_{i=1}^t{w_i\left< \theta_i-\theta^{\star},\hat{l}_i \right>} \le \frac{w_t\log B}{\eta _t}+\frac{B}{2}\sum_{i=1}^t{\eta _iw_{i}}+\frac{1}{2}\max _{i\le t}w_i\iota.
   $$
\end{lemma}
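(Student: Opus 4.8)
The plan is to read off the exponential-weights structure of Algorithm~\ref{algorithm:FTRL} and reduce the bound to a standard ``penalty $+$ stability'' decomposition. Writing $L_t(b)=\sum_{i=1}^t w_i\hat{l}_i(b)$ for the weighted cumulative loss, the update $\theta_{t+1}(b)\propto\exp[-(\eta_t/w_t)L_t(b)]$ is exactly the FTRL iterate $\theta_{t+1}=\argmin_{\theta\in\Delta^B}\{\langle\theta,L_t\rangle+(w_t/\eta_t)R(\theta)\}$, where $R(\theta)=\sum_b\theta(b)\log\theta(b)$ is the negative entropy (shifted so that $0\le R(\cdot)+\log B\le\log B$ on the simplex, with minimizer the uniform $\theta_1$). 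First I would invoke the standard FTRL regret identity for this regularizer to obtain, for every $\theta^\star\in\Delta^B$,
\begin{equation*}
\sum_{i=1}^t w_i\langle\theta_i-\theta^\star,\hat{l}_i\rangle \le \frac{w_t}{\eta_t}\big(R(\theta^\star)-R(\theta_1)\big) + \sum_{i=1}^t w_i\langle\theta_i-\theta_{i+1},\hat{l}_i\rangle,
\end{equation*}
where the hypothesis that $\eta_i$ is non-increasing guarantees that the terms generated by the changing regularization coefficient $w_t/\eta_t$ enter with a favorable sign and may be dropped. Since $R(\theta^\star)-R(\theta_1)\le\log B$, the first term is at most $w_t\log B/\eta_t$, which is the leading term in the claim.

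It remains to control the stability sum $\sum_i w_i\langle\theta_i-\theta_{i+1},\hat{l}_i\rangle$. Here I would use that the estimates $\hat{l}_i$ are nonnegative together with the elementary inequality $e^{-x}\le 1-x+\tfrac{x^2}{2}$ for $x\ge0$, applied to the exponential update at local learning rate $\eta_i/w_i$; this is the usual ``local-norm'' bound and yields
\begin{equation*}
\sum_{i=1}^t w_i\langle\theta_i-\theta_{i+1},\hat{l}_i\rangle \le \frac12\sum_{i=1}^t \eta_i w_i\,\langle\theta_i,\hat{l}_i^2\rangle \le \frac12\sum_{i=1}^t \eta_i w_i\sum_{b}\hat{l}_i(b),
\end{equation*}
where the last step uses the pointwise fact $\theta_i(b)\hat{l}_i(b)\le1$ (immediate from $\hat{l}_i(b)=\tilde{l}_i(b_i)\mathbb{I}\{b_i=b\}/(\theta_i(b)+\gamma_i)$ and $\tilde{l}_i\le1$), so that $\theta_i(b)\hat{l}_i(b)^2\le\hat{l}_i(b)$.

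Finally I would pass from the random $\hat{l}_i$ to its mean. Writing $\sum_b\hat{l}_i(b)=\langle\mathbf{1},\hat{l}_i\rangle$ and applying Lemma~\ref{lem:Neu} with coefficient vector $c_i=\eta_i\mathbf{1}$---which is admissible precisely because the hypothesis $\eta_i\le2\gamma_i$ gives $c_i\in[0,2\gamma_i]^B$---bounds $\sum_i w_i\langle\eta_i\mathbf{1},\hat{l}_i-l_i\rangle\le\max_{i\le t}w_i\,\iota$ with probability $1-\delta$. Combining this with $\sum_b l_i(b)\le B$ converts the stability sum into $\frac{B}{2}\sum_i\eta_i w_i+\frac12\max_{i\le t}w_i\,\iota$, which completes the proof. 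The main obstacle I anticipate is the bookkeeping around the simultaneously changing weights $w_i$ and learning rates $\eta_i$: making the penalty term collapse to the single index $w_t\log B/\eta_t$ (rather than a sum over $i$) is exactly what forces the monotonicity assumption on $\eta_i$, and one must verify that the per-round change of the effective learning rate $\eta_i/w_i$ inside the stability term contributes nothing of the wrong sign, which is where nonnegativity $\hat{l}_i\ge0$ is used.
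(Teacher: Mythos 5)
Your proposal is correct and follows essentially the same route as the paper's proof: the standard FTRL penalty-plus-stability decomposition with effective step size $\eta_i/w_i$ (which the paper simply cites from Lattimore--Szepesv\'ari rather than deriving), the local-norm bound $\frac{1}{2}\sum_i \eta_i w_i \langle \theta_i, \hat{l}_i^2\rangle$, the pointwise estimate $\theta_i(b)\hat{l}_i(b)\le 1$ to drop a power of $\hat{l}_i$, and Lemma~\ref{lem:Neu} with $c_i=\eta_i\mathbf{1}$ to replace $\hat{l}_i$ by $l_i$ at cost $\frac{1}{2}\max_{i\le t}w_i\iota$. The only difference is that you make explicit the monotonicity bookkeeping for $w_i/\eta_i$ that the paper leaves implicit in its citation.
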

\begin{proof}
   We use the standard analysis of FTRL with changing step size, see for example Exercise 28.13 in \cite{lattimore2018bandit}. Notice the essential step size is $\eta _t/w_t$,
   \begin{align*}
      \sum_{i=1}^t{w_i\left< \theta_i-\theta^{\star},\hat{l}_i \right>}&\le \frac{w_t\log B}{\eta _t}+\frac{1}{2}\sum_{i=1}^t{\eta _i w_i}\left< \theta_i,\hat{l}_{i}^{2} \right> 
\\
&\le \frac{w_t\log B}{\eta _t}+\frac{1}{2}\sum_{i=1}^t{\sum_{b \in \cB}{\eta _i}w_i\hat{l}_i\left( b \right)}
\\
&\overset{\left( i \right)}{\le}\frac{w_t\log B}{\eta _t}+\frac{1}{2}\sum_{i=1}^t{\sum_{b \in \cB}{\eta _i}w_il_i\left( b \right)}+\frac{1}{2}\max _{i\le t}w_i\iota  
\\
&\le \frac{w_t\log B}{\eta _t}+\frac{B}{2}\sum_{i=1}^t{\eta _i w_{i}}+\frac{1}{2}\max _{i\le t}w_i\iota 
   \end{align*}
where $(i)$ is by using Lemma~\ref{lem:Neu} with $c_i(b)=\eta _i$ for any $b$. The any-time guarantee is justifed by taking union bound.
\end{proof}

\begin{lemma}
   \label{lem:bound-B}
   For any $t \in [K]$, with probability $1-\delta$,  
$$
\sum_{i=1}^t{w_i\left< \theta_i,l_i-\hat{l}_i \right>} \le B\sum_{i=1}^t{\gamma _iw_i}+\sqrt{2\iota \sum_{i=1}^t{w_{i}^{2}}}.
$$
\end{lemma}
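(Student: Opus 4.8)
The plan is to split each summand $w_i\la\theta_i, l_i-\hat{l}_i\ra$ into a deterministic bias contribution and a mean-zero martingale fluctuation, and to control the two pieces by the two terms on the right-hand side respectively. Let $\cF_i$ denote the $\sigma$-field generated by everything up to and including the choice of $\theta_i$, so that $\theta_i$ and the adversary's loss $l_i$ are $\cF_i$-measurable, and write $\E_i[\cdot] = \E[\cdot\mid\cF_i]$. The first step is the decomposition
$$
\sum_{i=1}^t w_i\la\theta_i, l_i-\hat{l}_i\ra = \sum_{i=1}^t w_i\left(\la\theta_i, l_i\ra - \E_i\la\theta_i,\hat{l}_i\ra\right) + \sum_{i=1}^t w_i\left(\E_i\la\theta_i,\hat{l}_i\ra - \la\theta_i,\hat{l}_i\ra\right),
$$
where the first sum is the bias and the second is a martingale-difference sum.

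Next I would compute the conditional mean of the importance-weighted estimator defined in Line~\ref{line:FTRL_loss}. Since $b_i\sim\theta_i$ and $\E[\tilde{l}_i(b_i)\mid b_i] = l_i(b_i)$, a two-layer expectation over the sampled action and the reward noise gives $\E_i[\tilde{l}_i(b_i)\mathbb{I}\{b_i=b\}] = l_i(b)\theta_i(b)$, hence
$$
\E_i[\hat{l}_i(b)] = \frac{l_i(b)\theta_i(b)}{\theta_i(b)+\gamma_i} = l_i(b)\left(1 - \frac{\gamma_i}{\theta_i(b)+\gamma_i}\right).
$$
Consequently the per-coordinate bias is $l_i(b)-\E_i[\hat{l}_i(b)] = l_i(b)\gamma_i/(\theta_i(b)+\gamma_i)\ge 0$, and
$$
\la\theta_i, l_i\ra - \E_i\la\theta_i,\hat{l}_i\ra = \sum_{b\in\cB}\theta_i(b)\frac{l_i(b)\gamma_i}{\theta_i(b)+\gamma_i}\le \sum_{b\in\cB}\gamma_i = B\gamma_i,
$$
using $l_i(b)\le 1$ and $\theta_i(b)/(\theta_i(b)+\gamma_i)\le 1$. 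Weighting by $w_i$ and summing over $i$ bounds the bias term by $B\sum_{i=1}^t\gamma_i w_i$, which is exactly the first target term.

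For the martingale-difference term the key observation is the per-step boundedness of $\la\theta_i,\hat{l}_i\ra$: because $\hat{l}_i$ is supported on the single realized action $b_i$,
$$
\la\theta_i,\hat{l}_i\ra = \frac{\theta_i(b_i)\tilde{l}_i(b_i)}{\theta_i(b_i)+\gamma_i}\in[0,1],
$$
so each increment $w_i(\E_i\la\theta_i,\hat{l}_i\ra - \la\theta_i,\hat{l}_i\ra)$ is a martingale difference with $\E_i$-mean zero and absolute value at most $w_i$. Applying the Azuma--Hoeffding inequality to this sequence yields, with probability at least $1-\delta$, a bound of $\sqrt{2\log(1/\delta)\sum_{i=1}^t w_i^2}\le\sqrt{2\iota\sum_{i=1}^t w_i^2}$, the second target term. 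Adding the two bounds completes the argument. I expect the only real obstacle to be bookkeeping: setting up the filtration and the two-layer expectation carefully enough that the conditional-mean computation is rigorous, and verifying that $\la\theta_i,\hat{l}_i\ra\le 1$ (which crucially relies on the $+\gamma_i$ regularization in the denominator) so that the martingale increments are genuinely bounded and Azuma applies; everything else is a routine invocation of a standard concentration inequality.
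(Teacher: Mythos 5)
Your proposal is correct and follows essentially the same route as the paper's proof: the identical bias/martingale decomposition, the same computation $\E_i[\hat{l}_i(b)] = l_i(b)\theta_i(b)/(\theta_i(b)+\gamma_i)$ yielding the $B\sum_i \gamma_i w_i$ bias bound, and the same Azuma--Hoeffding application using $\la\theta_i,\hat{l}_i\ra\le 1$. No gaps.
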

\begin{proof}
 We further decopose it into 
 $$
 \sum_{i=1}^t{w_i\left< \theta_i,l_i-\hat{l}_i \right>}=\sum_{i=1}^t{w_i\left< \theta_i,l_i-\E_i\hat{l}_i \right>}+\sum_{i=1}^t{w_i\left< \theta_i,\E_i\hat{l}_i-\hat{l}_i \right>}.
$$

The first term is bounded by
\begin{align*}
\sum_{i=1}^t{w_i\left< \theta_i,l_i-\E_i\hat{l}_i \right>}=&\sum_{i=1}^t{w_i\left< \theta_i,l_i-\frac{\theta_i}{\theta_i+\gamma _i}l_i \right>}
\\
=&\sum_{i=1}^t{w_i\left< \theta_i,\frac{\gamma _i}{\theta_i+\gamma _i}l_i \right>}
\le B\sum_{i=1}^t{\gamma _iw_i}.
\end{align*}

To bound the second term, notice
$$
\left< \theta_i,\hat{l}_i \right> \le \sum_{b \in \cB}{\theta_i\left( b \right)}\frac{\mathbb{I}\left\{ b_t=b \right\}}{\theta_i(b)+\gamma _i}\le \sum_{b \in \cB}{\mathbb{I}\left\{ b_i=b \right\}}=1,
$$
thus $\{w_i\left< \theta_i,\E_i\hat{l}_i-\hat{l}_i \right>\}_{i=1}^t$ is a bounded martingale difference sequence w.r.t. the filtration $\{\cF_i\}_{i=1}^t$. By Azuma-Hoeffding,
$$
\sum_{i=1}^t{\left< \theta_i,\E_i\hat{l}_i-\hat{l}_i \right>}\le \sqrt{2\iota \sum_{i=1}^t{w_{i}^{2}}}.
$$
\end{proof}

\begin{lemma}
   \label{lem:bound-C}
   For any $t \in [K]$, with probability $1-\delta$,  for any $\theta^{\star} \in \Delta^B$, if $\gamma_i$ is non-increasing in $i$,
   $$
   \sum_{i=1}^t{w_i\left< \theta^{\star},\hat{l}_i-l_i \right>}\le \max _{i\le t}w_i\iota /\gamma _t.
   $$  
\end{lemma}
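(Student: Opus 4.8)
The plan is to recognize term $(C)$ as another instance of the high-probability martingale bound already established in Lemma~\ref{lem:Neu}, applied to a carefully rescaled coefficient sequence. Lemma~\ref{lem:Neu} controls $\sum_{i=1}^t w_i \la c_i, \hat{l}_i - l_i \ra$ whenever the coefficients satisfy $c_i \in [0, 2\gamma_i]^B$ and are $\cF_i$-measurable; the quantity we must bound is exactly of this form with $c_i = \theta^\star$, except that $\theta^\star \in \Delta^B$ can have entries as large as $1$, which violates the boundedness requirement when $\gamma_i$ is small. The fix is to apply the lemma not to $\theta^\star$ itself but to the scaled sequence $c_i \defeq 2\gamma_t \theta^\star$.

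First I would verify the hypotheses of Lemma~\ref{lem:Neu} for this choice. Since $\theta^\star$ and $\gamma_t$ are deterministic (the horizon $t$ is fixed), each $c_i$ is a constant vector and hence trivially $\cF_i$-measurable. For the boundedness $c_i \in [0, 2\gamma_i]^B$, I would invoke the hypothesis that $\gamma_i$ is non-increasing: for every $i \le t$ we have $\gamma_t \le \gamma_i$, so $c_i(b) = 2\gamma_t \theta^\star(b) \le 2\gamma_t \le 2\gamma_i$, using $\theta^\star(b) \le 1$. This is the one place the monotonicity assumption on $\gamma$ is genuinely used.

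Next, Lemma~\ref{lem:Neu} yields $\sum_{i=1}^t w_i \la 2\gamma_t \theta^\star, \hat{l}_i - l_i \ra \le \max_{i \le t} w_i \iota$; pulling the constant $2\gamma_t$ out of the inner product and dividing through gives $\sum_{i=1}^t w_i \la \theta^\star, \hat{l}_i - l_i \ra \le \max_{i\le t} w_i \iota / (2\gamma_t)$, which is in fact stronger than the claimed bound $\max_{i\le t} w_i \iota / \gamma_t$. To upgrade from a fixed $\theta^\star$ to the uniform statement ``for any $\theta^\star \in \Delta^B$,'' I would exploit linearity: the map $\theta^\star \mapsto \sum_i w_i \la \theta^\star, \hat{l}_i - l_i \ra$ is linear, so it suffices to establish the bound at the vertices $\theta^\star = e_b$ for $b \in [B]$, after which the general case follows by taking convex combinations and using $\sum_b \theta^\star(b) = 1$.

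The point requiring the most care — more a matter of accounting than a true obstacle — is the union bound over the $B$ vertices. Inspecting the proof of Lemma~\ref{lem:Neu}, the failure probability derived for a single coefficient sequence is actually $\delta/B$, which is precisely where the choice $\iota = \log(B/\delta)$ and the Markov step enter. Applying that single-sequence estimate to each of the $B$ vertex sequences $c_i = 2\gamma_t e_b$ and union-bounding therefore costs total failure probability $B \cdot (\delta/B) = \delta$, matching the guarantee stated in Lemma~\ref{lem:bound-C}. With all $B$ vertex bounds holding simultaneously on this event, the bound for an arbitrary $\theta^\star$ follows by convexity, completing the proof.
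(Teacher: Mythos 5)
Your proposal is correct and follows essentially the same route as the paper: apply Lemma~\ref{lem:Neu} to the rescaled basis vectors $c_i = \gamma_t e_b$ (you use $2\gamma_t e_b$, which only improves the constant by a factor of $2$), union-bound over the $B$ vertices using the $\delta/B$ single-sequence failure probability, and extend to arbitrary $\theta^\star \in \Delta^B$ by convexity. Your observation that the monotonicity of $\gamma_i$ is exactly what makes the coefficient bound $c_i(b) \le 2\gamma_i$ hold is the same (and the only) place the paper uses that hypothesis.
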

\begin{proof}
   Define a basis $\left\{ e_j \right\} _{j=1}^{B}$ of $\mathbb{R}^B$ by 
   $$
   e_j\left( b \right) =\begin{cases}
      \text{1 if }a=j\\
      \text{0 otherwise}\\
   \end{cases}
   $$

   Then for all the $j \in [B]$, we can apply Lemma~\ref{lem:Neu} with $c_i=\gamma_t e_j$. Sincee $c_i(b)\le \gamma_t \le \gamma_i$, the condition in Lemma~\ref{lem:Neu} is satisfied. As a result, 
   $$
   \sum_{i=1}^t{w_i\left< e_j,\hat{l}_i-l_i \right>}\le \max _{i\le t}w_i\iota /\gamma _t.
   $$ 
   
   Since any $\theta^{\star}$ is a convex combination of $\left\{ e_j \right\} _{j=1}^{B}$, by taking the union bound over $j \in [B]$, we have
   $$
   \sum_{i=1}^t{w_i\left< \theta^{\star},\hat{l}_i-l_i \right>}\le \max _{i\le t}w_i\iota /\gamma _t.
   $$
\end{proof}

Finally we are ready to prove Theorem \ref{thm:adv-bandit}.

\begin{proof}[Proof of Theorem \ref{thm:adv-bandit}]
Note the conditions in Lemma~\ref{lem:bound-A} and Lemma~\ref{lem:bound-C} are satisfied by assumptions.
Recall the regret decomposition \eqref{eq:regret_decomposition}. By bounding $(A)$ in Lemma~\ref{lem:bound-A}, $(B)$ in Lemma~\ref{lem:bound-B} and $(C)$ in Lemma~\ref{lem:bound-C}, with probability $1-3\delta$, we have that
 \begin{align*}
   \AVGRegret(t) \le& \frac{w_t\log B}{\eta _t}+\frac{B}{2}\sum_{i=1}^t{\eta _iw_{i}}+\frac{1}{2}\max _{i\le t}w_i\iota +B\sum_{i=1}^t{\gamma _iw_i}+\sqrt{2\iota \sum_{i=1}^t{w_{i}^{2}}}+\max _{i\le t}w_i\iota /\gamma _t.\qedhere
 \end{align*}
\end{proof}


\section{Adversarial Bandit with Weighted Swap Regret}
\label{sec:bandit_swap}

In this section, we adapt Follow-the-Regularized-Leader (FTRL) algorithm that achieves low weighted swap regret for the adversarial bandit problem. We follow a similar technique presented in \cite{blum2007external} which adapts external regret algorithms to swap regret algorithms for the unweighted case.



We present the FTRL\_swap algorithm in Algorithm \ref{algorithm:FTRL_swap}. Different from FTRL (Algorithm~\ref{algorithm:FTRL}), FTRL\_swap maintains an additional $B\times B$ matrix $\tilde\theta_t(\cdot|\cdot)$, and uses its eigenvector when taking actions. The matrix will be updated similarly to FTRL, with a subtle difference that the loss estimator here $\hat{\ell}_t(\cdot|b)$ is $\theta_t(b)$ times the loss estimator $\hat{\ell}_t(\cdot)$ in the FTRL algorithm (Line \ref{line:FTRL_loss} in Algorithm \ref{algorithm:FTRL}).

\begin{algorithm}[t]
   \caption{FTRL for Weighted Swap Regret (FTRL\_swap)}
   \label{algorithm:FTRL_swap}
   \begin{algorithmic}[1]
      \STATE {\bfseries Initialize:} for any $b \in \cB$, $\theta_1(b) \setto 1/B$.
      \FOR{episode $t=1,\dots,K$}    
      \STATE Take action $b_t \sim \theta_t(\cdot)$, and observe loss $\tilde{l}_t(b_t)$.
      \FOR{each action $b \in \cB$}
      \STATE $\hat{l}_t(\cdot|b) \leftarrow \theta_{t}(b) \tilde{l}_t( b_t)\mathbb{I}\{b_t=\cdot\}/(\theta_t (\cdot) +\gamma_t)$.
      \STATE $\tilde{\theta}_{t+1}(\cdot|b) \propto \exp[-(\eta_{t}/w_t) \cdot \sum_{i=1}^{t}w_i \hat{l}_i(\cdot|b)]$
      \ENDFOR  
      \STATE Set $\theta_{t+1}$ such that $ \theta_{t+1}(\cdot) =\sum_a{\theta_{t+1}(b)\tilde{\theta}_{t+1}(\cdot |b)}$. 
      \label{line:stationary_swap}
      \ENDFOR
   \end{algorithmic}
\end{algorithm}

In Corollary \ref{cor:single-step-swap-regret}, we prove that FTRL\_swap satisfies the Assumption \ref{ass:swap_regret} with good swap regret bounds. Recall that $B$ is the number of actions, and our normalization condition requires loss $\tilde{l}_t \in [0,1]^B$ for any $t$.

\begin{corollary}
   \label{cor:single-step-swap-regret}
By choosing hyperparameter $w_t=\alpha _t\left( \prod_{i=2}^t{\left( 1-\alpha _i \right)} \right) ^{-1}$ and $\eta _t=\gamma _t=\sqrt{\frac{H\log B}{t}}$, FTRL\_swap (Algorithm \ref{algorithm:FTRL_swap}) satisfies Assumption \ref{ass:swap_regret} with 
$$
   \AVGSwapReg(B, t, \log(1/\delta)) = 10B\sqrt{H \log(B^2/\delta) /t},\,\,\,\,   \CUMSwapReg(B, t, \log(1/\delta)) = 20B\sqrt{Ht\log(B^2/\delta) }
$$
\end{corollary}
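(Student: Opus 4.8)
The plan is to prove Corollary~\ref{cor:single-step-swap-regret} by mirroring the proof of Corollary~\ref{cor:single-step-regret}, but routed through the external-to-swap reduction of \cite{blum2007external} that Algorithm~\ref{algorithm:FTRL_swap} implements. That algorithm runs $B$ internal copies of weighted FTRL, one per action $b\in\cB$, where copy $b$ maintains the distribution $\tilde\theta_t(\cdot\mid b)$ and is fed the rescaled loss estimator $\hat\ell_t(\cdot\mid b)=\theta_t(b)\,\hat\ell_t(\cdot)$, and the master distribution $\theta_t$ is chosen in Line~\ref{line:stationary_swap} as the stationary distribution of the row-stochastic matrix with rows $\{\tilde\theta_t(\cdot\mid b)\}_b$, i.e.\ $\theta_t(b')=\sum_b\theta_t(b)\tilde\theta_t(b'\mid b)$ (which exists by Perron--Frobenius). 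First I would use this stationarity identity to decompose the weighted swap regret. Writing $\wAVGSwapRegret(t):=\max_{\psi\in\Psi}\sum_{i=1}^t\alpha_t^i[\la\theta_i,\ell_i\ra-\la\psi\diamond\theta_i,\ell_i\ra]$ and substituting $\la\theta_i,\ell_i\ra=\sum_b\theta_i(b)\la\tilde\theta_i(\cdot\mid b),\ell_i\ra$ together with $\la\psi\diamond\theta_i,\ell_i\ra=\sum_b\theta_i(b)\ell_i(\psi(b))$, the maximization over $\psi$ decouples across $b$, giving
\begin{equation*}
\wAVGSwapRegret(t)=\sum_{b\in\cB}\max_{b^\star}\sum_{i=1}^t\alpha_t^i\,\theta_i(b)\big[\la\tilde\theta_i(\cdot\mid b),\ell_i\ra-\ell_i(b^\star)\big],
\end{equation*}
so that $\wAVGSwapRegret(t)$ is exactly the sum over $b$ of the weighted \emph{external} regret of copy $b$ run on the effective loss sequence $\{\theta_i(b)\ell_i\}_i$.

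Next I would bound each copy's external regret with the FTRL machinery already developed (Theorem~\ref{thm:adv-bandit} and Lemmas~\ref{lem:Neu}--\ref{lem:bound-C}), carefully carrying the factor $\theta_i(b)$ sitting inside $\hat\ell_i(\cdot\mid b)$. Since $\theta_i(b)\le 1$ the rescaled loss stays in $[0,1]^B$, so the hypotheses of Lemma~\ref{lem:Neu} hold after rescaling, and a union bound over the $B$ copies replaces $\delta$ by $\delta/B$, producing $\iota=\log(B^2/\delta)$. The crucial step is the summation over $b$ of the per-copy bounds: the regularization term $w_t\log B/\eta_t$ and the concentration terms $\max_i w_i\iota$ each appear once per copy and hence scale with $B$, whereas the loss-magnitude-dependent terms—the FTRL second-order term from Lemma~\ref{lem:bound-A} and the exploration-bias term from Lemma~\ref{lem:bound-B}—carry explicit $\theta_i(b)$ factors, so that summing over $b$ and invoking the stationarity identity $\sum_b\theta_i(b)\tilde\theta_i(\cdot\mid b)=\theta_i(\cdot)$ (together with $\theta_i(b)^2\le\theta_i(b)$) collapses them to a single factor $B$ rather than $B^{3/2}$. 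The martingale term can even be aggregated across copies: by the same stationarity identity $\sum_b\la\tilde\theta_i(\cdot\mid b),\hat\ell_i(\cdot\mid b)\ra\le 1$, so Azuma--Hoeffding contributes $\sqrt{2\iota\sum_i w_i^2}$ with no extra factor $B$.

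This asymmetry—regularization scaling with $B$ while the variance and bias scale only through $\sum_b\theta_i(b)=1$—is exactly why the optimal step size differs from the external case. Balancing $B\cdot w_t\log B/\eta_t$ against $\tfrac{B}{2}\sum_i\eta_i w_i$ enlarges the step size by a factor $\sqrt{B}$, which is why the corollary sets $\eta_t=\gamma_t=\sqrt{H\log B/t}$ rather than $\sqrt{H\log B/(Bt)}$. Plugging $w_t=\alpha_t(\prod_{i=2}^t(1-\alpha_i))^{-1}$ and these step sizes into the summed bound, and evaluating the resulting weighted sums with Lemma~\ref{lem:step_size} (namely $\sum_i\alpha_t^i/\sqrt i\le 2/\sqrt t$ and $\sum_i(\alpha_t^i)^2\le 2H/t$) exactly as in the proof of Corollary~\ref{cor:single-step-regret}, each term becomes $\cO(B\sqrt{H\iota/t})$; a final case split on whether $H\iota/t\le 1$ absorbs the lower-order $H\iota/t$ contribution and yields $\wAVGSwapRegret(t)\le 10B\sqrt{H\iota/t}$. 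Summing this over $t'\le t$ via $\sum_{t'=1}^t 1/\sqrt{t'}\le 2\sqrt t$ gives $\CUMSwapReg(B,t,\log(1/\delta))=20B\sqrt{Ht\iota}$, and monotonicity in $B$ and concavity in $t$ are immediate from the closed forms.

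The main obstacle I anticipate is the bookkeeping in the second paragraph: rather than quoting Theorem~\ref{thm:adv-bandit} as a black box per copy and then summing (which would lose a spurious $\sqrt B$), I would re-run its three-term decomposition \eqref{eq:regret_decomposition} for the rescaled losses $\theta_i(b)\ell_i$ and perform the sum over $b$ \emph{before} bounding, so that stationarity can be applied inside the variance and bias estimates. Verifying that the FTRL second-order term inherits the right structure—so that $\sum_b\theta_i(b)^2\,\tilde\theta_i(b_i\mid b)/(\theta_i(b_i)+\gamma_i)^2$ reduces by stationarity to $\theta_i(b_i)/(\theta_i(b_i)+\gamma_i)^2\le 1/\gamma_i$, whose expectation under $b_i\sim\theta_i$ is at most $B$—is the delicate calculation. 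Once that collapse is justified, the remaining steps are identical in form to the external-regret proof, with $B$ replaced by $1$ in the loss-dependent terms and an overall prefactor $B$ inherited from the regularization.
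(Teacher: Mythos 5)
Your proposal is correct and follows essentially the same route as the paper: the stationarity identity $\theta_i(\cdot)=\sum_b\theta_i(b)\tilde\theta_i(\cdot\mid b)$ is used to decompose the weighted swap regret into $B$ per-action external-regret terms on the rescaled losses $\theta_i(b)\ell_i$ (the paper's decomposition \eqref{eq:swapregret_decompose}), each is bounded by re-running the three-term FTRL analysis with the sum over $b$ performed before bounding so that stationarity collapses the loss-dependent terms to a single factor of $B$ (the paper's Lemmas \ref{lem:bound-A-swap}--\ref{lem:bound-C-swap} and Theorem \ref{thm:adv-bandit-swap}), and the final step-size choice, application of Lemma \ref{lem:step_size}, and case split on $H\iota/t$ match the paper's conclusion of the corollary exactly.
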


Again, we prove Corollary \ref{cor:single-step-swap-regret} by showing a more general weighted swap regret guarantee which works for any set of weights $\{w_i\}_{i=1}^\infty$ in addition to $\{\alpha_{t}^i\}_{i=1}^t$. A general weighted swap regret is defined as 
\begin{equation} \label{eq:weighted_swap_regret}
\AVGSwapRegret(t):= \min_{\psi \in \Psi}\sum_{i=1}^t w_{i} [\la  \theta_i, l_i \ra- \la \psi\diamond \theta_i, l_i \ra].
\end{equation}

\begin{theorem}
    \label{thm:adv-bandit-swap}
    For any $t \le K$, following Algorithm~\ref{algorithm:FTRL_swap}, if $\eta _i \le 2 \gamma_i$ and $\eta _i$ is non-increasing for all $i \le t$, let $\iota = \log(B^2/\delta)$, then with probability $1-3\delta$, we have
    $$
    \AVGSwapRegret(t) \le  \frac{w_tB\log B}{\eta _t}+\frac{B}{2}\sum_{i=1}^t{\eta _iw_{i}}+\frac{1}{2}\max _{i\le t}w_i\iota + B\sum_{i=1}^t{\gamma _iw_i}+B\sqrt{2\iota \sum_{i=1}^t{w_{i}^{2}}} + B\max _{i\le t}w_i\iota /\gamma _t
    $$
\end{theorem}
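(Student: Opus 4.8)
The plan is to prove Theorem~\ref{thm:adv-bandit-swap} through the classical reduction from swap regret to external regret (in the spirit of \cite{blum2007external}), adapted to the weighted, high-probability, bandit-feedback setting, reusing the machinery already built for Theorem~\ref{thm:adv-bandit}. The central observation is that FTRL\_swap is really a bundle of $B$ internal FTRL learners, one per action $b \in \cB$: learner $b$ maintains the row $\tilde\theta_t(\cdot\,|\,b)$ and is fed the reweighted estimator $\hat l_i(\cdot\,|\,b) = \theta_i(b)\,\hat l_i(\cdot)$, where $\hat l_i$ is exactly the external estimator of Algorithm~\ref{algorithm:FTRL}. These learners are coupled to the master distribution by the stationarity identity on Line~\ref{line:stationary_swap}, namely $\theta_i(b') = \sum_b \theta_i(b)\,\tilde\theta_i(b'\,|\,b)$. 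First I would record the two consequences of this identity that drive everything: for any vector $v$ one has $\la \theta_i, v\ra = \sum_b \la \tilde\theta_i(\cdot\,|\,b),\, \theta_i(b)\,v\ra$, and for any swap map $\psi \in \Psi$ one has $\la \psi \diamond \theta_i, v\ra = \sum_b \la e_{\psi(b)},\, \theta_i(b)\,v\ra$, where $e_{\psi(b)}$ is the $\psi(b)$-th basis vector.

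Applying these with $v = l_i$, the weighted swap regret factorizes exactly across the internal learners:
\begin{equation*}
\sum_{i=1}^t w_i \big[\la \theta_i, l_i\ra - \la \psi\diamond\theta_i, l_i\ra\big]
= \sum_{b\in\cB}\sum_{i=1}^t w_i \la \tilde\theta_i(\cdot\,|\,b) - e_{\psi(b)},\; \theta_i(b)\, l_i \ra .
\end{equation*}
For each fixed $b$ this is precisely the weighted external regret of learner $b$ against the fixed comparator $e_{\psi(b)}$, and since a swap map assigns $\psi(b)$ to each $b$ independently, maximizing over $\psi\in\Psi$ decouples into maximizing each $e_{\psi(b)}$ separately; it therefore suffices to bound each group of terms uniformly over a single comparator $e_{\psi(b)}$. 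I would then insert the estimator $\hat l_i(\cdot\,|\,b)$ and split each internal regret into the three pieces of the decomposition~\eqref{eq:regret_decomposition}: the FTRL term $(A_b)$ on $\hat l_i(\cdot\,|\,b)$, the play-bias $(B_b)$, and the comparator-bias $(C_b)$.

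The three groups are bounded by the analogues of Lemmas~\ref{lem:bound-A}--\ref{lem:bound-C}. Summing the FTRL estimate (Lemma~\ref{lem:bound-A}) over the $B$ learners yields the leading $w_t B \log B/\eta_t$ term; crucially, the second-order term stays at $\tfrac{B}{2}\sum_i \eta_i w_i$ rather than acquiring a second factor of $B$, because stationarity gives $\sum_b \theta_i(b)^2 \la\tilde\theta_i(\cdot\,|\,b),\hat l_i^2\ra \le \sum_b \theta_i(b)\la\tilde\theta_i(\cdot\,|\,b),\hat l_i^2\ra = \la\theta_i,\hat l_i^2\ra$, collapsing the aggregate into the single master quantity treated in Lemma~\ref{lem:bound-A}, and likewise collapsing the Neu term (Lemma~\ref{lem:Neu}) into one error of size $\tfrac12\max_i w_i\iota$. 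For the play-bias I again use stationarity to collapse $\sum_b (B_b)$ back to $\sum_i w_i\la\theta_i, l_i-\hat l_i\ra$, whose expectation part gives $B\sum_i\gamma_i w_i$; its martingale part, bounded per learner (each difference is $\le w_i$ after a stationarity estimate on $\theta_i(b)\la\tilde\theta_i(\cdot\,|\,b),\hat l_i\ra$) and summed over $b$, gives $B\sqrt{2\iota\sum_i w_i^2}$ by Azuma--Hoeffding. For the comparator-bias, applying Lemma~\ref{lem:Neu} with $c_i = \gamma_t\,\theta_i(b)\,e_{\psi(b)}$ (which satisfies $c_i \le \gamma_t \le \gamma_i$) yields $\max_i w_i\iota/\gamma_t$ per learner, hence $B\max_i w_i\iota/\gamma_t$ after summing. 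Because each concentration step must hold uniformly over the $B$ possible values of $\psi(b)$ for each of the $B$ learners, the union bound is over $B^2$ events, which is exactly why the swap version carries $\iota = \log(B^2/\delta)$.

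Collecting the six terms then gives the displayed bound of Theorem~\ref{thm:adv-bandit-swap}, and Corollary~\ref{cor:single-step-swap-regret} follows by plugging in $w_t = \alpha_t(\prod_{i=2}^t(1-\alpha_i))^{-1}$ and $\eta_t = \gamma_t = \sqrt{H\log B/t}$ and invoking the $\{\alpha_t^i\}$ estimates of Lemma~\ref{lem:step_size}, precisely mirroring the proof of Corollary~\ref{cor:single-step-regret}. I expect the main obstacle to be bookkeeping the stationarity identity so that the $B$ factors land in the right places---verifying that the second-order and play-bias expectation terms genuinely collapse (no extra $B$) while the comparator-bias truly costs a factor $B$---rather than any single hard estimate, since every individual concentration inequality is already provided by Lemmas~\ref{lem:Neu}--\ref{lem:bound-C}.
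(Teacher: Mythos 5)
Your proposal is correct and follows essentially the same route as the paper: the Blum--Mansour reduction to $B$ internal weighted-FTRL learners, the identical three-term decomposition of each learner's regret (the analogue of \eqref{eq:regret_decomposition}), the stationarity collapse $\sum_b \theta_i(b)\tilde\theta_i(b'|b)=\theta_i(b')$ to keep the second-order and play-bias expectation terms at a single factor of $B$, and the $B^2$-fold union bound explaining $\iota=\log(B^2/\delta)$. The only cosmetic difference is that you fix the per-learner comparators to basis vectors $e_{\psi(b)}$ while the paper optimizes over general distributions $\theta^\star(\cdot|b)$, which is equivalent since the minimum of a linear functional over the simplex is attained at a vertex.
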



We postpone the proof of Theorem \ref{thm:adv-bandit-swap} to the end of this section. We show first how Theorem \ref{thm:adv-bandit-swap} directly implies Corollary \ref{cor:single-step-swap-regret}.
{}

\begin{proof}[Proof of Corollary \ref{cor:single-step-swap-regret}]
As shown in the proof of Corollary~\ref{cor:single-step-regret}, the weights $\{w_t\}_{t=1}^K$ we choose satisfies a nice property: for any $t$ we have
$$
\frac{w_i}{w_j}=\frac{\alpha _{t}^{i}}{\alpha _{t}^{j}}.
$$
Define $\wAVGSwapRegret(t):=\max_{\psi \in \Psi }\sum_{i=1}^t \alpha_t^{i} [\la \theta_i, l_i \ra-\la \psi \diamond\theta_i, l_i\ra ]$. Plugging our choice of $w_i=\alpha _t\left( \prod_{i=2}^t{\left( 1-\alpha _i \right)} \right) ^{-1}$ into Theorem~\ref{thm:adv-bandit-swap}, we have
   \begin{align*}
      \wAVGSwapRegret(t) \le&  \frac{\alpha _tB\log B}{\eta _t}+\frac{B}{2}\sum_{i=1}^t{\eta _i\alpha _{t}^{i}}+\frac{1}{2}\alpha _t\iota \\
      &+B\sum_{i=1}^t{\gamma _i\alpha _{t}^{i}}+B\sqrt{2\iota \sum_{i=1}^t{\left( \alpha _{t}^{i} \right) ^2}}+B\alpha _t\iota /\gamma _t.
   \end{align*}

By choosing $\eta _t=\gamma _t=\sqrt{\frac{H\log B}{t}}$ and using Lemma \ref{lem:step_size},  we can further upper bound the swap regret by 
   \begin{align*}
      \wAVGSwapRegret(t) \le &\frac{\left( H+1 \right) B\log B}{H+t}\sqrt{\frac{t}{H\log B}}+\frac{3B}{2}\sqrt{H\log B}\sum_{i=1}^t{\frac{\alpha _{t}^{i}}{\sqrt{t}}}\\
      +&\frac{\left( H+1 \right) \iota}{2\left( H+t \right)}+B\sqrt{2\iota \sum_{i=1}^t{\left( \alpha _{t}^{i} \right) ^2}}+B\frac{\left( H+1 \right) \iota}{\left( H+t \right)}\sqrt{\frac{t}{H\log B}}\\
\le& 2B\sqrt{H\frac{\log B}{t}}+3B\sqrt{\frac{H\log B}{t}}+\frac{H\iota}{t}+2B\sqrt{\frac{H\iota}{t}}+2B\sqrt{\frac{H\log B}{t}}\\
\le& 9B\sqrt{H\iota /t}+H\iota /t.
   \end{align*}

   To further simplify the above upper bound, consider two cases:
\begin{itemize}
   \item If $H\iota /t \le 1$, $\sqrt{H\iota /t} \ge H\iota /t$ and thus $\wAVGSwapRegret(t) \le 10B\sqrt{H\iota /t}$.
   \item If $H\iota /t \ge 1$, $B\sqrt{H\iota /t} \ge 1 \ge \wAVGSwapRegret(t)$ where the last step is by the definition of $\wAVGSwapRegret(t)$. Therefore we have $\wAVGSwapRegret(t) \le B\sqrt{H\iota /t}$.
\end{itemize}
Combine the above two cases, $\wAVGSwapRegret(t) \le 10B\sqrt{H\iota /t}$.

Finally, we pick $\AVGSwapReg(B, t, \log(1/\delta)) := 10B\sqrt{H\iota /t}$, which is non-decreasing in $B$. On the other hand, since $\sum_{t'=1}^t\AVGSwapReg(B, t, \log(1/\delta)) \le 20B\sqrt{Ht\iota }$, we choose $\CUMSwapReg(B, t, \log(1/\delta)) = 20B\sqrt{Ht\iota }$, which is concave in $t$.
\end{proof}



To prove Theorem \ref{thm:adv-bandit-swap}, we again first decompose the swap regret. We first note that by Line \ref{line:stationary_swap} of Algorithm \ref{algorithm:FTRL_swap}, we have:
\begin{equation*}
w_i  \la \theta_i, l_i\ra =\sum_{b \in \cB}w_i  \la \tilde{\theta}_i(\cdot|b), \theta_i(b)l_i(\cdot)\ra.
\end{equation*}
On the other hand, by the definition of strategy modification $\Psi$, we have
\begin{equation*}
   \min_{\psi \in \Psi}\sum_{i=1}^t w_{i} \la \psi\diamond \theta_i, l_i \ra  = \sum_{b \in \cB} \min_{\theta^{\star}(\cdot|b)} \sum_{i=1}^t w_i \theta_i(b) \cdot \la \theta^{\star}(\cdot|b),l_i(\cdot) \ra.
\end{equation*}
Therefore, we have the following decomposition of the swap regret
\begin{align}
\AVGSwapRegret(t):=& \min_{\psi \in \Psi}\sum_{i=1}^t w_{i} [\la  \theta_i, l_i \ra- \la \psi\diamond \theta_i, l_i \ra]
=\sum_{b \in \cB} \sum_{i=1}^t{w_i [ \la \tilde{\theta}_i(\cdot|b)-\theta^{\star}(\cdot|b),    \theta_i(b)  l_i \ra]} \nonumber \\
=&\underset{\left( A \right)}{\underbrace{\sum_{b \in \cB}\sum_{i=1}^t{w_i\left< \tilde{\theta}_i(\cdot|b)-\theta^{\star}(\cdot|b),\hat{l}_i(\cdot|b) \right>}}}+\underset{\left( B \right)}{\underbrace{\sum_{b \in \cB}\sum_{i=1}^t{w_i\left< \tilde{\theta}_i(\cdot|b),\theta_i(b)l_i(\cdot)-\hat{l}_i(\cdot|b) \right>}}} \nonumber \\
+&\underset{\left( C \right)}{\underbrace{\sum_{b \in \cB}\sum_{i=1}^t{w_i\left< \theta^{\star}(\cdot|b),\hat{l}_i(\cdot|b)-\theta_i(b)l_i(\cdot) \right>}}} \label{eq:swapregret_decompose}
\end{align}
For the remaining proof, we bound term $(A), (B), (C)$ separately in
Lemma~\ref{lem:bound-A-swap}, Lemma~\ref{lem:bound-B-swap}, Lemma~\ref{lem:bound-C-swap}.

\begin{lemma}
   \label{lem:bound-A-swap}
   For any $t \in [K]$, suppose $\eta _i \le 2 \gamma_i$  for all $i \le t$. The with probability $1-\delta$,  for any $\theta^{\star}$, 
   $$
   \sum_{b \in \cB}\sum_{i=1}^t{w_i\left< \tilde{\theta}_i(\cdot|b)-\theta^{\star}(\cdot|b),\hat{l}_i(\cdot|b) \right>}\le \frac{w_tB\log B}{\eta _t}+\frac{B}{2}\sum_{i=1}^t{\eta _iw_{i}}+\frac{1}{2}\max _{i\le t}w_i\iota.
   $$
\end{lemma}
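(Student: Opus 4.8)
The plan is to run the standard changing-step-size FTRL regret bound \emph{separately} for each of the $B$ internal experts indexed by $b\in\cB$, sum the resulting bounds over $b$, and then collapse the aggregate second-order term using the stationarity identity from Line~\ref{line:stationary_swap}. This keeps the $B$-dependence of the second-order term linear rather than quadratic, which is exactly what the claim requires.

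First I would fix $b\in\cB$ and observe that the row update $\tilde\theta_{t+1}(\cdot|b)\propto\exp[-(\eta_t/w_t)\sum_{i=1}^t w_i\hat{l}_i(\cdot|b)]$ is precisely a weighted FTRL update on the simplex $\Delta^B$ with negative-entropy regularizer and essential step size $\eta_t/w_t$, driven by the loss sequence $\{\hat{l}_i(\cdot|b)\}_i$. Hence the same deterministic computation used for Lemma~\ref{lem:bound-A} (Exercise 28.13 in \cite{lattimore2018bandit}) gives
$$
\sum_{i=1}^t w_i\la \tilde\theta_i(\cdot|b)-\theta^\star(\cdot|b),\hat{l}_i(\cdot|b)\ra \le \frac{w_t\log B}{\eta_t}+\frac12\sum_{i=1}^t \eta_i w_i \la \tilde\theta_i(\cdot|b),\hat{l}_i^2(\cdot|b)\ra.
$$
Summing over the $B$ choices of $b$ turns the leading term into $w_tB\log B/\eta_t$, matching the first term of the claim.

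The crux is the summed second-order term. Here I would use the factorization $\hat{l}_i(b'|b)=\theta_i(b)\,g_i(b')$, where $g_i(b'):=\tilde{l}_i(b_i)\mathbb{I}\{b_i=b'\}/(\theta_i(b')+\gamma_i)$ is exactly the external estimator of Algorithm~\ref{algorithm:FTRL}. Then
$$
\sum_{b}\la\tilde\theta_i(\cdot|b),\hat{l}_i^2(\cdot|b)\ra=\sum_{b'}g_i(b')^2\sum_{b}\theta_i(b)^2\tilde\theta_i(b'|b)\le \sum_{b'}g_i(b')^2\sum_{b}\theta_i(b)\tilde\theta_i(b'|b)=\sum_{b'}\theta_i(b')g_i(b')^2,
$$
where the inequality uses $\theta_i(b)\le 1$ and the final equality is the stationarity identity $\theta_i(b')=\sum_b\theta_i(b)\tilde\theta_i(b'|b)$ from Line~\ref{line:stationary_swap}. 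Since $\theta_i(b')g_i(b')\le 1$, this is at most $\sum_{b'}g_i(b')$, so the total second-order contribution is bounded by $\tfrac12\sum_i\eta_iw_i\sum_{b'}g_i(b')$.

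Finally I would invoke Lemma~\ref{lem:Neu} with the $\cF_i$-measurable coefficients $c_i(b')\equiv\eta_i$ (admissible since $\eta_i\le2\gamma_i$) to replace the estimator $g_i$ by the true loss $l_i$ at the additive cost $\tfrac12\max_{i\le t}w_i\iota$, and then use $l_i(b')\le1$ and $|\cB|=B$ to bound $\sum_{b'}l_i(b')\le B$, producing $\tfrac{B}{2}\sum_i\eta_iw_i$. Combining the three pieces yields the stated inequality, with the single invocation of Lemma~\ref{lem:Neu} accounting for the failure probability $\delta$. I expect the main obstacle to be exactly the aggregate second-order step: the temptation is to bound each $b$-term individually by a factor $1/\gamma_i$, which would cost an extra $B$; the correct route is the double-counting collapse across $b$ driven by the stationarity of $\theta_i$ with respect to the matrix $\tilde\theta_i$, together with the verification that $\theta_i(b)^2\le\theta_i(b)$ and $\theta_i(b')g_i(b')\le1$.
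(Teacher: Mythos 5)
Your proposal is correct and follows essentially the same route as the paper's proof: a per-$b$ weighted FTRL bound, collapse of the aggregate second-order term via the stationarity identity $\sum_b\theta_i(b)\tilde\theta_i(b'|b)=\theta_i(b')$ together with $\theta_i(b')g_i(b')\le 1$, and a single application of Lemma~\ref{lem:Neu} with $c_i(b')\equiv\eta_i$ to convert the estimator back to the true loss. The only differences are cosmetic (the order in which the elementary bounds $\theta_i(b)\le 1$ and $\theta_i(b')/(\theta_i(b')+\gamma_i)\le 1$ are applied).
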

\begin{proof}
   Similar to Lemma~\ref{lem:bound-A}, we have,
   \begin{align*}
      \sum_{i=1}^t{w_i\left< \tilde{\theta}_i(\cdot|b)-\theta^{\star}(\cdot|b),\hat{l}_i(\cdot|b) \right>}&\le \frac{w_t\log B}{\eta _t}+\frac{1}{2}\sum_{i=1}^t{\eta _i w_i}\left< \tilde{\theta}_i(\cdot|b),\hat{l}_{i}^{2}(\cdot|b) \right> 
\\
&= \frac{w_t\log B}{\eta _t}+\frac{1}{2}\sum_{i=1}^t\sum_{b' \in \cB}{\eta _i}w_i\tilde\theta_i(b'|b)\frac{\theta_{i}^2(b) \tilde{l}^2_i( b_i)\mathbb{I}\{b_i=b'\}}{(\theta_i (b') +\gamma_i)^2}
\\
& \le \frac{w_t\log B}{\eta _t}+\frac{1}{2}\sum_{i=1}^t\sum_{b' \in \cB}{\eta _i}w_i\frac{\tilde\theta_i(b'|b)\theta_{i}(b) }{\theta_i (b') }\frac{\hat{l}_{i}(b'|b)}{\theta_i(b)}
\end{align*}

Summing over $b$ and using the fact that $\sum_{b\in \cB} \tilde\theta_i(b'|b)\theta_i(b)=\theta_i(b')$,

\begin{align*}
   \sum_{b \in \cB}\sum_{i=1}^t{w_i\left< \tilde{\theta}_i(\cdot|b)-\theta^{\star}(\cdot|b),\hat{l}_i(\cdot|b) \right>} &\le \frac{w_tB\log B}{\eta _t}+\frac{1}{2}\sum_{i=1}^t{\sum_{b' \in \cB}{\eta _i}w_i\frac{\hat{l}_{i}(b'|b)}{\theta_i(b)}}
\\
&\overset{\left( i \right)}{\le}\frac{w_tB\log B}{\eta _t}+\frac{1}{2}\sum_{i=1}^t{\sum_{b' \in \cB}{\eta _i}w_il_i\left( b' \right)}+\frac{1}{2}\max _{i\le t}w_i\iota  
\\
&\le \frac{w_tB\log B}{\eta _t}+\frac{B}{2}\sum_{i=1}^t{\eta _i w_{i}}+\frac{1}{2}\max _{i\le t}w_i\iota 
   \end{align*}
where $(i)$ is by using Lemma~\ref{lem:Neu} with $c_i(b)=\eta _i$. Notice the quantity 
$\frac{\hat{l}_{i}(b'|b)}{\theta_i(b)}$ actually doesn't depend on $b$, so it is well-defined even after we take the summation with respect to $b$. The any-time guarantee is justified by taking union bound.
\end{proof}

\begin{lemma}
   \label{lem:bound-B-swap}
   For any $t \in [K]$, with probability $1-\delta$ , 
$$
\sum_{b \in \cB}\sum_{i=1}^t{w_i\left< \tilde{\theta}_i(\cdot|b),\theta_i(b)l_i(\cdot)-\hat{l}_i(\cdot|b) \right>} \le B\sum_{i=1}^t{\gamma _iw_i}+B\sqrt{2\iota \sum_{i=1}^t{w_{i}^{2}}}.
$$
\end{lemma}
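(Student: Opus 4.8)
The plan is to follow the template of Lemma~\ref{lem:bound-B}, splitting term $(B)$ into a \emph{bias} part and a \emph{martingale} part by inserting the conditional expectation $\E_i\hat{l}_i(\cdot|b)$ of the importance-weighted estimator. The first computation I would carry out is $\E_i\hat{l}_i(b'|b)=\theta_i(b)\cdot\theta_i(b')l_i(b')/(\theta_i(b')+\gamma_i)$, which follows from $\E_i[\tilde{l}_i(b_i)\mathbb{I}\{b_i=b'\}]=\theta_i(b')l_i(b')$ together with the definition of $\hat{l}_i(\cdot|b)$ in Algorithm~\ref{algorithm:FTRL_swap}. Writing each summand as $\langle\tilde{\theta}_i(\cdot|b),\theta_i(b)l_i-\hat{l}_i(\cdot|b)\rangle=\langle\tilde{\theta}_i(\cdot|b),\theta_i(b)l_i-\E_i\hat{l}_i(\cdot|b)\rangle+\langle\tilde{\theta}_i(\cdot|b),\E_i\hat{l}_i(\cdot|b)-\hat{l}_i(\cdot|b)\rangle$ then yields the desired decomposition.

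For the bias part I would use $\theta_i(b)l_i(b')-\E_i\hat{l}_i(b'|b)=\theta_i(b)l_i(b')\gamma_i/(\theta_i(b')+\gamma_i)$, sum it against $\tilde{\theta}_i(b'|b)$ over $b'$, and then sum over $b$. The crucial ingredient is the stationarity identity $\sum_{b}\theta_i(b)\tilde{\theta}_i(b'|b)=\theta_i(b')$ enforced by Line~\ref{line:stationary_swap} of Algorithm~\ref{algorithm:FTRL_swap}, which collapses the double sum to $\sum_i w_i\sum_{b'}l_i(b')\gamma_i\theta_i(b')/(\theta_i(b')+\gamma_i)$. Bounding $\theta_i(b')/(\theta_i(b')+\gamma_i)\le 1$ and $l_i(b')\le 1$ then produces the first term $B\sum_i\gamma_i w_i$, exactly as in Lemma~\ref{lem:bound-B}.

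The martingale part is where the stationarity identity pays off a second time, and this is the step I expect to be the main (albeit modest) obstacle. The key bound I would establish is $\sum_b\langle\tilde{\theta}_i(\cdot|b),\hat{l}_i(\cdot|b)\rangle\le 1$: the indicator $\mathbb{I}\{b_i=b'\}$ leaves only the term $b'=b_i$, and applying $\sum_b\theta_i(b)\tilde{\theta}_i(b_i|b)=\theta_i(b_i)$ reduces the sum to $\theta_i(b_i)\tilde{l}_i(b_i)/(\theta_i(b_i)+\gamma_i)\le 1$. Note that the individual per-$b$ terms are \emph{not} bounded by $1$, so it is essential to sum over $b$ before invoking concentration. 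Consequently $\{w_i(\sum_b\langle\tilde{\theta}_i(\cdot|b),\E_i\hat{l}_i(\cdot|b)-\hat{l}_i(\cdot|b)\rangle)\}_i$ is a martingale difference sequence with increments bounded by $w_i$ in magnitude, so Azuma--Hoeffding controls it by $\sqrt{2\iota\sum_i w_i^2}$, which is at most $B\sqrt{2\iota\sum_i w_i^2}$ since $B\ge 1$. Adding the two bounds gives the claim.

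Beyond the ordering of the stationarity identity, the only care required is computing $\E_i\hat{l}_i(\cdot|b)$ correctly, remembering that the swap-regret estimator carries the extra factor $\theta_i(b)$ relative to the external-regret estimator of Algorithm~\ref{algorithm:FTRL}; once this is accounted for, the argument is a direct transcription of the proof of Lemma~\ref{lem:bound-B}.
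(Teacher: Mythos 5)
Your proposal is correct and follows the paper's decomposition exactly for the bias term; the only divergence is in how the martingale term is handled, and there you also make one inaccurate side claim. The paper bounds each per-$b$ quantity $\langle\tilde{\theta}_i(\cdot|b),\hat{l}_i(\cdot|b)\rangle$ by $1$ individually, using $\tilde{\theta}_i(b'|b)\theta_i(b)\le\sum_{b''}\tilde{\theta}_i(b'|b'')\theta_i(b'')=\theta_i(b')$, applies Azuma--Hoeffding separately for each $b$, and then sums and takes a union bound over $b\in\cB$ to obtain $B\sqrt{2\iota\sum_i w_i^2}$. Your assertion that ``the individual per-$b$ terms are \emph{not} bounded by $1$'' is therefore false --- the same stationarity identity that you invoke after summing over $b$ already yields the pointwise bound $\tilde{\theta}_i(b_i|b)\theta_i(b)\tilde{l}_i(b_i)/(\theta_i(b_i)+\gamma_i)\le\theta_i(b_i)/(\theta_i(b_i)+\gamma_i)\le 1$ for each fixed $b$. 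That said, your route of first summing over $b$ and then applying a single Azuma--Hoeffding to the aggregated increments, which you correctly show lie in $[0,1]$ via $\sum_b\theta_i(b)\tilde{\theta}_i(b_i|b)=\theta_i(b_i)$, is perfectly valid and in fact slightly sharper: it gives $\sqrt{2\iota\sum_i w_i^2}$ with no union bound over $\cB$, which you then deliberately relax by the factor $B\ge 1$ to match the stated bound. So the proof goes through; just delete or correct the parenthetical claim that summing over $b$ before concentrating is ``essential'' --- it is a legitimate alternative, not a necessity.
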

\begin{proof}
 We further decompose it into 
 $$
 \sum_{i=1}^t{w_i\left< \tilde{\theta}_i(\cdot|b),\theta_i(b)l_i(\cdot)-\hat{l}_i(\cdot|b) \right>}=\sum_{i=1}^t{w_i\left< \tilde{\theta}_i(\cdot|b),\theta_i(b)l_i(\cdot)-\E_i\hat{l}_i(\cdot|b) \right>}+\sum_{i=1}^t{w_i\left< \tilde{\theta}_i(\cdot|b),\E_i\hat{l}_i(\cdot|b)-\hat{l}_i(\cdot|b) \right>}.
$$

The first term is bounded by
\begin{align*}
   \sum_{i=1}^t{w_i\left< \tilde{\theta}_i(\cdot|b),\theta_i(b)l_i(\cdot)-\E_i\hat{l}_i(\cdot|b) \right>}=&\sum_{i=1}^t{w_i\theta_i(b)\left< \tilde{\theta}_i(\cdot|b),(1-\frac{\theta_i(\cdot)}{\theta_i(\cdot)+\gamma _i} )l_i(\cdot)\right>}
\\
=&\sum_{i=1}^t{w_i\theta_i(b)\left< \tilde{\theta}_i(\cdot|b),\frac{\gamma _i}{\theta_i(\cdot)+\gamma _i}l_i \right>}.
\end{align*}

So by taking the sum with respect to $b$, we have 
\begin{align*}
   \sum_{b \in \cB}\sum_{i=1}^t{w_i\left< \tilde{\theta}_i(\cdot|b),\theta_i(b)l_i(\cdot)-\E_i\hat{l}_i(\cdot|b) \right>} \le &\sum_{b \in \cB}\sum_{i=1}^t{w_i\theta_i(b)\left< \tilde{\theta}_i(\cdot|b),\frac{\gamma _i}{\theta_i(\cdot)+\gamma _i}l_i \right>}\\
\le &\sum_{b' \in \cB}\sum_{i=1}^t{w_i\gamma _il_i(b')} \\
\le &B\sum_{i=1}^t{\gamma _iw_i}. 
\end{align*}

To bound the second term, notice $\tilde{\theta}_i(b'|b)\theta_i\left( b \right) \le \theta_i(b')$ for any $b,b' \in \cB$,
$$
\left< \tilde{\theta}_i(\cdot|b),\hat{l}_i(\cdot|b) \right> \le \sum_{b' \in \cB}{\tilde{\theta}_i(b'|b)\theta_i\left( b \right)}\frac{\mathbb{I}\left\{ b_t=b' \right\}}{\theta_i(b')+\gamma _i}\le \sum_{b' \in \cB}{\mathbb{I}\left\{ b_i=b' \right\}}=1,
$$
thus $\{w_i\left< \tilde{\theta}_i(\cdot|b),\E_i\hat{l}_i(\cdot|b)-\hat{l}_i(\cdot|b) \right>\}_{i=1}^t$ is a bounded martingale difference sequence w.r.t. the filtration $\{\cF_i\}_{i=1}^t$. By Azuma-Hoeffding,
$$
\sum_{i=1}^t{w_i\left< \tilde{\theta}_i(\cdot|b),\E_i\hat{l}_i(\cdot|b)-\hat{l}_i(\cdot|b) \right>}\le \sqrt{2\iota \sum_{i=1}^t{w_{i}^{2}}}.
$$

The proof is completed by taking the summation with respect to $b$ and a union bound.
\end{proof}

\begin{lemma}
   \label{lem:bound-C-swap}
   For any $t \in [K]$, suppose $\gamma_i$ is non-increasing in $i$, then
   with probability $1-\delta$,  and any $\theta^{\star} $,
   $$
   \sum_{b \in \cB}\sum_{i=1}^t{w_i\left< \theta^{\star}(\cdot|b),\hat{l}_i(\cdot|b)-\theta_i(b)l_i(\cdot) \right>}\le B\max _{i\le t}w_i\iota /\gamma _t.
   $$  
\end{lemma}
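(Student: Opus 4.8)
The plan is to follow the proof of Lemma~\ref{lem:bound-C} closely, adapting it for the two features that distinguish the swap-regret setting: the extra factor $\theta_i(b)$ carried by the FTRL\_swap loss estimator, and the additional summation over the conditioning action $b\in\cB$. The key preliminary observation is that the estimator of FTRL\_swap factors through the external-regret estimator of Algorithm~\ref{algorithm:FTRL}. Writing $\hat{l}_i^{\mathrm{ext}}(\cdot)\defeq\tilde{l}_i(b_i)\mathbb{I}\{b_i=\cdot\}/(\theta_i(\cdot)+\gamma_i)$ for the estimator in Line~\ref{line:FTRL_loss} of Algorithm~\ref{algorithm:FTRL}, the update rule of Algorithm~\ref{algorithm:FTRL_swap} gives $\hat{l}_i(\cdot|b)=\theta_i(b)\,\hat{l}_i^{\mathrm{ext}}(\cdot)$, and hence $\hat{l}_i(\cdot|b)-\theta_i(b)l_i(\cdot)=\theta_i(b)\,[\hat{l}_i^{\mathrm{ext}}(\cdot)-l_i(\cdot)]$. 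Since the action $b_i$ is sampled from the top-level distribution $\theta_i$, Lemma~\ref{lem:Neu} applies directly to $\hat{l}_i^{\mathrm{ext}}$ with this sampling distribution.

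First I would fix a pair $(j,b)\in[B]\times\cB$, take $\{e_j\}_{j=1}^B$ to be the standard basis of $\R^B$ as in Lemma~\ref{lem:bound-C}, and apply Lemma~\ref{lem:Neu} with the coefficient sequence $c_i=\gamma_t\,\theta_i(b)\,e_j$. This sequence is $\cF_i$-measurable, because $\theta_i$ is determined before round $i$, and it satisfies $c_i(b')\le\gamma_t\le\gamma_i\le2\gamma_i$ for every $b'$, using that $\gamma_i$ is non-increasing and $\theta_i(b)\le1$; thus the hypotheses of Lemma~\ref{lem:Neu} hold. Dividing the resulting bound by $\gamma_t$ gives
\begin{equation*}
\sum_{i=1}^t w_i\,\theta_i(b)\,\langle e_j,\hat{l}_i^{\mathrm{ext}}-l_i\rangle\le\max_{i\le t}w_i\,\iota/\gamma_t .
\end{equation*}

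Next I would pass from basis vectors to an arbitrary $\theta^\star(\cdot|b)\in\Delta^B$ by convexity: writing $\theta^\star(\cdot|b)=\sum_j\theta^\star(j|b)e_j$ and taking a union bound over all $B^2$ pairs $(j,b)$, for every fixed $b$ we obtain $\sum_i w_i\theta_i(b)\langle\theta^\star(\cdot|b),\hat{l}_i^{\mathrm{ext}}-l_i\rangle\le\max_{i\le t}w_i\iota/\gamma_t$. This union bound over $B^2$ events is exactly why the swap-regret section sets $\iota=\log(B^2/\delta)$, in contrast to $\log(B/\delta)$ for the external case. Finally, using the factorization and summing over the $B$ conditioning actions $b\in\cB$ yields
\begin{equation*}
\sum_{b\in\cB}\sum_{i=1}^t w_i\langle\theta^\star(\cdot|b),\hat{l}_i(\cdot|b)-\theta_i(b)l_i(\cdot)\rangle=\sum_{b\in\cB}\sum_{i=1}^t w_i\theta_i(b)\langle\theta^\star(\cdot|b),\hat{l}_i^{\mathrm{ext}}-l_i\rangle\le B\max_{i\le t}w_i\iota/\gamma_t ,
\end{equation*}
which is the claimed bound. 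I do not anticipate a serious obstacle; the only points requiring care are verifying that the $\theta_i(b)$ factor keeps the coefficient sequence inside the admissible box $[0,2\gamma_i]^B$ demanded by Lemma~\ref{lem:Neu} (handled by $\theta_i(b)\le1$ together with $\gamma_t\le\gamma_i$), and the union-bound bookkeeping ensuring that the $B^2$ applications are consistent with the choice $\iota=\log(B^2/\delta)$.
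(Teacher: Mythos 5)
Your proof is correct and follows essentially the same route as the paper, which disposes of this lemma in one line by deferring to Lemma~\ref{lem:bound-C} and summing over $b$. Your version is in fact more careful: you correctly observe that the extra, $i$-dependent factor $\theta_i(b)$ cannot simply be pulled out of the Lemma~\ref{lem:bound-C} bound, and instead absorb it into the coefficient sequence $c_i=\gamma_t\,\theta_i(b)\,e_j$ of Lemma~\ref{lem:Neu} (checking $\cF_i$-measurability and $c_i(b')\le\gamma_t\le\gamma_i$), with the union bound over the $B^2$ pairs $(j,b)$ matching the choice $\iota=\log(B^2/\delta)$ --- exactly the detail the paper's terse proof glosses over.
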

\begin{proof}
   The proof follows from  Lemma~\ref{lem:bound-C} and taking the summation with respect to $b$.
\end{proof}

Finally, we are ready to prove Theorem \ref{thm:adv-bandit-swap}.

\begin{proof}[Proof of Theorem \ref{thm:adv-bandit-swap}]
 

Recall the decomposition of swap regret \eqref{eq:swapregret_decompose}. We bound $(A)$ in Lemma~\ref{lem:bound-A-swap}, $(B)$ in Lemma~\ref{lem:bound-B-swap} and $(C)$ in Lemma~\ref{lem:bound-C-swap}. Putting everything together, we have
\begin{equation*}
   \AVGSwapRegret(t) \le  \frac{w_tB\log B}{\eta _t}+\frac{B}{2}\sum_{i=1}^t{\eta _iw_{i}}+\frac{1}{2}\max _{i\le t}w_i\iota + B\sum_{i=1}^t{\gamma _iw_i}+B\sqrt{2\iota \sum_{i=1}^t{w_{i}^{2}}} + B\max _{i\le t}w_i\iota /\gamma _t.
\end{equation*}
\end{proof}


\end{document}